\newcommand{\multiline}[1]{\parbox[t]{\dimexpr\linewidth-\algorithmicindent}{#1}}
\newcommand{\nc}{\newcommand}
\Crefname{assumption}{Assumption}{Assumptions}
   \Crefname{question}{Question}{Questions}
   \Crefname{claim}{Claim}{Claims}
   \Crefname{problem}{Problem}{Problems}
  \Crefname{idea}{Main Idea}{Main Ideas}
\Crefname{subsubsection}{Section}{Sections}
\nc{\sups}[1]{^{\scriptscriptstyle{#1}}}
\nc{\subs}[1]{_{\scriptscriptstyle{#1}}}
\newcommand{\wb}{\widebar}
\nc{\Critic}{\texttt{Critic}\xspace}
\nc{\PSDPUCB}{\texttt{PSDP-UCB}\xspace}
\nc{\LSVIUCB}{\texttt{LSVI-UCB}\xspace}
\nc{\Actor}{\texttt{Actor}\xspace}
\nc{\EstFeature}{\texttt{EstFeature}\xspace}
\nc{\ExpFTPL}{\texttt{ExpFTPL}\xspace}
\nc{\dist}{\mathrm{dist}}
\nc{\Bquad}{B^{\mathsf{quad}}}
\newcommand{\R}{\mathbb{R}}
\newcommand{\calF}{\mathcal{F}}
\newcommand{\calH}{\mathcal{H}}
\newtheorem*{rep@theorem}{\rep@title}
\newcommand{\newreptheorem}[2]{%
\newenvironment{rep#1}[1]{%
 \def\rep@title{#2 \ref{##1}}%
 \begin{rep@theorem}}%
 {\end{rep@theorem}}}
\newcommand\xlabel[2][]{\phantomsection\def\@currentlabelname{#1}\label{#2}}
\theoremstyle{plain}
\newtheorem{theorem}{Theorem}
\newtheorem{idea}[theorem]{Main Idea}
\newtheorem{lemma}[theorem]{Lemma}
\newtheorem{corollary}[theorem]{Corollary}
\newtheorem{fact}[theorem]{Fact}
\newtheorem{claim}[theorem]{Claim}
\theoremstyle{definition}
\newtheorem{definition}{Definition}
\newtheorem{remark}[definition]{Remark}
\newtheorem{problem}[definition]{Problem}
\numberwithin{theorem}{section}
\numberwithin{definition}{section}
\nc{\DMO}{\DeclareMathOperator}
\DMO{\prox}{prox}
\DMO{\UCB}{UCB}
\DMO{\LCB}{LCB}
\nc{\phidiff}{\phi\sups{\Delta}}
\nc{\pexp}{q_{\mathrm{exp}}}
\nc{\nn}{\nonumber}
\nc{\rk}{\mathrm{rk}}
\nc{\brk}[3]{{\rm br}_{#1}^{#2}({#3})}
\nc{\co}{{\rm co}}
\nc{\br}[2]{{\rm br}^{#1}({#2})}
\nc{\depth}[1]{{\rm d}({#1})}
\nc{\tA}{\textsc{A}}
\nc{\child}[2]{{\rm ch}_{#1}({#2})}
\nc{\parent}[1]{{\rm pa}({#1})}
\nc{\dg}{\dagger}
\nc{\bB}{\mathbf{B}}
\nc{\Span}{\mathsf{span}}
\nc{\unif}{\mathsf{unif}}
\nc{\indsig}[2]{\mathcal{I}_{#1}({#2})}
\nc{\total}{{\rm fin}}
\nc{\early}{{\rm pre}}
\nc{\zsink}{z_{\rm sink}}
\nc{\lowv}{{\rm low}}
\nc{\ol}{\overline}
\nc{\ul}{\underline}
\nc{\madec}[3]{\texttt{ma-dec}_{#1}({#2}, {#3})}
\nc{\madeco}[1]{\texttt{ma-dec}_{#1}}
\nc{\madecd}[3]{\texttt{ma-dec}^{\texttt{d}}_{#1}({#2}, {#3})}
\nc{\SF}{\mathscr{F}}
\nc{\SH}{\mathscr{H}}
\nc{\SP}{\mathscr{P}}
\nc{\SPc}{\wb{\mathscr{P}}}
\nc{\SB}{\mathscr{B}}
\nc{\SC}{\mathscr{C}}
\nc{\BS}{\mathbb{S}}
\nc{\PiMarkov}{\Pi^{\rm markov}}
\nc{\trunc}[2]{\mathsf{trunc}_{#2}({#1})}
\nc{\sbl}{of strong Bellman type\xspace}
\nc{\inormal}[1][\Phi, u,v]{\til{N}_{{#1}}}
\nc{\gamvec}{\gamma}
\nc{\til}{\widetilde}
\nc{\td}{\tilde}
\nc{\wh}{\widehat}
\nc{\old}[1]{\ifnum\Comments=1 {\color{brown}  [OLD: #1]}\fi}
\nc{\noah}[1]{\ifnum\Comments=1 {\color{purple} [ng: #1]}\fi}
\nc{\allen}[1]{\ifnum\Comments=1 {\color{red} [al: #1]}\fi}
\nc{\shetty}[1]{\ifnum\Comments=1 {\color{blue} [avs: #1]}\fi}
\nc{\BP}{\mathbb{P}}
\nc{\BI}{\mathbb{I}}
\nc{\midpoint}[1][\Phi,\phi_1,\phi_2]{\mu^{\star}_{{#1}}}
\nc{\fools}[3]{\MF_{#3}({#1}, {#2})}
\nc{\fool}[2]{\MF({#1},{#2})}
\nc{\clip}[2]{{\rm clip}\left[ \left. {#1} \right| {#2} \right]}
\nc{\imax}{\omega}
\DMO{\conv}{conv}
\nc{\MH}{\mathcal{H}}
\nc{\MV}{\mathcal{V}}
\nc{\MC}{\mathcal{C}}
\nc{\MI}{\mathcal{I}}
\nc{\st}{\star}
\nc{\lng}{\langle}
\nc{\rng}{\rangle}
\DMO{\OOPT}{opt}
\nc{\dopt}[2]{\ell_{\OOPT}({#1},{#2})}
\nc{\MG}{\mathcal{G}}
\nc{\MP}{\mathcal{P}}
\nc{\PP}{\mathbb{P}}
\nc{\TT}{\mathbb{T}}
\nc{\TTmax}{\TT_{\max}}
\DMO{\REG}{Reg}
\DMO{\WREG}{wReg}
\nc{\reg}[2]{{\Delta}_{{#1}}({#2})}
\nc{\wreg}[2]{{\Delta}^{\rm w}_{{#1}}({#2})}
\nc{\Reg}[2]{{\REG}_{{#1}}({#2})}
\nc{\wReg}[2]{{\WREG}_{{#1}}({#2})}
\DMO{\Gap}{Gap}
\DMO{\GD}{GD}
\DMO{\GDA}{GDA}
\DMO{\EG}{EG}
\nc{\TE}{\til{\E}}
\nc{\Var}{\mathbb{V}}
\DMO{\Cov}{Cov}
\DMO{\OGDA}{OGDA}
\DMO{\Unif}{Unif}
\nc{\Qu}{\ul{Q}}
\nc{\Qo}{\ol{Q}}
\nc{\Ro}{\ol{R}}
\nc{\Vu}{\ul{V}}
\nc{\Vo}{\ol{V}}
\nc{\RanQ}{\Delta Q}
\nc{\RanV}{\Delta V}
\nc{\clipQ}{\Delta \breve{Q}}
\nc{\frzQ}{\Delta \mathring{Q}}
\nc{\clipV}{\Delta \breve{V}}
\nc{\clipdelta}{\breve{\delta}}
\nc{\cliptheta}{\breve{\theta}}
\nc{\delmin}{\Delta_{{\rm min}}}
\nc{\delmins}[1]{\Delta_{{\rm min},{#1}}}
\nc{\gapfinal}[1]{\max \left\{ \frac{\frzQ_{{#1}}^{k^\st}(x,a)}{2H}, \frac{\delmin}{4H} \right\}}
\nc{\post}[2]{R({#1}; {#2})}
\nc{\posts}[3]{R_{#3}({#1}; {#2})}
\nc{\algnst}[1]{\begin{align*}#1\end{align*}}
\nc{\algn}[1]{\begin{align}#1\end{align}}
\nc{\matx}[1]{\left(\begin{matrix}#1\end{matrix}\right)}
\renewcommand{\^}[1]{^{(#1)}}
\nc{\nuu}{\nu}
\nc{\bel}[1]{\mathbf{b}({#1})}
\nc{\nbel}[1]{\bar{\mathbf{b}}({#1})}
\nc{\sbel}[2]{\mathbf{b}'_{#1}({#2})}
\nc{\nsbel}[2]{\bar{\mathbf{b}}'_{#1}({#2})}
\nc{\bv}{\mathbf{v}}
\nc{\bone}{\mathbf{1}}
\nc{\bX}{\mathbf{X}}
\nc{\bY}{\mathbf{Y}}
\nc{\bG}{\mathbf{G}}
\nc{\bz}{\mathbf{z}}
\nc{\bw}{\mathbf{w}}
\nc{\bA}{\mathbf{A}}
\nc{\bJ}{\mathbf{J}}
\nc{\bK}{\mathbf{K}}
\nc{\bb}{\mathbf{b}}
\nc{\ba}{\mathbf{a}}
\nc{\bc}{\mathbf{c}}
\nc{\bC}{\mathbf{C}}
\nc{\BR}{\mathbb R}
\nc{\BA}{\mathbb{A}}
\nc{\BC}{\mathbb C}
\nc{\bx}{\mathbf{x}}
\nc{\bS}{\mathbf{S}}
\nc{\bM}{\mathbf{M}}
\nc{\bR}{\mathbf{R}}
\nc{\bN}{\mathbf{N}}
\nc{\NN}{\mathbb{N}}
\nc{\by}{\mathbf{y}}
\nc{\sy}{y}
\nc{\sx}{x}
\nc{\MO}{\mathcal O}
\nc{\MU}{\mathcal{U}}
\nc{\ME}{\mathcal{E}}
\nc{\MN}{\mathcal{N}}
\nc{\MK}{\mathcal{K}}
\nc{\MM}{\mathcal{M}}
\nc{\MS}{\mathcal{S}}
\nc{\MT}{\mathcal{T}}
\nc{\BF}{\mathbb F}
\nc{\BQ}{\mathbb Q}
\nc{\MX}{\mathcal{X}}
\nc{\MA}{\mathcal{A}}
\nc{\MD}{\mathcal{D}}
\nc{\MB}{\mathcal{B}}
\nc{\MZ}{\mathcal{Z}}
\nc{\MJ}{\mathcal{J}}
\nc{\MW}{\mathcal{W}}
\nc{\MR}{\mathcal{R}}
\nc{\MY}{\mathcal{Y}}
\nc{\BZ}{\mathbb Z}
\nc{\BN}{\mathbb N}
\nc{\ep}{\epsilon}
\nc{\epbe}{\varepsilon_{\mathsf{BE}}}
\nc{\epout}{\varepsilon_{\mathsf{outlier}}}
\nc{\bellc}[1][h]{\MT_{#1}^\circ}
\nc{\vep}{\varepsilon}
\nc{\gapfn}[1]{\varepsilon_{#1}}
\nc{\ggapfn}[2]{\varphi_{#1}({#2})}
\nc{\epsahk}{\gapfn{0}}
\nc{\BH}{\mathbb H}
\nc{\BG}{\mathbb{G}}
\nc{\D}{\Delta}
\nc{\MF}{\mathcal{F}}
\nc{\One}[1]{\mathbbm{1}\{{#1}\}}
\nc{\bOne}{\mathbf{1}}
\nc{\Aopt}{\mathcal{A}^{\rm opt}}
\nc{\Amul}{\mathcal{A}^{\rm mul}}
\nc{\SQ}{\mathsf Q}
\nc{\DO}{\accentset{\circ}{\D}}
\nc{\mf}{\mathfrak}
\nc{\mfp}{\mathfrak{p}}
\nc{\mfq}{\mf{q}}
\nc{\mfx}{\mf{s}}
\nc{\Sp}{\mbox{Spec}}
\nc{\Spm}{\mbox{Specm}}
\nc{\hookuparrow}{\mathrel{\rotatebox[origin=c]{90}{$\hookrightarrow$}}}
\nc{\hookdownarrow}{\mathrel{\rotatebox[origin=c]{-90}{$\hookrightarrow$}}}
\nc{\hra}{\hookrightarrow}
\nc{\tra}{\twoheadrightarrow}
\nc{\sgn}{{\rm sgn}}
\nc{\aut}{{\rm Aut}}
\nc{\Hom}{{\rm Hom}}
\nc{\img}{{\rm Im}}
\DMO{\id}{Id}
\DMO{\supp}{supp}
\DMO{\KL}{KL}
\nc{\kld}[2]{D_{\mathsf{KL}}({#1}||{#2})}
\nc{\ren}[2]{D_2({#1}||{#2})}
\nc{\chisq}[2]{\chi^2({#1}||{#2})}
\nc{\tvd}[2]{D_{\mathsf{TV}}({#1}, {#2})}
\nc{\hell}[2]{D_{\mathsf{H}}^2({#1}, {#2})}
\nc{\dbi}[3][\pi]{D_{\mathsf{bi}}^{#1}({#2} \| {#3})}
\DMO{\BSS}{BSS}
\DMO{\BES}{BES}
\DMO{\BGS}{BGS}
\DMO{\poly}{poly}
\nc{\indep}{\perp}
\DMO{\sink}{sink}
\nc{\fp}[1]{\MP_1({#1})}
\nc{\BO}{\mathbb{O}}
\nc{\BT}{\mathbb{T}}
\nc{\RR}{\mathbb{R}}
\nc{\Gradient}{\nabla}
\DMO{\diag}{diag}
\DeclareMathOperator*{\EE}{\mathbb{E}}
\nc{\MQ}{\mathcal{Q}}
\nc{\ML}{\mathcal{L}}
\nc{\cPhi}{\bar \Phi}
\DeclareMathOperator*{\PR}{Pr}
\renewcommand{\Pr}{\PR}
\nc{\E}{\mathbb{E}}
\nc{\ra}{\rightarrow}
\renewcommand{\t}{\top}
\nc{\pmhc}[1]{\{-1,1\}^{#1}}
\nc{\Dbnd}{D}
\nc{\Bbnd}{B}
\nc{\Key}{\mathsf{KeyGen}}
\nc{\Enc}{\mathsf{Encode}}
\nc{\Encemb}{\mathsf{EncodeEmb}}
\nc{\Dec}{\mathsf{Decode}}
\nc{\sk}{\mathsf{sk}}
\nc{\pk}{\mathsf{pk}}
\nc{\lpk}{\ell_{\mathsf{pk}}}
\nc{\lsk}{\ell_{\mathsf{sk}}}
\nc{\msg}{\mathsf{m}}
\nc{\Adv}{\mathsf{Adv}}
\nc{\Red}{\mathsf{Red}}
\nc{\negl}{\mathsf{negl}}
\nc{\Ber}{\mathrm{Ber}}
\nc{\PRFPRC}{\mathsf{PRF\text{-}PRC}}
\nc{\wt}{\mathrm{wt}}
\nc{\res}[2]{{#1}_{#2}}
\nc{\bzero}{\mathbf{0}}
\nc{\Bin}{\mathrm{Bin}}
\nc{\Hyp}{\mathrm{Hyp}}
\nc{\Nrho}[1][\rho]{{N}_{#1}}
\nc{\Trho}[1][\rho]{\mathsf{T}_{#1}}
\nc{\hc}[1][n]{\{0,1\}^{#1}}
\nc{\Stab}{\mathbf{Stab}}
\nc{\bW}{\mathbf{W}}
\nc{\NS}{{\mathbf{NS}}}
\nc{\KeyS}{\mathsf{KeyGen_{Sub}}}
\nc{\EncS}{\mathsf{Encode_{Sub}}}
\nc{\DecS}{\mathsf{Decode_{Sub}}}
\nc{\WeightPerturb}{\mathsf{WeightPerturb}}
\nc{\Unique}{\mathsf{Unique}}
\nc{\PRCS}{\mathsf{PRC_{Sub}}}
\nc{\PRC}{\mathsf{PRC}}
\nc{\PRCI}{\mathsf{PRC_{Idx}}}
\nc{\SampleUnique}{\mathsf{SampleUnique}}
\nc{\PerturbDifference}{\mathsf{PerturbDifference}}
\nc{\Model}{\mathsf{Model}}
\nc{\Modelo}{\overline{\Model}}
\nc{\prompt}{\mathtt{PROMPT}}
\nc{\Setup}{\mathsf{Setup}}
\nc{\Detect}{\mathsf{Detect}}
\nc{\Sigprc}{\Sigma_{\mathsf{PRC}}}
\nc{\Wat}{\mathsf{Wat}}
\nc{\term}{\mathtt{END}}
\nc{\tok}{\mathsf{t}}
\nc{\True}{\textsf{True}}
\nc{\False}{\textsf{False}}
\nc{\Eemb}{\ME_{\mathsf{Emb}}}
\nc{\hist}{\mathsf{hist}}
\nc{\hh}{\mathsf{h}}
\nc{\freq}{\mathsf{freq}}
\nc{\ff}{\mathsf{f}}
\nc{\Hemp}[1]{H_{\mathsf{e}}^{#1}}
\nc{\Hempt}[1]{\bar{H}_{\mathsf{e}}^{#1}}
\nc{\Hemptil}[1]{\tilde{H}_{\mathsf{e}}^{#1}}
\nc{\Spread}[1]{S^{#1}}
\nc{\Hmean}[1]{H_{\mathsf{m}}^{#1}}
\nc{\partition}[1][n,q]{P^{\mathsf{ptn}}_{#1}}
\nc{\Crob}{C_{\mathsf{rob}}}
\nc{\Lmax}{L_{\mathsf{max}}}
\nc{\skwat}{\sk_{\mathsf{Wat}}}
\nc{\EmbedToken}{\mathsf{EmbedChar}}
\nc{\len}{\mathrm{len}}
\nc{\Esub}{\ME_{\mathsf{sub}}}
\nc{\Ecomp}{\ME_{\mathsf{comp}}}
\nc{\comp}{\mathsf{c}}
\nc{\SE}{\mathscr{E}}
\nc{\alphb}{q}
\nc{\tAdv}{\widetilde{\Adv}}
\nc{\Funif}{{F_{\mathsf{Unif}}}}
\nc{\Alg}{\mathsf{Alg}}
\nc{\Majority}{\mathsf{Maj}}
\nc{\Dist}{\mathsf{Dist}}
\nc{\edit}{edit\xspace}
\nc{\Edit}{Edit\xspace}
\nc{\Wcomp}{\MW^{\mathsf{comp}}}
\nc{\INS}{\mathsf{INS}}
\nc{\CNS}{\mathsf{CNS}}
\nc{\cdist}{\stackrel{\mathrm{c}}{\sim}}
\nc{\SU}{\mathscr{U}}
\nc{\rr}{\bar{n}}
\nc{\KeyGen}{\mathsf{KeyGen}}
\nc{\ED}{D_{\mathsf{ED}}}
\nc{\Ham}{D_{\mathsf{Ham}}}
\nc{\bin}{\mathsf{bin}}
\nc{\EDball}{\mathcal{B}_{\mathsf{ED}}}
\nc{\SEDball}{\mathcal{B}_{\mathsf{Ham,ED}}}
\nc{\LEDball}{\mathcal{B}_{\mathsf{len,ED}}}
\nc{\epED}{\varepsilon_{\mathsf{ED}}}
\nc{\epDec}{\varepsilon_{\mathsf{Dec}}}
\nc{\Eedit}{\mathscr{E}^{\mathsf{edit}}}
\nc{\Egood}{\ME_{\mathsf{good}}}
\nc{\PermEnc}{\mathsf{PermEncode}}
\nc{\dham}{d_{\mathsf{H}}}
\nc{\dedit}{d_{\mathsf{E}}}
\nc{\pDec}{p_{\mathsf{Dec}}}
\nc{\Rclean}{R_{\mathsf{Clean}}}
\nc{\adv}{\mathcal{A}}
\nc{\bi}{\mathbf{i}}
\nc{\bj}{\mathbf{j}}
\nc{\bp}{\mathbf{p}}
\nc{\Ologit}{\MO_{\mathsf{logit}}}
\nc{\Osamp}{\MO_{\mathsf{samp}}}
\nc{\epapx}{\varepsilon_{\mathsf{apx}}}
\nc{\DistSpanner}{\textsf{DistSpanner}\xspace}
\nc{\epbase}{\vep_{\mathsf{base}}}
\nc{\cnorm}{\beta}
\nc{\Lapx}{\logit_{\mathsf{oracle}}}
\nc{\BB}{\mathbb{B}}
\nc{\Bfut}{\mathbb{A}^{\mathsf{fut}}}
\nc{\Bhist}{\mathbb{A}^{\mathsf{hist}}}
\nc{\gamthres}{\gamma_{\mathsf{thres}}}
\nc{\softmax}{\mathrm{softmax}}
\nc{\Espanner}{\ME_{\mathrm{spanner}}}
\nc{\Eoracle}{\ME_{\mathrm{oracle}}}
\nc{\epavg}{\vep_{\mathsf{avg}}}
\nc{\tilOlogit}{\tilde{\MO}_{\mathsf{logit}}}
\nc{\ds}{{\bar{d}}}
\nc{\Lmat}{\logitmatrix_{\model}}
\nc{\feasSet}{\mathcal{G}}
\nc{\projOne}{\Pi_{\mathbf{1}}}
\newcommand{\logit}{\mathbb{L}}
\newcommand{\model}{\mathbb{M}}
\newcommand{\logitmatrix}{\ML}
\title{Provably Learning from Modern Language Models via Low Logit Rank}
\author{
Noah Golowich \\
Microsoft Research \\
\texttt{noah.golowich@austin.utexas.edu}
\and
Allen Liu \\
UC Berkeley \\
\texttt{aliu42@berkeley.edu}
\and
Abhishek Shetty \\
MIT \\
\texttt{shetty@mit.edu}}
\begin{document}
    \maketitle

\begin{abstract}
While modern language models and their inner workings are incredibly complex, recent work \cite{golowich2025sequenceslogitsreveallow} has proposed a simple and potentially tractable abstraction for them through the observation that empirically, these language models all seem to have approximately \emph{low logit rank}. 
Roughly, this means that a matrix formed by the model's log probabilities of various tokens conditioned on certain sequences of tokens is well approximated by a low rank matrix.

In this paper, our focus is on understanding how this structure can be exploited algorithmically for obtaining provable learning guarantees.  
Since low logit rank models can encode hard-to-learn distributions such as noisy parities, we study a query learning model with logit queries that reflects the access model for common APIs. 
Our main result is an \emph{efficient algorithm for learning any approximately low logit rank model  from queries}. We emphasize that our structural assumption closely reflects the behavior that is empirically observed in modern language models. 
Thus, our result gives what we believe is the first end-to-end learning guarantee for a generative model that plausibly captures modern language models.

\end{abstract}

\thispagestyle{empty}

\newpage
\thispagestyle{empty}
\tableofcontents
\newpage

\setcounter{page}{1}

\section{Introduction} \label{sec:intro} 
  
In recent years, \emph{large language models} (LLMs) have demonstrated remarkable success in a wide range of areas including natural language processing, computational biology, and mathematics, amongst others. Due to the complexity inherent to the neural network architectures used in these models, most notably \emph{transformers}, %
a major challenge is to obtain a theoretical understanding of how and why these models work, as well as why some of their failure modes arise. Accordingly, a number of theoretical works have attempted to introduce and analyze \emph{simple abstractions} that aim to capture some of the core features of LLMs: for instance, there have been several papers studying the problem of learning a single attention layer \cite{chen2025provably,geshkovski2024dynamic, geshkovski2023emergence}, analyzing the expressivity of LLMs \cite{merrill2025theory}, examining the ``limiting problem'' of language generation \cite{kleinberg2024language}, and exploring many other directions.

Despite the above works, our theoretical understanding of LLMs remains in its nascent stages. Indeed, essentially all of the research in this area 
makes a number of assumptions which do not closely resemble properties of modern LLMs, including particular architectural assumptions \cite{chen2025provably,geshkovski2024dynamic, geshkovski2023emergence} or strong restrictions on the data distribution, such as to specific synthetic tasks \cite{huang2025transformers,wang2025learning}. 
Moreover, our lack of theoretical understanding has been a 
fundamental impediment towards both %
the principled design of better models and the development of mechanisms to ensure their safe deployment in real-world applications. %

In this work, we aim to bridge this gap by taking a different approach: 
as opposed to directly trying to model the particularities of modern architectures or common benchmarks, we instead view language models simply as probability distributions over sequences of tokens and aim to understand the core structural properties that they exhibit empirically. In particular, we ask the following questions:
\begin{itemize}
    \item What structural properties of the distributions induced by common language models can we identify?
    \item Are these properties sufficient to allow theoretical analysis and insights? In particular, can we derive efficient end-to-end learning guarantees under such properties?
\end{itemize}

One promising structural property that has been recently proposed is that of \emph{low logit rank} \cite{golowich2025sequenceslogitsreveallow}. 
This property posits that a certain matrix defined from the language model is approximated by a low rank matrix. Roughly speaking, the entries of this matrix are given by the log probabilities of tokens in certain sequences (e.g., responses) conditioned on other sequences (e.g., prompts).
Surprisingly, \cite{golowich2025sequenceslogitsreveallow} showed that this property is (approximately) exhibited across the board by a wide range of modern language models. Furthermore, this property can be used to demonstrate surprising empirical phenomena such as the ability to generate from a language model conditioned on certain prompts by only querying the model on \emph{unrelated} prompts. %

In this paper, we advance this line of inquiry by establishing a formal learning guarantee for language models that exhibit approximately low logit rank.
To the best of our knowledge, this is:
\begin{center}
   \emph{\textbf{The first end-to-end learning guarantee for distributions that plausibly capture modern language models.}}
\end{center}
In fact, running a simplified version of our algorithm (\cref{alg:learning-approx}) on {Tinystories-8M}, we learn a very simple low logit rank model that generates readable stories such as \footnote{These were generated using a model with rank $4000$, with prompt ``Once upon a time".}: 
\begin{quote}
\emph{``Once upon a time, there was a baby bear. He was very upset, but he still soundly looked down the newspaper" }
\end{quote}
\begin{quote}
\emph{``Once upon a time, there was a big house, where a family lived in the house. One day was a little girl"}
\end{quote}

Whereas previous works studying provable learning guarantees for LLMs all worked with models such as Hidden Markov models \cite{mahajan2023learning, liu2025model} or single attention layers \cite{chen2025provably} which are too simplified to generate meaningful language, we believe substantial empirical evidence (\cite{golowich2025sequenceslogitsreveallow}, and Figure~\ref{fig:approx}) supports the notion of low logit rank as a much more realistic modeling assumption. More generally, we hope that our results will spur further research strengthening the connections between learning theory and modern language models.

\subsection{Low Logit Rank}

In this section, we formalize the notion of low logit rank and discuss the empirical observation that modern language models exhibit this property.
Towards this, we begin with an abstract view of language models as distributions over sequences of tokens from some finite set $\Sigma$, which we call the \emph{alphabet}, or \emph{token space}. For simplicity, we will consider a fixed sequence length $T$. We let $\Sigma^\st$ denote the set of all finite sequences of tokens from $\Sigma$, $\Sigma^t$ denote the set of sequences of length $t$, and $\Sigma^{\leq t}$ denote the set of sequences of length at most $t$. For $y_1, \ldots, y_t \in \Sigma$, we let $y_{1:t} := (y_1, \ldots, y_t)$. For sequences $h,f \in \Sigma^\st$, we let $h \circ f$ denote their concatenation. 

\begin{definition}[Language Model]
    Let $T$ be a positive integer denoting the sequence length. 
    A \emph{language model}, $\model$, is a distribution over sequences of tokens of length $T$ from $\Sigma$, i.e., $\model \in \Delta(\Sigma^T)$. 
\end{definition}

As language models are commonly considered autoregressively, we will set up notation to reflect this.
For a sequence $y_{1:t} = (y_1, \ldots, y_t) \in \Sigma^t$ and a token $y \in \Sigma$, we will denote by 
\begin{align}
  \model(y \mid y_{1:t}) := \Pr_{\model} (y_{t+1} = y \mid y_{1:t}). \nonumber
\end{align}
More generally, for sequences $h \in \Sigma^t$, $f \in \Sigma^s$ with $t + s \leq T$, we will write
\begin{align}
  \model(f \mid h) := \Pr_{\model} \left(y_{t+1:t+s} = f \mid y_{1:t} = h\right)\nonumber
\end{align}
and let $f \sim \model (\cdot | h) $ mean that $f$ is sampled from the distribution $\model(\cdot | h)$ while letting the length of $f$ be given by context. In this context, we will often refer to $h$ as a \emph{history} and to $f$ as a \emph{future}.

An interesting feature of modern language models is that they all first compute a vector of \emph{logits} $\logit \in \R^{|\Sigma|}$ and then sample the next token according to the distribution $\text{softmax}(\logit)$.\footnote{Recall that the softmax function maps a vector $\logit = (\logit(y))_{y \in \Sigma}$ to the probability distribution $P \in \Delta(\Sigma)$ defined by $P(y) \propto\exp(\logit(y))$.} The logits will play a crucial role \---- as we will see, for modern language models, there is significantly more structure in the logits compared to the raw probabilities.

\begin{definition}[(Mean-centered) Logits]
Given a sequence $y_{1:t} \in \Sigma^t$ and $y_{t+1} \in \Sigma$, we define the \emph{mean-centered logits} as $\logit_{\model}(y_{t+1}| y_{1:t}) = \log \model (y_{t+1}| y_{1:t}) - \frac{1}{|\Sigma|} \sum_{y \in \Sigma} \log \model (y| y_{1:t}) $.
To simplify notation at times we will write $\logit_\model(y_{1:t+1}) := \logit_\model(y_{t+1} \mid y_{1:t})$. 
\end{definition}

Now we introduce the main structural property we study, \emph{low logit rank}, which governs the rank of a collection of matrices associated to any language model $\model$ we refer to as \emph{logit matrices}.

\begin{definition}[Logit Matrix]
  \label{def:extended-logit-matrix}
  Fix a model $\model \in \Delta(\Sigma^T)$.  Given subsets of sequences $\mathcal{H} \subset \Sigma^*$, $\mathcal{F} \subset \Sigma^*$ (which we refer to as \emph{histories} and \emph{futures}, respectively) so that $|h| + |f| \leq T-1$ for any $h \in \MH, f \in \MF$, the associated \emph{logit matrix} $\logitmatrix_{\model}(\MH, \MF) \in \BR^{\MH \times (\MF \times \Sigma)}$ is defined as follows: for $h \in \MH$ and $(f,z) \in \MF \times \Sigma$, the $(h, (f,z))$-entry is given by
  \begin{align}
 \logitmatrix_{\model}(\MH, \MF)_{(h, (f,z))} := \logit_{\model}(z \mid h \circ f)\nonumber.
    \end{align}
\end{definition}

\begin{definition}[(Exact) Low Logit Rank]
  \label{def:logit-rank-exact}
A model $\model$ is said to have \emph{logit rank $d$} if for all choices of $\MH, \MF$, the matrix $\logitmatrix_{\model}(\MH, \MF)$ has rank at most $d$. 
\end{definition}

We now briefly discuss how to interpret this definition. First note that when $\calF$ contains all possible sequences (of up to some length $t$), then a single row corresponding to $h$, $\logitmatrix_{\model}(h, \calF)$, contains all of the information necessary to sample the next $t$ tokens conditioned on the history $h$. %
When the extended logit matrix has low rank, it means that there is a basis of histories $h_1, \dots , h_d$ such that any row $\logitmatrix_{\model}(h, \calF)$ can be written as a linear combination of the rows for these basis histories. This then implies that for each history $h$, there is a $d$-dimensional hidden state such that the information needed to sample the tokens after $h$ is a linear function of this hidden state (in logit space). This intuition was formalized in \cite{golowich2025sequenceslogitsreveallow}, which shows that a low logit rank language model is essentially equivalent to a simple latent variable model called an Input-Switched Affine Network (ISAN).

\begin{definition}[(Time-varying) ISAN]
  \label{def:isan}
A time-varying ISAN of sequence length $T$ is specified by matrices $\BA_{y,t} \in \R^{d \times d}, \BB_t \in \R^{|\Sigma| \times d}$ for $z \in \Sigma, t \in [T]$ and an initial state $ \mu \in \R^d$.  It defines a distribution over sequences of tokens $(y_1,z_2, \dots , y_T) \in \Sigma^T$ as follows: $y_{t}$ is sampled from $\softmax(\BB_{t} x_{t-1})$, where $x_0 = \mu$.  Then, the hidden state is updated as $x_{t} = \BA_{y_{t},t} x_{t-1}$.
\end{definition}

\begin{fact}[Equivalence from \cite{golowich2025sequenceslogitsreveallow}]
Let $\model$ be a language model over sequences of length $T$.  Then $\model$ is expressible as a time-varying ISAN with hidden dimension $d$ if and only if the logit matrix $\logitmatrix_{\model}(\Sigma^{t}, \Sigma^{\leq T - t-1})$ has rank at most $d$ for all $t \leq T$.
\end{fact}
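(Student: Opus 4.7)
The plan is to prove each direction of the equivalence in turn.

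\emph{Forward direction (ISAN $\Rightarrow$ low logit rank).} Assuming $\model$ is given by an ISAN with hidden dimension $d$, I would let $x_t(h) := \BA_{h_t, t} \cdots \BA_{h_1, 1}\mu \in \R^d$ denote the hidden state after history $h = (h_1, \ldots, h_t) \in \Sigma^t$. Unrolling the recursion, for any future $f = (f_1, \ldots, f_s)$ and next token $z$ with $t + s + 1 \leq T$,
\[
  \logit_\model(z \mid h \circ f) \;=\; \tilde e_z^\top \BB_{t+s+1}\BA_{f_s, t+s}\cdots \BA_{f_1, t+1}\,x_t(h),
\]
where $\tilde e_z := e_z - \tfrac{1}{|\Sigma|}\mathbf{1}$ encodes mean-centering (valid because softmax is shift-invariant, so the ISAN is unchanged if its logits are shifted to be mean-zero). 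This exhibits the $(h,(f,z))$-entry of $\logitmatrix_\model(\Sigma^t, \Sigma^{\leq T-t-1})$ as $\psi_t(f,z)^\top x_t(h)$ with $\psi_t(f,z) \in \R^d$ independent of $h$, yielding rank at most $d$.

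\emph{Backward direction (low logit rank $\Rightarrow$ ISAN)} is the main content. For each $t$, let $V_t \subseteq \R^{\Sigma^{\leq T-t-1} \times \Sigma}$ denote the row span of $\logitmatrix_\model(\Sigma^t, \Sigma^{\leq T-t-1})$, and $s_t(h) \in V_t$ its $h$-th row, so $s_t(h)_{(f,z)} = \logit_\model(z \mid h\circ f)$ and $\dim V_t \leq d$ by hypothesis. The key structural step is to observe that for each $y \in \Sigma$ the coordinate projection $(R_{y,t}v)_{(f',z)} := v_{(y\circ f',z)}$ satisfies $R_{y,t}(s_t(h)) = s_{t+1}(h\circ y)$, and hence restricts to a well-defined linear map $V_t \to V_{t+1}$. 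I would then pick any linear injection $\iota_t : V_t \hookrightarrow \R^d$ and set $\phi_t(h) := \iota_t(s_t(h))$. The ISAN is built by taking $\mu := \phi_0(\emptyset)$, taking $\BA_{y,t+1}$ to be any matrix agreeing with $\iota_{t+1} \circ R_{y,t} \circ \iota_t^{-1}$ on $\iota_t(V_t)$ (well-defined precisely because $R_{y,t}(V_t) \subseteq V_{t+1}$), and taking the $z$-th row of $\BB_{t+1}$ to be any linear lift to $\R^d$ of the coordinate functional $v \mapsto v_{(\emptyset, z)}$ on $V_t$.

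To close the argument, I would use induction on $t$ to show that the ISAN's hidden state after processing $h$ equals $\phi_t(h)$; the inductive step is just $\BA_{y,t+1}\phi_t(h) = \iota_{t+1}(R_{y,t}(s_t(h))) = \iota_{t+1}(s_{t+1}(h\circ y)) = \phi_{t+1}(h\circ y)$. By construction of $\BB_{t+1}$, the ISAN's pre-softmax logits after history $h$ then match $\logit_\model(\cdot \mid h)$ up to an additive constant, and softmax is shift-invariant, so the ISAN reproduces $\model(\cdot \mid h)$. The main obstacle is really just the invariance $R_{y,t}(V_t) \subseteq V_{t+1}$, which is the structural fact that enables the hidden-state update to be realized by a single linear map $\BA_{y,t+1}$; once this is in hand, the rest is routine linear-algebra bookkeeping together with careful attention to the mean-centering convention.
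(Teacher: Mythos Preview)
The paper does not supply its own proof of this statement; it is quoted as a Fact from \cite{golowich2025sequenceslogitsreveallow} and used as a black box. So there is no paper proof to compare against.

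That said, your argument is correct and is essentially the standard one. The forward direction is immediate from unrolling the recursion, and you handle the mean-centering cleanly via $\tilde e_z$. In the backward direction, the crucial observation is exactly the one you isolate: the shift map $R_{y,t}$ sends each generator $s_t(h)$ of $V_t$ to $s_{t+1}(h\circ y) \in V_{t+1}$, so by linearity it carries the row space $V_t$ into $V_{t+1}$, allowing the transition to be realized by a single $d\times d$ matrix after choosing coordinates via $\iota_t$. The readout construction and the inductive verification are routine once this is in place. One very minor remark: by your construction the ISAN's pre-softmax output after history $h$ equals $\logit_\model(\cdot\mid h)$ exactly (not merely up to a constant), since you built $\BB_{t+1}$ to extract the $(\emptyset,z)$ coordinate of $s_t(h)$, which is already the mean-centered logit; the ``up to an additive constant'' caveat is harmless but unnecessary.
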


The key observation of \cite{golowich2025sequenceslogitsreveallow} is that many modern language models \emph{approximately} exhibit low logit rank; further, \cite{golowich2025sequenceslogitsreveallow} established many surprising implications of this fact. 
Thus, in order to reason about modern language models, we need to study \emph{approximately} low logit rank models. 
With this motivation, we ask the following questions:

\begin{center}
\textbf{Question 1:} What is the appropriate notion of approximation that reflects the properties of real language models?
\end{center}

\begin{center}
\textbf{Question 2:} For a language model that has approximate low logit rank in the appropriate sense, can we efficiently learn it? %
\end{center}

\subsubsection{Approximate Low Logit Rank}
Towards answering the first question above, we introduce a notion of approximate low logit rank which is supported by experiments on modern LLMs. For integers $0 \leq s < t \leq T$, we let $\model_{s:t}$ denote the marginal distribution of $y_{s:t}$ for $ y  \sim \model$.  
In \cref{def:avg-closeness-weak}, we define a language model to be approximately low logit rank if, roughly speaking, its \emph{logit matrices $\logitmatrix_{\model}(\Sigma^s, \Sigma^{\leq T-s-1})$ are entrywise-close to those of a low-rank matrix on average when the row $h$ and column $(f,y)$ are drawn independently according to their marginals under the model $\model$}.\footnote{For technical reasons, we also require that the last token of the row and column is uniformly random, as in \cref{eq:avg-closeness-weak}. Moreover, we will need a slightly stronger notion of ``low-rank'', namely that there is a low-dimensional factorization with bounded norms, formalized in \cref{def:bounded-logit}.} %
\begin{definition}
\label{def:avg-closeness-weak}
Fix $\ep, \alpha > 0$ and a distribution $\model \in \Delta(\Sigma^{T})$. We say that $\model$ has \emph{$\ep$-approximate logit rank $d$} if for each $0 \leq s \leq T$ there is a matrix $\logitmatrix\^s \in \BR^{\Sigma^{ s} \times (\Sigma^{\leq T-s})}$ of rank $d$ so that for each $0 \leq s < t \leq T$, we have
\begin{align}
\EE_{\substack{h \sim \model_{1:s-1} \times \Unif(\Sigma), \\ f \sim \model_{s+1:t-1},\ y_t \sim \Unif(\Sigma)}} \left[ \left| \logitmatrix\^s(h, (f,y_t)) - \logit_\model(y_t \mid h \circ f) \right| \right] \leq \ep\label{eq:avg-closeness-weak}.
\end{align}
Above the notation $h \sim \model_{1:s-1} \times \Unif(\Sigma)$ means that we draw $y_{1:s-1} \sim \model_{1:s-1}$ and $y_s \sim \Unif(\Sigma)$ independently and set $h = y_{1:s}$; and $\logitmatrix\^s(h, (f,y_t))$ denotes the $(h, (f,y_t))$-entry.
\end{definition}

\paragraph{Empirical Validation.}
Next, we perform an experiment, paralleling those in \cite{golowich2025sequenceslogitsreveallow}, to measure the degree to which the logit matrices $\logitmatrix_{\model}(\calH, \calF)$ for modern language models are approximately low logit rank in the sense of \cref{def:avg-closeness-weak}. In particular, we study the {OLMo2-1b} language model  $\model$ \cite{olmo2}. We consider the logit matrix defined by sets $\calH, \calF$ of $n = 4 \cdot 10^3$ histories and futures of lengths between $1$ and $20$ tokens.  For each length, the histories and futures are generated from the language model according to the distributions in \Cref{def:avg-closeness-weak}. \noah{discuss any differences, e.g., 20 tokens per future (maybe in appendix)} %

\begin{figure}[!h]  
\begin{center}
\includegraphics[width = 0.7\linewidth]{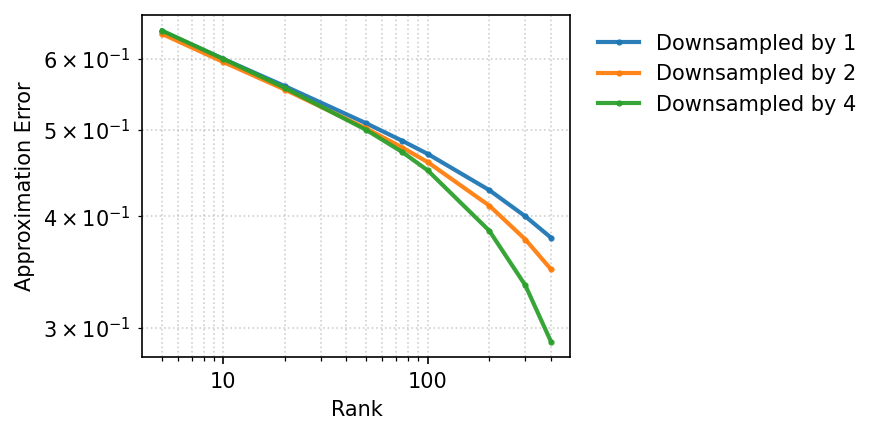}
\caption{Low-rank approximation error (measured by \emph{average $L^1$ error}) of the extended logit matrix for {OLMo2-1b}. For fixed sets $\MH, \MF$, the approximation errors for the logit matrix $\logitmatrix_\model(\MH, \MF)$ behave according to a similar power law as to those of various sub-matrices with $2$ or $4$ times fewer entries.} 
\label{fig:approx}
\end{center}
\end{figure}

We then construct the matrix $\logitmatrix_{\model}(\calH, \calF)$ and measure its approximability by rank-$d$ matrices, for various values of $d$. Observe that for any approximation, say $\logitmatrix' \in \BR^{\MH \times (\MF \times \Sigma)}$, we can compute the average entrywise $L^1$ error 
\begin{align}
\label{eq:avg-closeness-emp}
\frac{1}{|\MH| |\MF| |\Sigma|} \sum_{h,f,y} |\logitmatrix_{\model}(h,(f,y)) - \logitmatrix'(h,(f,y))|,
\end{align}
which should be interpreted as an empirical analogue of \cref{eq:avg-closeness-weak}. Note that, since $\Sigma^s, \Sigma^{\leq T-s}$ are exponentially large, it is infeasible to directly show that something like \cref{eq:avg-closeness-weak} holds directly; however, we can obtain convincing evidence for it by observing how the empirical quantity in \cref{eq:avg-closeness-emp} changes as we \emph{scale up} $\MH, \MF$ generated as above.

Accordingly, in \cref{fig:approx}, we plot the average entrywise $L^1$ error between the true logit matrix for $\model$ and the rank-$d$ approximations for various ranks $d$ between $5$ and $500$. 
We make two important observations about the plot:
\begin{enumerate}
\item The error is essentially the same even when we subsample $\calH, \calF$ down significantly
\item The error follows a power law with slope $ \approx -0.1$
\end{enumerate}
The first suggests that the approximation error would remain low even if measured across  the entire logit matrices (i.e., if we took $\MH = \Sigma^s, \MF = \Sigma^{\leq T-s-1}$) and weighted histories and futures according to $\model$, as in \cref{def:avg-closeness-weak}. %
The second, if we extrapolate the power law, implies that if we take rank $d \geq \poly(1/\ep)$, then the rank-$d$ approximation would have average $L^1$ error at most $\ep$. These observations give credence to the notion of approximation in \Cref{def:avg-closeness-weak} as a clean mathematical notion that is faithful to real language models, especially compared to other a priori natural notions of approximation such as $L^{\infty}$ closeness.

\subsection{Our Contributions} \label{sec:contributions}

The main contribution of this paper is to give an end-to-end learning guarantee for language models that have approximately low logit rank in the sense described above. 

The most basic and well-studied setting of ``learning'' a distribution $\model$ is to learn using \emph{i.i.d.~samples} from $\model$. Alas, this task is computationally intractable even for very simple generative models \---- \cite{golowich2025sequenceslogitsreveallow} shows that even models of logit rank $d=2$ can express the notorious \emph{noisy parities problem} \cite{blum2003noise}, which is widely believed to be computationally intractable to learn. %

To circumvent this barrier, we opt to strengthen the power of the learning algorithm by allowing it to make \emph{queries} to the target distribution $\model$. In general, query learning models have an extensive history as a means of understanding and circumventing computational hardness barriers \cite{kushilevitz1991learning,angluin1987learning}, and have garnered increased interest in recent years as a means to study the power of query access to machine learning model APIs for applications such as \emph{model stealing} and \emph{distillation} \cite{liu2025model,mahajan2023learning}. Indeed, \emph{distillation}, which refers to the process of using query access to a teacher model to train a (typically smaller) student model, has become a popular approach to train language models \cite{sanh2020distillbert,agarwal2024onpolicy}; and notable works \cite{carlini2024stealing} have shown that by making appropriate queries one can \emph{steal} the last layer of a closed-source production model. For autoregressive language models, a common query format (which is supported by many APIs) is to allow the user to query an arbitrary sequence $y_{1:t}$, and to respond with the logits for the next token, i.e., $\logit_\model(y_{t+1} = \cdot \mid y_{1:t})$.  
This \emph{logit query} setting is exactly the one that we study: %
\begin{definition}[Logit oracle]
  \label{def:exact-logit-oracle}
An \emph{(exact) logit oracle} $\Ologit$ takes as input a sequence $y_{1:t} \in\Sigma^t$ and returns the vector $\log \model(y_{t+1} = \cdot \mid y_{1:t}) \in \BR^\Sigma$, for any $y_{1:t} \in \Sigma^t$.
\end{definition}
Our main result shows that \emph{any approximately low logit rank language model} can be efficiently learned using queries to a logit oracle with time and query complexity that are polynomial in all relevant parameters: \noah{if time -- resemblance to some rl strategies where you merge trajs}

\begin{theorem}[Main theorem; Informal version of \cref{thm:approx-main}]
  \label{thm:approx-main-informal}
  There is a polynomial-time algorithm that, given the ability to make logit queries to a distribution $\model$ with $\epavg$-approximate logit rank $d$, outputs an efficiently sampleable distribution $\hat \model$ satisfying $\tvd{\hat \model}{\model} \leq \ep^\st$ for $ \ep^{\st} \approx \poly(\epavg, d, |\Sigma|, T)$.
\end{theorem}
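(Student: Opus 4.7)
The plan is to give an autoregressive sampling algorithm that at each time step $t$ represents the current simulated history $h\in\Sigma^{t}$ by a low-dimensional coefficient vector $\alpha(h)\in\R^{D}$, where $D=\poly(d)$, with respect to a carefully chosen \emph{spanner} of reference histories $B_{t}\subset\Sigma^{t}$. The low logit rank assumption guarantees that the next-token logits $\logit_\model(\cdot\mid h\circ f)$ for any extension $f$ are, in expectation over a future $f$ drawn from $\model$, close to a linear combination of $\{\logit_\model(\cdot\mid b\circ f)\}_{b\in B_t}$ with coefficients $\alpha(h)$. Thus, given $B_t$ and the ability to query logits after any suffix, we can estimate $\alpha(h)$ by least squares using queries of the form $\Ologit(h\circ f)$ and $\Ologit(b\circ f)$ for $b\in B_t$, random $f\sim\model$ and random terminal token. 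Then, to sample, we iteratively compute $\alpha(h_t)$ for our growing history, use it to predict $\logit_\model(\cdot\mid h_t)$, sample $y_{t+1}$ from the resulting softmax, and continue.

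\textbf{Key steps.}
\emph{(1) Spanner construction.} At each level $s$, build $B_s$ iteratively by drawing i.i.d.\ histories $h\sim\model_{1:s}$ (using the logit oracle to sample one token at a time) and for each such $h$, testing whether the vector of logits on a batch of futures $f\sim\model$ is well-approximated by the span of the corresponding vectors for $b\in B_s$; if not, add $h$ to $B_s$. A standard volume/barycentric-spanner argument bounds $|B_s|\le\poly(d,1/\epavg)$.
\emph{(2) Coefficient regression.} For a query history $h$, draw futures $f_1,\ldots,f_N\sim\model$ conditioned on $h$, tokens $y\sim\Unif(\Sigma)$, and solve a regularized least-squares problem to fit $\alpha(h)$ so that $\sum_i\alpha_i(h)\logit_\model(y\mid b_i\circ f)$ matches $\logit_\model(y\mid h\circ f)$ on the sample; by \Cref{def:avg-closeness-weak} and a uniform convergence bound over the $\poly(d)$-dimensional function class, the resulting $\alpha(h)$ has small average prediction error on new futures from $\model(\cdot\mid h)$.
\emph{(3) Autoregressive sampling.} Starting from the empty history, at each step use $\alpha(h_t)$ and the stored logit evaluations on $B_t$ to produce an approximate next-token logit vector, softmax-sample, append, and recompute $\alpha(h_{t+1})$ using $B_{t+1}$.
\emph{(4) TV analysis.} Use the chain rule for KL/TV to write $\tvd{\wh\model}{\model}\le\sum_{t}\E_{h_t\sim\wh\model}[\tvd{\wh\model(\cdot\mid h_t)}{\model(\cdot\mid h_t)}]$, then bound each term using Pinsker together with the $L^1$ logit error and a coupling between $\wh\model$ and $\model$ so that the expected per-step error is controlled by the on-average guarantee of \cref{eq:avg-closeness-weak}.

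\textbf{Main obstacle.} The assumption \eqref{eq:avg-closeness-weak} only guarantees small logit error \emph{on average} over histories drawn from $\model$ and futures drawn from $\model$, but at sampling time the trajectory is drawn from our learned distribution $\wh\model$, and the regression in step (2) uses futures from $\model$ conditioned on a particular $h$. Bridging these two requires a bootstrapping / induction on $t$: assume inductively $\tvd{\wh\model_{1:t}}{\model_{1:t}}\le\vep_t$ so that ``typical under $\wh\model$'' and ``typical under $\model$'' approximately agree, use this to justify that the regression futures used by our algorithm are an acceptable proxy, and then show the per-step TV increment is bounded by $\poly(\epavg,d,|\Sigma|)$ so that $\vep_T$ remains controlled. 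Carefully setting the sample sizes $N$, spanner size, and a truncation/clipping rule for ``atypical'' histories (where we fall back to a safe default distribution) so that the compounding error $\sum_t\vep_t$ stays below $\ep^\st$ is the main technical content; this is where the quantitative dependence on $\epavg,d,|\Sigma|,T$ in \cref{thm:approx-main-informal} comes from.
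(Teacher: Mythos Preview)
Your proposal captures the high-level skeleton—spanners of histories plus linear combinations of their logit vectors—but it is missing the two technical ideas the paper singles out as essential, and without them the argument has a genuine gap.

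\textbf{Distribution shift under queries.} Your Step~(4) bootstrapping assumes that if $\tvd{\wh\model_{1:t}}{\model_{1:t}}\le\vep_t$ is small then histories sampled from $\wh\model$ are ``typical under $\model$'' and so the on-average guarantee \cref{eq:avg-closeness-weak} transfers. This does not follow: small TV distance controls averages, not pointwise probability ratios, so your sampler can still produce $h_t$ with $\wh\model(h_t)\gg\model(h_t)$, on which the logit oracle may be arbitrarily far from any low-rank matrix. The paper explicitly flags this failure mode (Section~2.5) and avoids it structurally: the learning phase only ever queries $\Ologit$ on sequences of the form $y_{1:s-1}^{(j)}\circ y_s'\circ y_{s+1:t-1}^{(i)}$ built from \emph{independent} draws $y^{(i)},y^{(j)}\sim\model$ (\cref{lem:coupling-indep-draws}), and the sampling procedure is \emph{validated} on fresh $\model$-trajectories (the test in \cref{line:test-discrepancy}) so that the per-step guarantee \cref{eq:qt-exp-bound} holds in expectation over $y_{1:t}\sim\model$; then \cref{lem:coupling} transfers it to $\wh\model$ with no inductive bootstrapping. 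Your ``truncation/clipping rule for atypical histories'' is not a fix, since you cannot detect atypicality without knowing $\model$'s marginals.

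\textbf{Futures and coefficients.} Your Step~(2) draws futures $f\sim\model(\cdot\mid h)$ \emph{conditionally}, but \cref{def:avg-closeness-weak} only bounds the error when $f$ is drawn from the \emph{marginal} $\model_{s+1:t-1}$, independently of $h$; these can be very different, so uniform convergence over a $\poly(d)$-dimensional class does not deliver the prediction guarantee you assert. Moreover, your per-step regression recomputes $\alpha(h_t)$ by making fresh oracle calls on the running history $h_t$—exactly the queries whose correctness the assumption does not cover—and it also means your sampler is not oracle-free. The paper instead (i)~grows an \emph{adaptive} set of futures $\hat\MF_t$ whose size is bounded via the elliptic potential lemma (\cref{lem:approximate-elliptic-potential}), and (ii)~solves a single linear program \cref{eq:hatl-spanner,eq:cnorm-bound,eq:hatl-next} for all coefficients $c_1,\dots,c_{t+1}$ simultaneously with an explicit $\|c_s\|_\infty\le\beta$ constraint, so the sampler needs no fresh queries and the coefficient-blowup issue is handled directly. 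Your proposal does not supply substitutes for either mechanism.
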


We emphasize again that the assumptions we need for learning are exactly the same as empirically observed in \cref{fig:approx} (see also \cite{golowich2025sequenceslogitsreveallow})
up to the exponent in the relationship between $\ep$ and $d$ \---- we need that a given rank $d$ gives an $\ep$-approximation for $\ep = 1/d^{C^\st}$ for a sufficiently large constant $C^\st$ (while \cref{fig:approx} shows a decay of approximately $ 1/d^{0.1}$ in practice).
Nevertheless, we believe our results give the first end-to-end provable learning guarantees for a model that can serve as a reasonable approximation for modern language models. %

In \cref{sec:km}, we discuss how the guarantee of \cref{thm:approx-main-informal} yields a weak form of the celebrated \emph{Kushilevitz-Mansour} theorem \cite{kushilevitz1991learning} for learning sparse Boolean functions with queries, with respect to the uniform distribution. We emphasize that our setting goes much beyond that setting, as our model can capture complex distributions over sequences which are far from uniform (and in which the Fourier-analytic tools of \cite{kushilevitz1991learning} seem unlikely to apply).

\paragraph{Going forward: beyond the i.i.d.~model.} We remark that more broadly, \emph{modern LLM training pipelines are increasingly deviating from the traditional paradigm of learning from i.i.d.~data}. In addition to the settings of model stealing and distillation discussed above, this trend has emerged in various ways: for instance, LLMs are often trained using data that is generated in an online manner through \emph{reinforcement learning} \cite{christiano2017deep}, or using data fed to the algorithm in a certain order so as to resemble a \emph{curriculum} \cite{lee2025selfimproving}. %
These empirical advances call for a more thorough and principled treatment of the various ways a learning algorithm can interact with its ``target distribution'', beyond the traditional i.i.d.~setting. We view our result as a  step in this direction, as well as towards the larger goal of strengthening the connection between learning theory and practical language modeling research. Accordingly, in \cref{sec:conclusions} we discuss some directions for future work.

\section{Technical Overview} \label{sec:techover}

In this section, we overview the main technical ideas behind our algorithm and its analysis. The proof of \cref{thm:approx-main-informal} proceeds in two parts: first, in \cref{sec:exact-model-approx-oracle,sec:ep-dimred,sec:lp-coefficients,sec:alg-overview-intro}, we consider the problem of learning a model $\model$ with a logit oracle $\Ologit$ which returns answers that are $\epapx$-close to the true logits of $\model$. For the purposes of this technical overview, we assume further that $\model$ is of exact logit rank $d$.
In the full proof (in \cref{sec:analysis}), we will have to actually make the slightly modified assumption that \emph{the logits which $\Ologit$ returns over the course of the algorithm are consistent with low-rank matrices} (see \cref{def:bounded-logit-oracle} for a formal definition); however, the main ideas can be communicated without this additional complication. %
In \cref{sec:approx-model}, we discuss how to extend the preceding argument to prove our main result, which only assumes that the logit matrices of $\model$ are approximated (in the average sense of \cref{def:avg-closeness-weak}) by low-rank matrices. %

\subsection{Learning with an approximate logit oracle}
\label{sec:exact-model-approx-oracle}
Let us suppose we are given a model $\model \in \Delta(\Sigma^T)$ with logit rank $d$, and that we are given access to an \emph{$\epapx$-approximate logit oracle $\Ologit$}, which when queried on a sequence $y_{1:t} \in \Sigma^t$, returns $\Ologit(y_{1:t}) \in \BR^\Sigma$ satisfying $| \Ologit(y_{1:t})_{y_{t+1}} - \log \model(y_{t+1} \mid y_{1:t}) | \leq \epapx$ for all $y_{t+1} \in \Sigma$ (see \cref{def:approx-logit-oracle}). Our goal is to learn a language model $\hat \model$ (equipped with an efficient sampling algorithm) that is close to $\model$ in total variation distance. Note that, while $\model$ can be expressed as a rank-$d$ ISAN (\cref{def:isan}), our learning algorithm will be \emph{improper} in the sense that the distribution $\hat \model$ may not be a rank-$d$ ISAN. %

To represent the distribution $\hat \model$, we will exploit the fact that the logit matrices of $\model$ have rank at most $d$. Recall the definition of the logit matrix $\Lmat(\MH, \MF)$ corresponding to sets $\MH \subset \Sigma^\st, \MF \subset \Sigma^\st$ in \cref{def:extended-logit-matrix}. For $t \in [T]$, we will, thoughout this section, consider the logit matrix $\Lmat(\Sigma^{t-1}, \Sigma^{\leq T-t})$. Accordingly, for sequences $h \in \Sigma^{t-1}, f \in \Sigma^{\leq T-t}$ and $y \in \Sigma$, we denote the $(h,(f,y))$ entry of this logit matrix by $\Lmat(h, (f,y)) := \logit(y \mid h \circ f)$; we will often shorten this notation by writing $\Lmat(h,f') := \Lmat(h, (f,y))$, for $f' := f \circ y$. We will also denote the row of the logit matrix corresponding to $h$ by $\Lmat(h, \cdot): = (\Lmat(h, f'))_{f' \in \Sigma^{\leq T-t} \times \Sigma}$.

Given the low-rank structure of $\Lmat$, a natural way to specify the distribution $\hat\model$ (which parallels that taken in prior works on learning \emph{low-rank} distributions \cite{liu2025model,mahajan2023learning}) is via the following data:
\begin{itemize}
\item For each $t \in [T]$, a collection of ``basis histories" $h_{t,1}, \ldots, h_{t,\ds} \in \Sigma^{t-1}$, where $\ds \in \BN$ is a parameter (which will in general be larger than the logit rank $d$).

\item For each $t \in [t]$, $i \in [\ds]$, and $y_t \in \Sigma$, a way of expressing the the row $\Lmat(h_{t,i} \circ y_t, \cdot)$ of the logit matrix as a linear combination of the ``basis rows'' at step $t+1$: in particular, this data consists of vectors $c_{t,i,y_t} \in \BR^{\ds}$, for each $i \in [\ds], y_t \in \Sigma$, satisfying
\begin{align}
\Lmat(h_{t,i} \circ y_t, \cdot) \approx \sum_{j=1}^\ds c_{t,i,y_t,j} \cdot \Lmat(h_{t+1,j}, \cdot) \quad \forall i \in [\ds],\ y_t \in \Sigma.\label{eq:lmat-basis-change}
\end{align}
\end{itemize}
To start, we shall assume that the histories $h_{t,i}$ and coefficients $c_{t,i,y_t,j}$ as above are given to us; in the course of the below arguments, we shall explain how the algorithm can actually learn them.

Given data as described above, there is a natural way to sample from a distribution $\hat\model$ approximating $\model$: having already sampled tokens $y_{1:t-1}$, we inductively maintain coefficients $\hat c_{t,1}, \ldots, \hat c_{t,\ds} \in \BR$ for which

\begin{align}
\Lmat(y_{1:t-1}, \cdot) \approx \sum_{i=1}^\ds \hat c_{t,i} \cdot \Lmat(h_{t,i}, \cdot).\label{eq:lmat-induction}
\end{align}
Then defining $\hat \ell_t \in \BR^\Sigma$ by $\hat \ell_t(y_t) := \sum_{i=1}^{\ds} \hat c_{t,i} \cdot \Lmat(h_{t,i}, (\emptyset, y_t))$ (which contains the entries of the vector $\sum_{i=1}^{\ds} \hat c_{t,i} \cdot \Lmat(h_{t,i}, \cdot)$ corresponding to the empty future $\emptyset$), we sample $y_t \sim \softmax(\hat \ell_t)$. The approximate equality \cref{eq:lmat-induction} ensures that the distribution of $y_t$ is close to that of $\model(y_t = \cdot \mid y_{1:t-1})$. Finally, we update the coefficients $\hat c_{t+1,1}, \ldots, \hat c_{t+1,\ds} \in \BR$ by setting
\begin{align}
\hat c_{t+1,j} := \sum_{i=1}^\ds \hat c_{t,i} \cdot c_{t,i,y_t,j} \quad \forall j \in [\ds],\label{eq:hatc-update}
\end{align}
which ensures that
\begin{align}
\sum_{j=1}^\ds \hat c_{t+1,j} \cdot \Lmat(h_{t+1,j}, \cdot) = \sum_{j=1}^\ds \sum_{i=1}^\ds \hat c_{t,i} \cdot c_{t,i,y_t,j} \cdot \Lmat(h_{t+1,j}, \cdot)  & \nonumber\\
\approx \sum_{i=1}^\ds \hat c_{t,i} \cdot \Lmat(h_{t,i} \circ y_t, \cdot) \approx \Lmat(y_{1:t}, \cdot),\label{eq:complete-induction}
\end{align}
where the first approximate equality uses the ``basis change'' property of \cref{eq:lmat-basis-change} and the second approximate equality uses the inductive hypothesis of \cref{eq:lmat-induction}. Moreover, in the final approximate equality we use the fact that $\Lmat(h_{t,i} \circ y_t, (f,y)) = \Lmat(h_{t,i}, (y_t \circ f, y))$ (which allows us to apply \cref{eq:lmat-induction}). Thus \cref{eq:complete-induction} ensures that the inductive hypothesis holds at step $t$. 

Unfortunately, the above argument suffers from a few significant gaps, which necessitate substantial modifications:
\begin{enumerate}
  \item First, it is not at all clear how we can ensure \cref{eq:lmat-basis-change} holds -- this (approximate) equality involves vectors with exponentially many entries (indexed by sequences of tokens of length up to $T-t$), so it is of course infeasible to even compute these vectors.
\item Even if we can resolve the first issue, the definition of the vector $\hat c_{t+1} \in \BR^{\ds}$ in terms of $\hat c_t \in \BR^{\ds}$ in \cref{eq:hatc-update} will lead, in general, to the size of the entries of these vectors growing exponentially in $t$. This results from the fact that the coefficients $c_{t,i,y_t, j}$ will be of absolute value $\Theta(1)$, meaning that the best bound we can establish is of the form $\| \hat c_{t+1} \|_\infty \leq O(\ds) \cdot \| \hat c_t \|_\infty$, and thus $\| \hat c_t \|_\infty \leq O(\ds)^t$ for $t \in [T]$. 
\end{enumerate}
In the subsequent sections, we discuss how to resolve these issues: we discuss the first issue above in \cref{sec:ep-dimred} and the second issue in \cref{sec:lp-coefficients}. While prior work on learning low rank distributions \cite{mahajan2023learning,liu2025model} has faced obstacles analogous to the above ones, as we discuss below existing techniques fail in the setting that $\model$ is of low \emph{logit} rank and thus new ideas are required. 

\subsection{Approximating rows of the logit matrix via the elliptical potential lemma}
\label{sec:ep-dimred}
In this section, we address the first issue mentioned above, namely the fact that the vectors $\Lmat(h_{t,i} \circ y_t, \cdot)$ and $\Lmat(h_{t+1,j}, \cdot)$ in \cref{eq:lmat-basis-change} have exponentially many entries. A very natural way to attempt to deal with this issue is as follows: given a set of histories $h_1, \ldots, h_n \in \Sigma^{t-1}$ (e.g., the histories $h_{t-1,i} \circ y_{t-1},\ h_{t,j}$ in the above context), we approximate the vectors $\Lmat(h_i, \cdot)$ by sampling $m$ futures $f$ from $\model$ and letting $L_i \in \BR^m$ denote the vector whose entries are $\Lmat(h_i, f)$, where $f$ ranges over the $m$ sampled futures. %
Indeed, this is the approach taken by previous works studying the low-rank problem \cite{liu2025model,mahajan2023learning}.

In what sense should the vectors $L_i \in \BR^{m}$ ``approximate'' the rows $\Lmat(h_i, \cdot)$ for $i \in [n]$? Ideally, we would want that if (and only if) any linear combination between these rows holds in an approximate sense (as in \cref{eq:lmat-basis-change}), then the same linear combination also holds approximately between the vectors $L_i \in \BR^{m}$. More precisely, we would want that for any coefficients $c_1, \ldots, c_n, c_1', \ldots, c_n' \in \BR$, we have
\begin{align}
 \sum_{i=1}^n c_i \cdot \Lmat(h_i, \cdot) \approx \sum_{i=1}^n c_i' \cdot \Lmat(h_i, \cdot) \quad \Leftrightarrow \quad \sum_{i=1}^n c_i \cdot L_i \approx \sum_{i=1}^n c_i' \cdot L_i.\label{eq:ideal-approx}
\end{align}
Unfortunately, it is straightforward to see that \cref{eq:ideal-approx} cannot hold for the above strategy for constructing the vectors $L_i$, even if we are lenient as to what sense the approximation ``$\approx$'' above holds. The issue arises because only observing a bounded number of samples from the conditional distributions given $h_i$ (i.e., $\model(\cdot \mid h_i)$) is not sufficient to learn some entries of $\Lmat(h_i, \cdot)$ corresponding to futures with extremely low probability.

This is an issue  more generally with any approach that aims to use samples from distributions to learn properties of new distributions obtained by performing ``linear algebra on their logits'' (as in \cref{eq:ideal-approx}). As a trivial example, consider the vectors $L = (M,0), L' = (M, M/2)$, for some $M \gg \log m$;  $m$ samples drawn from $\softmax(L)$ or $\softmax(L')$ will all be equal to $1$ with overwhelming probability, and so are insufficient to allow us to distinguish between, e.g., $\softmax(\frac{1}{M} \cdot L)$ and $\softmax(\frac 1M \cdot L')$, which are $\Omega(1)$-far in total variation distance. %

\paragraph{Our approach: using the elliptic potential lemma to \emph{adaptively} choose futures.}
The issue identified above seems quite serious: having only chosen the histories $h_1, \ldots, h_n$, it is not clear a priori \emph{which} values of $c_1, \ldots, c_n$ we need that \cref{eq:ideal-approx} holds for, and in order for a ``sampling-based'' approach such as the one discussed above to work, it seems that we would have to sample from the distribution induced by $\sum_{i=1}^n c_i \cdot \Lmat(h_i, \cdot)$ for any such tuple $(c_1, \ldots, c_n)$. Moreover, it is of course infeasible to iterate over all possibilities for $(c_1, \ldots, c_n)$.

Due to the infeasbility of deciding which tuples $(c_1, \ldots, c_n)$ we have to ``account for'' \emph{a priori}, we opt for the following alternative approach: rather than sampling the futures $f$ in advance, we will instead choose these futures \emph{adaptively}, based on the coefficients $c_1, \ldots, c_n$ that arise in the learning algorithm. In particular, having chosen some set of futures $\MF_t \subset \Sigma^{\leq T-t}$ at each step $t$, let us try to implement the sampling algorithm discussed in \cref{sec:exact-model-approx-oracle} where, whenever we encounter a vector $\Lmat(h_t, \cdot)$ (for some history $h_t \in \Sigma^{t-1}$ at step $t$), we replace it with the vector $(\Lmat(h_t, f))_{f \in \MF_t}$. Technically, we can only approximate this vector up to roughly error $\epapx$, since we have only assumed query access to an $\epapx$-approximate logit oracle $\Ologit$ for $\model$. This approximation will suffice for our needs. Our main insight is as follows:

\begin{idea}
  \label{idea:1}
We can run the sampling algorithm, as described above with our current approximations using $\calF_t$. Then we can verify whether the tokens we sampled were from the correct distribution; if they are, then we already have a good sampler, and if not, then we can add an element to  $\calF_t$ that adds a new ``dimension" (formalized with an elliptical potential argument described below).
\end{idea}
In more detail, the verification mentioned in \cref{idea:1} checks whether an (approximate) equality of the form \cref{eq:complete-induction} holds with respect to the \emph{entire} rows of the logit matrix $\Lmat(\Sigma^{t-1}, \Sigma^{T-t} \times \Sigma)$. %
Note that we can efficiently perform such a verification by sampling futures from $\model(\cdot \mid y_{1:t})$ and checking whether the corresponding logits agree approximately on the sampled futures. %
Moreover, if the logits do not agree (i.e., the verification fails), then one of the sampled futures can be added to $\MF_t$, and repeat the entire procedure. 

As described in \cref{idea:1}, if the verifications all pass (over multiple independent draws of $y_{1:T}$), then a simple concentration inequality yields that \cref{eq:complete-induction} indeed holds with high probability, which will imply we have a good sampler. %
But how do we ensure that this procedure will ever terminate, i.e., that we won't keep on adding additional futures to the set $\MF_t$ for more than a polynomial number of iterations? The key fact is to use the fact that $\model$ has logit rank $d$, which means that for each history $h_t \in \Sigma^{t-1}$ and and future $f_t \in \Sigma^{\leq T-t}$, there are vectors $\phi(h_t), \psi(f_t) \in \BR^d$, respectively, so that $\Lmat(h_t, f_t) = \langle \phi(h_t), \psi(f_t)\rangle$. If we have chosen coefficients $\hat c_{t+1}, \hat c_t$ so that \cref{eq:complete-induction} holds ``with respect to the current set of futures $\MF_t$'', then we have that for each $f_t \in \MF_t$,
\begin{align}
\left\langle \sum_{j=1}^{\ds} \hat c_{t+1,j} \cdot \phi(h_{t+1,j}), \psi(f_t) \right\rangle \approx \left\langle \sum_{i=1}^\ds \hat c_{t,i} \cdot \phi(h_{t,i} \circ y_t), \psi(f_t) \right\rangle\label{eq:holds-for-ft}
\end{align}
If on the other hand our ``verification'' process fails for \cref{eq:complete-induction}, it means that we can produce some future $ f_t^\st$ so that \cref{eq:holds-for-ft} fails to hold when $f_t$ is replaced by $ f_t^\st$, and then we can add $f_t^\st$ to $\MF_t$. 

Finally, we observe that it is a well-known consequence of the \emph{elliptic potential lemma} that this phenomenon can only happen (i.e., $f_t^\st$ as above exists and is added to $\MF_t$) roughly $\tilde O(d)$ times at each step $t$ (see \cref{lem:approximate-elliptic-potential} for a formal statement). Roughly speaking, the intuition for this fact is that one can only be ``surprised'' by ``new directions'' in $\BR^d$ roughly $\tilde O(d)$ times. We remark that very similar statements have been established in the context of, e.g., linear bandit problems (see e.g. \cite[Lemma 19.4]{lattimore2020bandit}) where this fact manifests as a bound on the \emph{eluder dimension} of linear functions.  
Similar phenomena of bounding the ``number of surprises'' to establish learning occur in reinforcement learning \cite{xie2022role} and sequential prediction \cite{block2024performance}.

\subsection{Controlling the growth of coefficients via linear programming}
\label{sec:lp-coefficients}
The previous subsection addresses the problem of choosing a ``representative'' set of futures that allows us to implement the procedure in \cref{sec:exact-model-approx-oracle} without having to operate on vectors with exponentially many entries. Now we return to the second issue mentioned at the end of \cref{sec:exact-model-approx-oracle}, namely that of the magnitude of the coefficient vectors $\hat c_t$ growing exponentially with $t$. %
  A similar issue of ``exponential blowup'' arises in the low-rank case \cite{liu2025model,mahajan2023learning}, but as we shall see, prior techniques fail in our low \emph{logit}-rank setting. 

In particular, \cite{liu2025model} handled this issue by means of a certain \emph{projection step} at each iteration $t$. 
In our setting, this technique would choose coefficients $\hat c_{t+1}$ at each step $t$ which are bounded in magnitude by, roughly speaking, minimizing the \emph{KL divergence} between the distributions over futures induced by the vectors $\sum_{j=1}^{\ds} \hat c_{t+1,j} \cdot \Lmat(h_{t+1,j}, \cdot)$ and $\sum_{i=1}^\ds \hat c_{t,i} \cdot \Lmat(h_{t,i} \circ y_t, \cdot)$.  %
When working with probability distributions directly (as in the low-rank case), this projection with respect to KL divergence can be implemented efficiently due to the convexity of the KL divergence. 
However, in our low logit-rank setting, it is necessary to exponentiate the logits of these vectors (and divide by appropriate normalizing constants) to obtain such distributions, so it is no longer clear how to efficiently implement such a projection step. One could attempt to use other notions of distance between distributions as opposed to KL divergence, but similar issues involving nonconvexity arise. Moreover,  because we can only perform computations directly on the polynomial-length vectors induced by only considering futures in the set $\MF_t \subset \Sigma^{\leq T-t}$ at each step $t$ (as discussed in \cref{sec:ep-dimred}), there are additional technical difficulties in ensuring that any such notion of distributional distance is preserved when we restrict to futures in $\MF_t$.

\paragraph{Our approach: ``sampling by linear programming''.} 

To address these issues, we propose a new approach to sample from some distribution $\hat\model$ which is close to $\model$: 
\begin{idea}
\label{idea:2}
Rather than trying to project the coefficients $\hat c_{t+1}$ at each step in a ``greedy'' manner, we instead solve a linear program which involves computing \emph{all} of the coefficient vectors $\hat c_1, \ldots, \hat c_{t+1}$ \emph{at each step $t$}
\end{idea}
To explain this linear program, suppose that we have chosen a a set of ``basis histories'' $h_{t,1}, \ldots, h_{t,\ds} \in \Sigma^{t-1}$ at each step $t$, as well as a set of ``representative futures'' $\MF_t \subset \Sigma^{\leq T-t}$ (recall from \cref{sec:ep-dimred} that $\MF_t$ will be periodically updated over the course of the algorithm). For each $t \in [T]$ and $i \in [\ds]$, let us define $L_{t,i} \in \BR^{\MF_t}$ by $L_{t,i}(f) = \Lmat(h_{t,i}, f)$ for $f \in \MF_t$. Then the vectors $L_{t,i}$ should be interpreted as ``representative'' vectors for the rows $\Lmat(h_{t,i}, \cdot)$ of the logit matrix $\Lmat$.

Suppose that we have already sampled a sequence of tokens $y_{1:t}$, and that we are attempting to sample $y_{t+1}$ from some distribution which is close to $\BP(y_{t+1} = \cdot \mid y_{1:t})$. To do so, we compute vectors $\hat c_1, \ldots, \hat c_{t+1} \in \BR^{\ds}$ and $\hat L_s \in \BR^{\MF_s}$ for each $s \in [t+1]$ solving the following linear program:
\begin{subequations}
\begin{align}
\hat L_s(f)= \sum_{i=1}^\ds \hat c_{s,i} \cdot L_{s,i}(f) & \quad \forall s \in [t+1], \  f \in \MF_s \label{eq:lp-intro-1}\\
\hat L_{s+1}(f) = \sum_{i=1}^{\ds} \hat c_{s,i} \cdot L_{s,i}(y_{s}, f) & \quad \forall s \in [t], \  f \in \MF_{s+1} \label{eq:lp-intro-2}\\
\| \hat c_s \|_\infty \leq 2  & \quad \forall s \in [t+1] \label{eq:lp-intro-3}.
\end{align}
\end{subequations}
Roughly speaking, the above program expresses similar relationships between the coefficient vectors $\hat c_1, \ldots, \hat c_{t+1}$ as in \cref{sec:exact-model-approx-oracle}, but we no longer compute the coefficients $c_{t,i,y_t,j}$ which allow us to move from step $s$ to step $s+1$ (see  \cref{eq:lmat-basis-change}) \emph{in advance}. Instead, we are directly computing the coefficient vectors $\hat c_s$ at all steps $s \in [t+1]$ simultaneously, which allows us to avoid the factor of $\ds$ blowup at each step incurred by the previous ``greedy'' approach.

One should interpret the variables $\hat L_s$ in \cref{eq:lp-intro-1,eq:lp-intro-2} as being an approximate ``representative'' vector for $\Lmat(y_{1:s}, \cdot)$ for each $s \in [t+1]$, in the sense that $\hat L_s(f) \approx \Lmat(y_{1:s}, f)$ for each $f \in \MF_s$. Since, as we will ensure, $\MF_s$ will contain all single-token futures, we will get that $\hat L_s(y) \approx \Lmat(y_{1:s}, y) = \logit(y_{s+1} \mid y_{1:s})$, for each $s \in [t+1]$. Specializing to $s=t+1$, we see that the softmax of $\hat L_{t+1}$ restricted to $\Sigma$ yields  a distribution which is close to $\softmax(\logit(y_{t+1} = \cdot \mid y_{1:t})) = \model(y_{t+1} = \cdot \mid y_{1:t})$.

Of course, the above argument only holds if the program is feasible. We will ensure that this is the case by an appropriate choice of the histories $h_{t,i}$ (see \cref{lem:feasibility}). The argument that a solution to the program yields a good approximation to the distribution of the next token is carried out in \cref{lem:most-covered,lem:sampling-accuracy}.

We have not yet addressed the issue of how the algorithm can actually learn the histories $h_{t,i} \in \Sigma^{t-1}$, for $t \in [T], i \in [\ds]$. This turns out to be quite straightforward: given some set of representative futures, we simply sample a large polynomial number of histories $h\^j \in \Sigma^{t-1}$ from $\model$, and compute a \emph{barycentric spanner} of the vectors $(\Lmat(h\^j, f))_{f \in \MF_t}$ for each $t \in [T]$; see \DistSpanner, \cref{alg:dist-spanner-simple}. It follows from standard concentration arguments that the resulting spanner will be good enough to ensure feasibility of the program \cref{eq:lp-intro-1,eq:lp-intro-2,eq:lp-intro-3} with high probability.

\subsection{Putting it all together: summary of the learning algorithm}
\label{sec:alg-overview-intro}
Finally, we summarize the overall structure of our learning algorithm. The below procedure simplifies several details; see \cref{alg:learning-approx} for full details. 
\begin{itemize}
\item The sets of futures $\MF_t$, $t \in [T]$, described in \cref{sec:ep-dimred} are initialized arbitrarily. Each time some future is added to some $\MF_t$, we restart the following procedure:
\begin{enumerate}
\item For each $t \in [T]$, we use the \DistSpanner algorithm (\cref{alg:dist-spanner-simple}) to construct a ``basis'' of histories $h_{t,1}, \ldots, h_{t,\ds}$ (for an appropriate choice of $\ds$) at step $t$. This allows us to define vectors $L_{t,i} \in \BR^{\MF_t}$, by $L_{t,i}(f) \approx \Lmat(h_{t,i}, f)$ ($f \in \MF_t$), as described in \cref{sec:lp-coefficients}.  (The approximation arises here since we can only estimate $\Lmat(h_{t,i}, f)$ up to $\epapx$ accuracy using the approximate logit oracle $\Ologit$ for $\model$.)
\item Using the vectors $(L_{t,i})_{t \in [T], i \in [\ds]}$, we repeat the following procedure a sufficiently large number of times:
\begin{enumerate}
  \item We sample a sequence of tokens $y_{1:T} \sim \model$.
\item We iterate over $t \in [T]$: at each time step $t$, we solve the linear program in \cref{eq:lp-intro-1,eq:lp-intro-2,eq:lp-intro-3} to compute coefficients $\hat c_1, \ldots, \hat c_{t+1}$ and vectors $\hat L_1, \ldots, \hat L_{t+1}$.
\item For each $t \in [T]$, we test if the vectors $\hat L_s$, $s \in [t+1]$, give good approximations of $\Lmat(y_{1:s-1}, \cdot)$ in roughly the sense of \cref{eq:ideal-approx} with respect to the coefficients $\hat c_s$ computed in the previous step, by sampling many futures from $\model(\cdot \mid y_{1:s-1})$. If this is not the case, we can find some $t \in [T]$ together with a future $f_t^\st$ which violates this approximation and thus can be added to $\MF_t$ as described in \cref{sec:ep-dimred}. In such a case, we restart the entire procedure.
\end{enumerate} 
\item If all of the tests pass in the previous step, then the algorithm terminates. At this point, we can use the vectors $L_{t,i}$ and the futures $\MF_t$ of the final iteration to implement the sampling procedure as described in \cref{sec:lp-coefficients} (which is essentially the same as the steps in the inner loop above, but where the token $y_{t+1}$ is generated using $\hat L_{t+1}$ at each step $t$ as opposed to being sampled from $\model$.)
\end{enumerate}
\end{itemize}

\subsection{Learning a language model with approximately low logit rank}
\label{sec:approx-model}
In the previous subsections we have described how to efficiently learn a language model $\model$ which is of low logit rank using access to an $\epapx$-approximate logit oracle $\Ologit$ for $\model$. It turns out that the same argument allows us to show that our algorithm works assuming only that the responses of the oracle $\Ologit$ are close to those of some low-rank logit matrices, instead of directly assuming that $\model$ is of low-logit rank (see \cref{def:bounded-logit-oracle}). To extend our result to the setting where $\model$ is only \emph{approximately} of low logit rank in the sense of \cref{def:avg-closeness-weak}, as discussed in the present section, we shall need this modified result. 

In more standard PAC learning settings (in which an algorithm is only allowed access to i.i.d.~samples from an unknown distribution $\mathbb{Q}$), obtaining results for learning with $\vep$-misspecification for polynomially small $\vep$ (e.g., in total variation distance) is typically straightforward. In particular, suppose an algorithm uses $n$ samples from an unknown distribution $\mathbb{Q}$ which is known to belong to some class $\MD$ of distributions. If instead the true distribution $\mathbb{Q}$ is only $\ll 1/n$-close to some distribution $\mathbb{Q}' \in \MD$, then the same algorithm will still succeed with high probability, since $n$ samples from each of $\mathbb{Q}$ and $\mathbb{Q}'$ can be coupled so that they are all equal with high probability. 

In the setting of learning with queries, such an argument, which guarantees learning with a modest amount of misspecification, no longer goes through. For instance, in our setting of learning with a logit oracle $\Ologit$, the learning algorithm could, in principle, query $\Ologit(y_{1:t})$ for some sequences $y_{1:t}$ with vanishingly small probability under $\model$. In such a case, if $\model$ is only promised to have logits close to those of some low-rank matrix for \emph{typical sequences} $y_{1:t} \sim \model$, then the logits for $\model$ on sequences $y_{1:t}$ queried by the algorithm \emph{might be very far from those of the low-rank matrix}! 

Indeed, in previous work on learning a low-rank distribution \cite{liu2025model}, such a pathological situation cannot be ruled out. In particular, the algorithm of \cite{liu2025model} makes queries to a conditional sampling oracle (which can be viewed as an analogue of our logit oracle $\Ologit$)\footnote{Formally, a \emph{conditional sampling oracle} $\Osamp$ for $\model$ takes as input $y_{1:t} \in \Sigma^t$ and returns a sample $y_{t+1} \sim \model(\cdot \mid y_{1:t})$.
} on certain sequences $y_{1:T}$ which are constructed inductively over the course of $T$ steps. Each step might choose some token $y_{t}$ (given $y_{1:t-1}$) with probability, say $\gg 2 \cdot \model(y_t \mid y_{1:t-1})$. The upshot is that the algorithm might query certain sequences $y_{1:T}$ with probability $\geq \exp(\Omega(T)) \cdot \model(y_{1:T})$. 

As we show in \cref{sec:avg-apx-proof}, our algorithm will not suffer from such an issue. In particular, we can couple the execution of \cref{alg:learning-approx} to the draw of \emph{polynomially many} i.i.d.~samples $y_{1:T}\^1, \ldots, y_{1:T}\^N$ from $\model$ so that each query to $\Ologit$ made by the algorithm is on a sequence of the form $y_{1:s-1}\^j \circ y_s' \circ y_{s+1:t-1}\^i \circ y_t'$, for some $s < t$ and $y_s', y_t' \in \Sigma$ (see \cref{lem:coupling-indep-draws}). In words, such a sequence is formed by concatenating subsequences two of the i.i.d.~sequences and appending an arbitrary token to each subsequence. Thus, our learning guarantee extends to any $\model$ whose logits are $\epavg$-close to those of a low-rank model on \emph{such ``concatenated'' sequences} (\cref{thm:approx-main}), which is exactly what \cref{def:avg-closeness-weak} requires. 

One might object that the resulting misspecification allowed is still somewhat small: we require that the amount of misspecification $\epavg$ satisfies $\epavg \leq 1/\poly(T, |\Sigma|, d)$, for a sufficiently large $\poly$ (specified in \cref{thm:approx-main}). We believe that our work nevertheless serves as a starting point for follow-up work on this problem. 
While it is straightforward to show that such inverse polynomial dependence on $T$ and $|\Sigma|$ is necessary, we are less sure about what whether inverse polynomial dependence on $d$ is required;  see \cref{sec:conclusions} for further discussion.

\section{Related Work} \label{sec:relatedwork}

Our work can be seen as part of a long line of research on learning sequence and latent variable models. 
Perhaps the most well-studied sequence models are hidden Markov models (HMMs). 
Similar to low logit rank models, noisy parities pose a barrier to learning HMMs from samples, which has motivated considering learning HMMs under natural but restrictive assumptions \cite{mossel2005learning, hsu2012spectral, balle2014spectral}. 
A major change in perspective came with the conditional query model considered by \cite{mahajan2023learning} which allowed the noisy parity barrier to be circumvented \cite{liu2025model}. 
Due to the close connection to our work, we discuss this line of work in later in the section. 
More broadly, there is a rich literature on learning latent variable models, such as mixtures of Gaussians (see \cite{liu2022clustering} and references therein), and POMDPs (see\cite{golowich2022planning,golowich2022learning,chen2024near} and references therein).

Of particular interest is the study of linear dynamical systems where state transitions are linear and observations are linear functions. 
These play a foundational role in control theory and time series analysis \cite{kailath1980linear, chen1999linear,hazan2025introductiononlinecontrol} and thus have been extensively studied across many communities. 
Interesting connection between linear dynamical systems and language models has been through the study of state space models
\cite{gu2021efficiently, gu2023mamba, katharopoulos2020transformers, dao2024transformers, gupta2022diagonal, gu2021combining} which has led several empirical successes in language modeling. 
State space models are closely related to low logit rank as discussed in \cite{golowich2025sequenceslogitsreveallow}.

Learning problems inspired from modern empirically successful models such as neural networks and transformers have been studied extensively in recent years (see \cite{allen2019learning,chen2025provably,misiakiewicz2023lectureslinearizedneuralnetworks, bartlett2021deeplearningstatisticalviewpoint} and references therein).
Similarly, there has been a long line of work on understanding the expressive power of models \cite{merrill2022saturated} and mechanistic interpretability \cite{bereska2024mechanistic}, with the aim of understanding the inner workings of modern language models.
Unfortunately, due to the inherent complexity of these models, in order to facilitate theoretical analysis, these works often consider extremely simplified settings such as bounded depth, restricted architechtures and scaling limits, and thus fail to capture modern language models in a realistic manner.

Another interesting line of work that has been proposed to understand language models from a theoretical perspective is language generation in the limit \cite{kleinberg2024language}. 
The insight here is that the task of language generation (seen in general formal language setting) can be achieve even though ``finding'' the underlying model is intractable. 
Starting from this, the framework has been used to give \emph{qualitative} insights into language modeling \cite{kalavasis2025limits,karbasi2025possibility}, but due to the abstract nature of the framework, cannot directly be used to reason about modern language models.

Finally, we note that learning from queries has a long tradition in learning theory \cite{angluin1987learning, kushilevitz1991learning}. 
More recently, conditional queries have been studied in property testing \cite{chakraborty2013power} and learning theory \cite{mahajan2023learning,liu2025model} where they have been shown to circumvent statistical and computational barriers.

\paragraph{Low Rank Models} 
Perhaps the most closely related line of work to our is the recent study of low rank language models \cite{liu2025model,mahajan2023learning}.
Motivated by the study of algorithms for learning HMMs, the works consider the rank of the matrix of conditional probabilities of futures given histories.
Formally, a low rank language model is defined as follows.

\begin{definition}[Low Rank Language Model]
A low rank language model $\model$ has rank $d$ if for any subsets $\calH, \calF \subseteq \Sigma^*$, the matrix given by
\[
\{ \model[f|h] \}_{h \in \calH, f \in \calF}
\]
has rank at most $d$.
\end{definition}

As noted above, low rank language models captures HMMs as a special case since a HMM with $d$ hidden states has rank at most $d$.
Similar to our earlier definitions, this notion involves an exponentially large matrix with rows being indexed by histories and columns indexed by futures being low rank.  
In fact, we can easily verify that if for $f = y_{t+1:t+s}, h = y_{1:t}$, we define $\logit_{\model}[f|h] =  \logit_{\model}[y_{t+s}|y_{1:t+s-1}] + \dots + \logit_{\model}[y_{t+1}|y_{1:t}]$
then a low logit rank model has the property that the matrix with entries $\{\logit_{\model}[f|h]\}_{h \in \calH, f \in \calF}$ is low rank.

While this notion was an inspiration for our work, it has crucial limitations for approximating modern language models. 
The key difference is that this low rank is in logit space rather than probability space.  
This seemingly minor technical difference is crucial for being able to approximate real language models.  
It is easily verified experimentally that modern language models are \emph{not close to low rank in probability space}.  
In particular, for any reasonable choice of $n$, if we sample $n$ histories $h_1, \dots , h_n$, then the distributions on futures given by $\model[\cdot | h_1], \dots , \model[\cdot | h_n]$ are essentially disjoint. 
Therefore, the different rows (in the probability matrix) corresponding to $h_1, \dots , h_n$ essentially have disjoint support and thus there cannot be any linear dependencies.
This example illustrates a key difference between low rank and low logit rank models, and as formalized in \cite{golowich2025sequenceslogitsreveallow}, for tasks such the above ``recall" task, low logit rank can give exponentially more efficient representations compared to low rank models, which is a crucial property that language models seem to possess empirically and thus might serve as starting point for understanding the success of modern language models.

\section{Algorithm} \label{sec:algo} 

In this section, we describe the algorithm which establishes \cref{thm:approx-main-informal}. First, in \cref{sec:prelims}, we will discuss some preliminaries pertaining to the logit oracles we use; we then give the formal algorithm description in \cref{sec:algorithm-formal}. 

\subsection{Preliminaries on the oracles} \label{sec:prelims} 

Recall that we consider \emph{autoregressive (logit) oracles}, namely functions which take as input a sequence of tokens and output (estimates) of the logits for next token. %

\cref{def:exact-logit-oracle} introduced an \emph{exact logit oracle}, whose answers gave the next token logits exactly; our results additionally hold under the weaker notion of \emph{approximate logit oracle} in \cref{def:approx-logit-oracle} below. 
We let $\projOne$ denote the ``mean-centering'' projection: $\projOne(v) = v - \frac{1}{|\Sigma|} (\mathbf{1}^\t v) \cdot \mathbf{1}$ for any $v \in \BR^{|\Sigma|}$. %
\begin{definition}[$\ep$-approximate logit oracle]
  \label{def:approx-logit-oracle}
For $\ep > 0$, an \emph{$\ep$-approximate logit oracle} $\Ologit$ takes as input $y_{1:t} \in \Sigma^t$ a returns a vector $\Ologit(y_{1:t}) \in \BR^\Sigma$ satisfying $\| \projOne\Ologit(y_{1:t}) -\logit_\model(y_{t+1} = \cdot \mid y_{1:t}) \|_\infty \leq \ep$. 
\end{definition}
Further, our algorithm will use a \emph{sampling oracle} for $\model$, which simply outputs samples $y_{1:T} \sim \model$. Note that such an oracle can be implemented (up to $\ep T$ total variation distance error) using $T$ queries to an $\ep$-approximate logit oracle (\cref{def:approx-logit-oracle}); for simplicity, we allow direct access to such a sampling oracle.

In addition to requiring that a model's logits be low rank, we need some mild boundedness conditions on the low-rank decompositions of these matrices. To state them, we first make the following technical definition regarding the boundedness of the factors for a low-rank factorization of a matrix: %
\begin{definition}
\label{def:bounded-logit}
Fix $\alpha > 0$ and $d \in \BN$ For $t \in [T]$, A matrix $\logitmatrix \in \BR^{\Sigma^t \times (\Sigma^{\leq T-t-1} \times \Sigma)}$ is defined to be \emph{$\alpha$-bounded of rank $d$} if for each $h \in \Sigma^t, f \in \Sigma^{\leq T-t-1}, y \in \Sigma$, there are vectors $\Bhist_h, \Bfut_{f,y} \in \BR^d$ satisfying $\| \Bhist_h \|_2 \leq \alpha, \| \Bfut_{f,y} \|_2 \leq \alpha$ so that
$
\logitmatrix_{h,(f,y)} = \langle \Bfut_{f,y}, \Bhist_h \rangle\nonumber.
$
To keep notation uncluttered, when considering such $\logitmatrix$, we often denote the $(h, (f,y))$ entry of $\logitmatrix$ as $\logitmatrix(h, (f,y))$. 
\end{definition}

Next, we strengthen the condition of \cref{def:avg-closeness-weak} to require that the matrices $\logitmatrix\^s$ there are actually $\alpha$-bounded of rank $d$.
\begin{definition}
\label{def:avg-closeness-true}
Fix $\ep, \alpha > 0$ and a distribution $\model \in \Delta(\Sigma^T)$. We say that $\model$ has \emph{$\ep$-approximate $\alpha$-bounded logit rank $d$} if there are matrices $\logitmatrix\^s$ for $0 \leq s \leq T$ satisfying the conditions of \cref{def:avg-closeness-weak} which are moreover $\alpha$-bounded of rank $d$ (per \cref{def:bounded-logit}).
\end{definition}
We remark that our query and computational cost bounds will depend only \emph{logarithmically} on the bound $\alpha$, meaning that requiring $\model$ to satisfy \cref{def:avg-closeness-true} is a very mild condition over that of \cref{def:avg-closeness-weak}. For instance, if all of the entries of the rank-$d$ matrix $\logitmatrix\^s$ are bounded by $\lambda$, then we can take $\alpha = O(\lambda \cdot |\Sigma|^T)$, so that $\log \alpha = O(T \log |\Sigma| + \log \lambda)$.

Finally, as a tool to prove our main result, we will use the below definition, which describes when a logit oracle returns logit estimates which are low-rank and bounded in the sense of \cref{def:bounded-logit}:
\begin{definition}[Approximately bounded low-rank logit oracle]
  \label{def:bounded-logit-oracle}
Fix $\ep, \alpha > 0$ and $d \in \BN$ as well as a logit oracle $\Ologit$. Suppose an algorithm calls $\Ologit$ on a collection of sequences $y\^1_{1:t_1}, \ldots, y\^N_{1:t_N} \in \Sigma^\st$ (perhaps chosen adaptively), for some $n \in \BN$ and $t_1, \ldots, t_n \in [T]$. We say that the \emph{execution trace of $\Ologit$ is $\ep$-approximately $\alpha$-bounded of rank $d$} if for each $s \in [T]$ there is a matrix $\logitmatrix\^s \in \BR^{\Sigma^s \times (\Sigma^{\leq T-s-1} \times \Sigma)}$ which is $\alpha$-bounded of rank $d$ (\cref{def:bounded-logit}) so that for each $i \in [N]$ with $t_i > s$, we have $\| \projOne(\Ologit(y\^i_{1:t_i-1})) - \logitmatrix\^s(y\^i_{1:s}, (y\^i_{s+1:t_i-1},\cdot))\|_\infty \leq \ep$.
\end{definition}

\paragraph{Miscellaneous notation.} %

Let $[N] := \{1,\dots,N\}$.
We write $\|v\|_2$ for the Euclidean norm and $ \| v \|_{\infty} $ for the max norm.
For a set $S$, we use $\Delta(S)$ to denote the probability simplex over $S$. 
For a distribution $Q \in \Delta([N])$ and $v \in \BR^N$, we define the $Q$-weighted norm $\|v\|_{2,Q} := \sqrt{\sum_{i=1}^N Q(i) v(i)^2}$. 

\subsection{Algorithm description}
\label{sec:algorithm-formal}
In this section, we describe our main algorithm for proving \cref{thm:infty-oracle-main}, which establishes efficient learnability of a distribution $\model$ using access to samples from $\model$ together with an $\epapx$-approximate logit oracle $\Ologit$ for $\model$ (\cref{def:approx-logit-oracle}).\footnote{Note that $\Ologit$ allows us to produce samples from a distribution which is $\epapx \cdot T$-close in total variation distance to $\model$; accordingly, we could in fact assume only access to $\Ologit$, though for convenience we also assume the ability to directly sample from $\model$.} Throughout this section and the following ones, we denote the logits of $\model$ by $\logit(y_{1:t}) := \logit_\model(y_{1:t})$. Rather than directly assuming that the distribution $\model$ has low logit rank (\cref{def:logit-rank-exact}), we instead prove the following in \cref{thm:infty-oracle-main}: \emph{under the event $\Eoracle$ that all of the algorithm's queries to $\Ologit$ are approximately consistent with a low-rank matrix}, then the algorithm successfully learns $\model$ (except on a failure event of small probability). We remark that we could ensure that $\Pr(\Eoracle) = 1$ by, e.g., assuming that $\model$ has low logit rank and that $\| \projOne \Ologit(y_{1:t}) - \logit(y_{t+1} = \cdot \mid y_{1:t}) \|_\infty \leq \epapx$, for some small $\epapx > 0$. However, the greater generality afforded by the phrasing of \cref{thm:infty-oracle-main} with the event $\Eoracle$ allows us to extend its guarantee, in \cref{sec:avg-apx-proof}, to the case when the distribution $\model$ is only approximately low logit rank in the sense of \cref{def:avg-closeness-true} (\cref{thm:approx-main}). 

Our algorithm is split into two parts: first, \cref{alg:learning-approx} (the ``learning algorithm'') uses the ability to draw samples $y_{1:T} \sim \model$ for an unknown language model $\model$, as well as an $\epapx$-approximate oracle $\Ologit$ for $\model$ (\cref{def:approx-logit-oracle}) to produce a succinct representation of a distribution $\hat \model$ which approximates $\model$ (under the event $\Eoracle$). Then, \cref{alg:sampling-approx} (the ``sampling algorithm'') uses this representation to efficiently produce samples from $\hat \model$ (i.e., without any access to $\model$ or $\Ologit$); in particular, the definition of $\hat\model$ given this succinct representation is given by \cref{alg:sampling-approx}.

\begin{algorithm}[H]
  \caption{Learning algorithm for low logit rank language models}
  \label{alg:learning-approx}
  \begin{algorithmic}[1]\onehalfspacing
    \Require $\epapx$-approximate logit oracle $\Ologit$ for a distribution $\model$, which induces $\Lapx$ (see \cref{eq:lapx-def}), as well as a sampling oracle for $\model$. Parameters $\beta, \gamma, \gamthres, K, n, \eta, \delta$ (see \cref{sec:parameter-settings}). 
    \State \label{line:initialize-ft} For each $t \in [T]$, initialize sets of futures $\hat \MF_t \subset \Sigma^{\leq T-t+1}$ by $\hat \MF_t \gets  \{ y_t\}$ for a sample $y_{1:T} \sim \model$. %
    \For{$1 \leq k \leq K$}
    \State \label{line:define-tildeqt}  For $t \in [T]$, define $\tilde \MF_t = \hat \MF_t \cup (\Sigma \circ \hat\MF_{t+1})$, and define $\tilde Q_t := \Unif(\tilde \MF_t)$, %
     $\ds := \max_{t \in[t]} |\tilde \MF_t|$. 
     \State \multiline{For $t \in [T]$, let $\MP_t \in\Delta(\BR^{\tilde \MF_t})$ denote the distribution over vectors $L \in \BR^{\tilde \MF_t}$ as follows: we draw $y_{1:t-1} \sim \model$, and then we define $L_f = \Lapx( y_{1:t-1} \circ f)$ for $f \in \tilde \MF_t$.\label{line:define-pt}}
    \State Let $L_{1,1} = \cdots = L_{1,\ds} = \Lapx(\cdot) \in \BR^{\tilde \MF_1}$.

    \For{$2 \leq t \leq T$}
    \State \multiline{Call \DistSpanner (\cref{alg:dist-spanner-simple}) with the parameters $\eta, \delta/(10KT)$ and $\MP_t$. Let the returned vectors (padded with duplicates so that there are $\ds$ of them) be denoted by $L_{t,1}, \ldots, L_{t,\ds} \in \BR^{\tilde \MF_t}$, with $L_{t,i}(f) = \Lapx(h_{t,i} \circ f)$ for some $h_{t,i} \in \Sigma^{t-1}$. \label{line:dist-spanner-algo}} %
    
    \EndFor
    \State Sample $y_{1:T,u} \sim \model$ for each $u \in [n]$. \label{line:sample-ytu}
    \For{$1 \leq t \leq T-1$}\label{line:inner-for-learning}
    \State \multiline{For each $u \in [n]$ and $s \in [t]$, compute vectors $c_{t,s,u} \in \BR^\ds$ and for $s \in [t+1]$, vectors $\hat L_{t, s,u}\in \BR^{\hat \MF_s}$ solving \label{line:solve-program-algo} %
    \begin{subequations}
    \begin{align}
  \hat L_{t,s,u}(f) = \sum_{i=1}^\ds c_{t,s,u,i} \cdot L_{s,i}(f) \quad \forall f \in \hat \MF_s,\ u \in [n],\  1 \leq s \leq t \label{eq:hatl-l-close}\\
         c_{t,1,u} = e_1 \in \BR^\ds, \qquad   \| c_{s,u} \|_\infty \leq \cnorm\quad \forall 1 \leq s \leq t\label{eq:cnorm-bound-alg}\\
      \hat L_{t, s+1, u}(f) = \sum_{i=1}^\ds c_{t,s,u,i} \cdot L_{s,i}(y_{s,u} \circ f) \quad \forall f \in \hat \MF_{s+1},\ u \in [n], 1 \leq s \leq t\label{eq:join-steps}.
    \end{align}
  \end{subequations}}
    \If{The program \cref{eq:hatl-l-close,eq:cnorm-bound-alg,eq:join-steps} is infeasible for some $u \in [n]$}
    \State Continue to the next value of $k$.
    \EndIf
  \State \multiline{For $f \in \Sigma^{\leq T-s+1}$, we extend the above notation as follows: $L_{s,i}(f): = \Lapx(h_{s,i} \circ f)$ and $\hat L_{t,s,u}(f) := \sum_{i=1}^\ds c_{t,s-1,u,i} \cdot L_{s-1,i}(y_{s-1,u} \circ f)$.\label{line:extend-vecs}}
    \If{There are $s,r \in [t+1],\ u \in [n]$ with $r \geq s$ and $y_r' \in \Sigma$ s.t
$
        \left| \hat L_{t,s,u}(y_{s:r-1,u} \circ y_r') - \sum_{i=1}^\ds c_{t,s,u,i} \cdot L_{s,i}(y_{s:r-1,u} \circ y_r')\right| > \gamthres\nonumber
$
    }\label{line:test-discrepancy}
    \State Set $\hat \MF_s \gets \hat \MF_s \cup \{y_{s:r-1,u} \circ y_r'\}$ for some such $s, r, y_r'$.\label{line:add-to-qs}
    \State \textbf{Continue} to the next epoch $k$.
    \EndIf
\State \multiline{Using $\Ologit$, compute and store the values of $\Lapx(h_{t,i} \circ y)$ for each $t \in [T], i \in [\ds], y \in \Sigma$; by abuse of notation, we denote these values by $L_{t,i}(y)$.\label{line:final-logits}}
\State \multiline{\Return the spanners $L_{t,1:\ds}$, the sets $\hat \MF_t, \tilde \MF_t$, $t \in [T]$, and the values $L_{t,i}(y)$ from the previous step.\label{line:return-learning}}
    \EndFor

    \EndFor
  \end{algorithmic}
\end{algorithm}

\begin{algorithm}[H]
  \caption{Sampling algorithm for low logit rank language models}
  \label{alg:sampling-approx}
  \begin{algorithmic}[1]\onehalfspacing
    \Require For $1 \leq t \leq T$, sets $\hat \MF_t, \tilde \MF_t$ satisfying $\Sigma \subset \hat \MF_t \subset \tilde \MF_t \subset \Sigma^{\leq T-t+1}$, vectors $L_{t,1}, \ldots, L_{t,\ds} \in \BR^{\tilde \MF_t}$, for some $\ds \in \BN$, as well as values $L_{t,i}(y)$ for $t \in [T], i \in [\ds], y \in \Sigma$.
    \State Define $\hat P_1 \in \Delta(\Sigma)$ by $\hat P_1(y) \propto \exp(L_{1,1}(y))$; sample $y_1 \sim \hat P_1$. 
    \For{$1 \leq t \leq T-1$}
    \State \multiline{Compute vectors $c_{t,s} \in \BR^\ds$ and $\hat L_{t,s} \in \BR^{\hat \MF_s}$ solving the convex program in \cref{eq:hatl-spanner,eq:cnorm-bound,eq:hatl-next} defined by $\hat \MF_{1:t}, L_{1:t,1:\ds}, y_{1:t}$.}
    \If{the program is infeasible}
    \State \Return \textbf{fail}.
    \EndIf
    \State \multiline{Define $\hat L_{t,t+1}(y) = \sum_{i=1}^\ds c_{t,t,i} \cdot L_{t,i}(y_t \circ y)$ for all $y \in \Sigma$, using the values of $L_{t,i}(y)$ passed as input to the algorithm.\label{line:extend-lhat}}
    \State Define $\hat P_{t+1} \in \Delta(\Sigma)$ by $\hat P_t(y) \propto \exp(\hat L_{t,t+1}(y))$.
    \State Sample $y_{t+1} \sim \hat P_{t+1}$. 
    \EndFor
  \end{algorithmic}
\end{algorithm}

\paragraph{Description of the learning algorithm.} \cref{alg:learning-approx} takes as input an $\epapx$-approximate logit oracle $\Ologit : \Sigma^{\leq T} \to \BR$ (\cref{def:approx-logit-oracle}). %
For any sequence $y_{1:t} \in \Sigma^t$, by simply calling $\Ologit(y_{1:t-1}) \in \BR^\Sigma$ and  mean-centering the results, the algorithm can compute an estimate of the mean-centered logits $\logit(y_{1:t}) = \logit(y_t \mid y_{1:t-1})$. We formalize this by defining
\begin{align}
\Lapx(y_{1:t}) := \projOne( \Ologit(y_{1:t-1}))_{y_t} %
\label{eq:lapx-def},
\end{align}
so that the assumption that $\Ologit$ is an $\epapx$-approximate logit oracle implies that $$\E_{y_{1:t-1} \sim \model} \left[ \max_{y_t \in \Sigma}|\Lapx(y_{1:t}) - \logit(y_{1:t})|\right] \leq \epapx$$ for all $y_{1:t} \in \Sigma^t.$

For a parameter $K \in\BN$, \cref{alg:learning-approx} operates in $K$ \emph{epochs}. In each epoch, a set of futures $\hat \MF_t \subset \Sigma^{\leq T-t+1}$ is fixed at each step $t \in [T]$. Setting $\tilde \MF_t = \hat\MF_t \cup (\Sigma \circ \hat \MF_{t+1})$ (where $\Sigma \circ \MF$ means $\{ y \circ f \mid f \in \MF \}$), we define a distribution $\MP_t$ over vectors in $\BR^{\tilde \MF_t}$ by drawing $y_{1:t-1} \sim \model$ and defining the vector $L \in \BR^{\tilde \MF_t}$ by $L_f = \Lapx(y_{1:t-1} \circ f)$ for each $f \in \tilde \MF_t$ (\cref{line:define-pt} of \cref{alg:learning-approx}). Roughly speaking, one can think of $\tilde \MF_t$ as denoting a ``representative set'' of futures so that constructing a distributional spanner for $\MP_t$ gives us a ``good'' basis of histories for the \emph{entire} logit matrix (or else helps us to identify an element that should be added to some set $\hat\MF_t$ for future epochs).

Using \DistSpanner (\cref{alg:dist-spanner-simple}), we next compute a set of vectors $L_{t,1}, \ldots, L_{t,\ds} \in \BR^{\tilde \MF_t}$ which form an $(\eta, \delta/(10KT))$-distributional spanner for $\MP_t$ with respect to the uniform distribution over $\tilde \MF_t$ (\cref{line:dist-spanner-algo}). Here $\ds$ denotes an upper bound on the sizes of the sets $\tilde\MF_s$. At each step $t$, we then solve a linear program (in \cref{eq:hatl-l-close,eq:cnorm-bound-alg,eq:join-steps}) multiple times, corresponding to independently drawn trajectories $y_{1:t,u} \sim \model$, for each $u \in [n]$ (with an appropriate choice of $n$). As explained in \cref{sec:lp-coefficients} this linear program emulates our sampling procedure, except that the tokens $y_{1:t,u}$ are drawn from $\model$ as opposed to being produced by the sampling procedure itself. %

For each $t \in [T]$, the linear program in \cref{eq:hatl-l-close,eq:cnorm-bound-alg,eq:join-steps} produces vectors $\hat L_{t,s,u} \in \BR^{\hat\MF_s}$, for $s \in [t+1]$, which should be interpreted as approximations to the vector $(\logit(y_{1:s-1,u} \circ f))_{f \in \hat\MF_s}$. In \cref{line:test-discrepancy}, we then test whether this approximation \emph{generalizes} beyond futures in $\hat\MF_s$, in the sense that it holds on futures $y_{s:r-1,u}$ drawn from the same trajectory indexed by $u$. If this approximation does not generalize for some $t \in [t], s \in [t+1]$, as witnessed by some future $y_{s:r-1,u} \circ y_r'$, then we add that future to $\hat\MF_s$ (\cref{line:add-to-qs}) and move on to the next epoch $k$. 

Otherwise, we can be confident that the current sets of futures $\hat\MF_t$ contain ``all the necessary information about the full logit matrix''. This allows us to perform the following steps: First, we store the values of $\Lapx(h_{t,i} \circ y)$ for all $t \in [T], i \in [\ds], y \in \Sigma$ (\cref{line:final-logits}), and given a solution to the program \cref{eq:hatl-l-close,eq:cnorm-bound-alg,eq:join-steps}, we can define $\hat L_{t,t+1,u}(y)$ for any $y \in \Sigma$ by setting $f = y$ in \cref{eq:join-steps}. Then with high prbability we will have that the distribution $\hat P_t(y) \propto \exp(\hat L_{t,t+1,u}(y))$ is a good approximation to the true conditional distribution $\model(y \mid y_{1:t,u})$. With this in mind, the algorithm returns all of the necessary information to reproduce this sampling procedure (\cref{line:return-learning}), namely the sets $\hat\MF_t \subset \tilde \MF_t \subset \Sigma^{T-t+1}$ for $t \in [T]$, the vectors $L_{t,i} \in \BR^{\tilde \MF_t}$ for $t \in [T], i \in [\ds]$, and additionally the values $L_{t,i}(y)$, for $t \in [t], i \in [ds], y \in \Sigma$ as computed in \cref{line:final-logits}. 

\paragraph{Description of the sampling algorithm.} The information returned by \cref{alg:learning-approx} is enough to sample from some distribution $\hat\model$ (which, as we will show in our proof, is close in total variation distance to $\model$). The full sampling procedure is shown in \cref{alg:sampling-approx}. It mimics the procedure in the \textbf{for} loop in \cref{line:inner-for-learning}, except that the tokens $y_{1:t}$ are now generated by the sampling procedure itself, rather than being drawn from $\model$. In particular, at each step $t$, having computed the vector $\hat L_{t,t+1} \in \BR^{\hat \MF_s}$, we extend it to have values at $\hat L_{t,t+1}(y)$ for $y \in \Sigma$ by using the values of $L_{t,i}(y)$ passed as input to the sampling algorithm \cref{line:extend-lhat}  and use the resulting values of $\hat L_{t,t+1}(y)$ to sample token $y_{t+1}$. Per the discussion above the resulting distribution will be close to $\BP(y_{t+1} = \cdot \mid y_{1:t})$.

\section{Analysis of the Algorithm} \label{sec:analysis} 

\subsection{Technical preliminaries}
In this section we introduce several technical results which are well-known in the literature.
\begin{definition}
  \label{def:bounded-rankd}
 Given $B,R, \epapx > 0$, we say that a collection of vectors $L_1, \ldots, L_m \in \BR^N$ is \emph{$(B,R,\epapx)$-bounded of rank $d$} with respect to a sequence of indices $I_1, \ldots, I_m \in [N]$ if there is a matrix $\Phi \in \BR^{N \times d}$ and vectors $v_1, \ldots, v_m \in \BR^d$ so that $|(L_j)_{I_k} - (\Phi \cdot v_j)_{I_k}| \leq \epapx$ for all $k \leq j \leq m$ and:
  \begin{enumerate}
  \item Each row of $\Phi$ has $\ell_2$ norm at most $R$.
  \item $\| v_j \|_2 \leq B$ for all $j \in [m]$.
  \end{enumerate}
\end{definition}

The following lemma is a consequence of the \emph{elliptic potential lemma}, and plays a key role in establishing that our main learning algorithm must eventually terminate. It states that for a sequence of vectors in some high-dimensional space $\BR^N$ (where $N$ should be interpreted as the number of futures, which in particular is exponentially large) which are approximately low-dimensional in the sense of \cref{def:bounded-rankd}, we can only be ``fooled'' so many times by new coordinates of these vectors being large in magnitude. For completeness, we provide a full proof, which is standard. 
\begin{lemma}
  \label{lem:approximate-elliptic-potential}
There is a sufficiently large constant $C_{\ref{lem:approximate-elliptic-potential}} > 0$ so that the following holds.  Fix integers $d, N \in \BN$ and $B,R,\gamma > 0$. Suppose $L_1, \ldots, L_T \in \BR^N$ and indices $J_1, \ldots, J_T \in [N]$ are given so that $L_1, \ldots, L_T$ are $(B,R,\gamma/2)$-bounded of rank $d$ with respect to the indices $J_1, \ldots, J_T$. Write $Q_t := \frac{1}{t-1} \sum_{s=1}^{t-1} \delta_{J_s} \in \Delta(N)$. Suppose that for each $t \in [T]$, $|L_t|_{J_t} \geq \gamma \cdot \sqrt{C_{\ref{lem:approximate-elliptic-potential}}d \log(RB/\gamma)}$ but $\| L_t \|_{2,Q_t} \leq \gamma$. Then $T \leq C_{\ref{lem:approximate-elliptic-potential}} d \log(BRd/\gamma)$.
\end{lemma}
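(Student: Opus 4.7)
The plan is to reduce the statement to the standard elliptic potential inequality by extracting the low-rank factorization and translating the two hypotheses (large signal on the new coordinate, small signal on the past coordinates) into quadratic-form bounds in $\mathbb{R}^d$. Write $L_t \approx \Phi v_t$ with $\|v_t\|_2 \le B$ and rows of $\Phi$ of $\ell_2$ norm at most $R$ (existence guaranteed by \cref{def:bounded-rankd}), and set $\phi_s := \Phi_{J_s,:} \in \mathbb{R}^d$, so $\|\phi_s\|_2 \le R$. The $\gamma/2$ entrywise slack lets us pass from the hypotheses on $L_t$ to hypotheses on $\langle \phi_\cdot, v_t\rangle$: the lower bound $|L_t|_{J_t} \ge \gamma\sqrt{C_{\ref{lem:approximate-elliptic-potential}} d\log(RB/\gamma)}$ yields $\langle \phi_t, v_t\rangle^2 \ge c_1 \gamma^2 C_{\ref{lem:approximate-elliptic-potential}} d\log(RB/\gamma)$, and $\|L_t\|_{2,Q_t}^2 \le \gamma^2$ gives $\sum_{s<t}\langle \phi_s, v_t\rangle^2 \le c_2(t-1)\gamma^2$, for universal constants $c_1, c_2$.

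Next I would introduce the regularized Gram matrix $V_t := \lambda I_d + \sum_{s<t} \phi_s \phi_s^\top$ with the specific choice $\lambda := \gamma^2/B^2$, so that $v_t^\top V_t v_t = \lambda\|v_t\|_2^2 + \sum_{s<t}\langle \phi_s,v_t\rangle^2 \le \gamma^2 + c_2(t-1)\gamma^2 \le C_3 t \gamma^2$. Cauchy--Schwarz in the $V_t$ inner product then gives the key lower bound
\begin{align*}
\|\phi_t\|_{V_t^{-1}}^2 \;\ge\; \frac{\langle \phi_t, v_t\rangle^2}{v_t^\top V_t v_t} \;\ge\; \frac{c_1 C_{\ref{lem:approximate-elliptic-potential}} d\log(RB/\gamma)}{C_3 t}.
\end{align*}
In particular $\|\phi_t\|_{V_t^{-1}}^2 \ge 1$ whenever $t \le c_4 C_{\ref{lem:approximate-elliptic-potential}} d\log(RB/\gamma)$ for an absolute constant $c_4$.

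To conclude, suppose for contradiction that $T > C_{\ref{lem:approximate-elliptic-potential}} d\log(BRd/\gamma)$, and look only at the first $T' := \lceil C_{\ref{lem:approximate-elliptic-potential}} d\log(BRd/\gamma)\rceil$ steps. Choosing $C_{\ref{lem:approximate-elliptic-potential}}$ large enough ensures $T' \le c_4 C_{\ref{lem:approximate-elliptic-potential}} d\log(RB/\gamma)$ (absorbing the extra $\log d$), so by the previous paragraph $\|\phi_t\|_{V_t^{-1}}^2 \ge 1$ for every $t \le T'$. The standard elliptic potential identity $\sum_{t=1}^{T'} \log(1+\|\phi_t\|_{V_t^{-1}}^2) = \log\det(V_{T'+1}/\lambda) \le d\log\!\big(1 + T' R^2/(\lambda d)\big)$, combined with $\log(1+\|\phi_t\|_{V_t^{-1}}^2) \ge \log 2$, gives
\begin{align*}
(\log 2)\, T' \;\le\; d\log\!\left(1 + \tfrac{T' R^2 B^2}{\gamma^2 d}\right) \;\le\; O\!\big(d\log(T' R B/\gamma)\big) \;\le\; O\!\big(d\log(BRd/\gamma)\big),
\end{align*}
where the last step uses $T' = O(d\log(BRd/\gamma))$. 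This contradicts $T' = C_{\ref{lem:approximate-elliptic-potential}} d\log(BRd/\gamma)$ once $C_{\ref{lem:approximate-elliptic-potential}}$ is chosen larger than the implicit constant.

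The only genuinely delicate point, and the one I would be most careful about, is the bookkeeping in this last step: we must choose $C_{\ref{lem:approximate-elliptic-potential}}$ large enough that (i) the threshold $c_4 C_{\ref{lem:approximate-elliptic-potential}} d\log(RB/\gamma)$ is still $\ge T'$ even with the extra $\log d$ inside the log on the right-hand side of the lemma statement, and (ii) the absolute constant hidden in the elliptic-potential upper bound $O(d\log(BRd/\gamma))$ is strictly smaller than $(\log 2)C_{\ref{lem:approximate-elliptic-potential}}$. Everything else is bookkeeping around the $\gamma/2$ approximation slack and the choice $\lambda = \gamma^2/B^2$; no new idea beyond low-rank factorization plus standard elliptic potential is needed.
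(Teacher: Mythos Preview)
Your approach is the same as the paper's: extract the rank-$d$ factorization $L_t \approx \Phi v_t$, set $\lambda = \gamma^2/B^2$, use Cauchy--Schwarz in the $V_t$ inner product to lower-bound $\|\phi_t\|_{V_t^{-1}}^2$, and compare against the elliptic potential upper bound. The paper does exactly this with the same choice of $\lambda$.

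There is one concrete error in your final bookkeeping. You claim that ``choosing $C_{\ref{lem:approximate-elliptic-potential}}$ large enough ensures $T' \le c_4\, C_{\ref{lem:approximate-elliptic-potential}}\, d\log(RB/\gamma)$.'' But $T' = C_{\ref{lem:approximate-elliptic-potential}}\, d\log(BRd/\gamma)$, so $C_{\ref{lem:approximate-elliptic-potential}}$ cancels from both sides and the inequality reduces to $\log(BRd/\gamma) \le c_4 \log(RB/\gamma)$, which is a condition on $d$ versus $RB/\gamma$ that enlarging $C_{\ref{lem:approximate-elliptic-potential}}$ cannot enforce (and since your $c_4 = c_1/C_3$ is typically $<1$, it is in fact always false). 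The paper avoids this by not insisting on the threshold $\|\phi_t\|_{V_t^{-1}}^2 \ge 1$: it keeps the per-step lower bound $\phi_{J_t}^\top M_{t-1}^{-1}\phi_{J_t} \ge \tfrac{C d\log(\cdot)}{12t}$, caps each term at $1$, and shows that the sum over $t \in [T]$ already exceeds the elliptic potential upper bound $4d\log(TR^2B^2/(d\gamma^2))$ once $C$ is large. Your argument is salvaged by the same move (or equivalently, by lowering the ``$\ge 1$'' threshold to any fixed $\theta > 0$ so that $c_4$ becomes as large as needed). No new idea is required; it is purely the constant-tracking you flagged as delicate.
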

\begin{proof}
  Let $\Phi\in \BR^{N\times d}$ and $v_t\in\BR^d$ with $\|v_t\|_2\le B$ be such that $|(L_t)_{J_s} - (\Phi v_t)_{J_s}|\leq \gamma/2$ for all $s \leq t\leq T$ and the rows $\phi_j^\t$ of $\Phi$ satisfy $\|\phi_j\|_2\le R$. Set $\lambda := \gamma^2/B^2$ and define $M_0:=\lambda I_d$, $M_t:=M_0 + \sum_{s=1}^{t} \phi_{J_s} \phi_{J_s}^\t$. The elliptical potential lemma \cite[Lemma 19.4]{lattimore2020bandit} gives
  \begin{align}
    \sum_{t=1}^T \min\{1,\,\phi_{J_t}^\t M_{t-1}^{-1}\phi_{J_t}\} \le 2d\,\log\Big(1 + \frac{TR^2}{\lambda d}\Big) \leq 4d \log \left(\frac{TR^2B^2}{d\gamma^2} \right) .\label{eq:ep-upper}
  \end{align}
  On the other hand, using $|L_t|_{J_t} \geq \gamma\cdot \sqrt{C d \log(RB/\gamma)}$ and $|(L_t)_{J_t} - (\Phi v_t)_{J_t}|\leq \gamma/2$,
  \[
    |\phi_{J_t}^\t v_t| \;\geq\; |L_t(J_t)| - |L_t(J_t) - \phi_{J_t}^\t v_t|
    \;\geq\; \gamma\cdot \sqrt{C d \log(RBd/\gamma)} - \gamma/2 \;\geq\; \gamma\cdot \sqrt{C d \log(RBd/\gamma)} / 2.
  \]
  Moreover, by Cauchy--Schwarz in the $M_{t-1}$-inner product and using \[
 \frac{1}{t-1}\sum_{s=1}^{t-1}(\phi_{J_s}^\t v_t)^2\leq    2 \|L_t\|_{2,Q_t}^2 + 2\| L_t - \Phi v_t \|_{2,Q_t}^2   \le 3 \gamma^2,
  \]
  we have
  \begin{align}
    (\phi_{J_t}^\t v_t)^2 \le (v_t^\t M_{t-1} v_t)\, (\phi_{J_t}^\t M_{t-1}^{-1}\phi_{J_t}) \le (\lambda B^2 + 3(t-1)\gamma^2)\, (\phi_{J_t}^\t M_{t-1}^{-1}\phi_{J_t}).\nonumber
  \end{align}
Thus, for each $t \in [T]$, we deduce
  \begin{align}
    \phi_{J_t}^\t M_{t-1}^{-1}\phi_{J_t} \ge \frac{\gamma^2 \cdot Cd \log(RBd/\gamma)/4}{3t\gamma^2} = \frac{Cd \log(RBd/\gamma)}{12t} .\label{eq:ep-lower-one}
  \end{align}
For sufficiently large $C$, summing \cref{eq:ep-lower-one} over $t \in [T]$ yields a contradiction to \cref{eq:ep-upper}.
\end{proof}

Next, we shall need the notion of \emph{barycentric spanner} of a set of vectors, which is a set of vectors (typically taken to be a subset of the set) so that any element of the set can be written as a linear combination of the spanner elements with bounded coefficients.
\begin{definition}
  \label{def:bar-spanner}
  For $N,d \in \BN$, $\beta$, and a set $\MS \subset \BR^N$, a collection of vectors $v_1, \ldots, v_d$ is defined to be a \emph{$\beta$-barycentric spanner} for $\MS$  if for any $w \in \MS$, there is a vector $c \in \BR^d$ with $\| c \|_\infty \leq \beta$ for which
  \begin{align}
w = \sum_{i=1}^d c_i \cdot v_i\nonumber.
  \end{align}
\end{definition}

The below notion of \emph{distributional spanner} is related to \cref{def:bar-spanner} but only requires that most of the mass of a distribution over vectors can be approximated by a bounded linear combination of the spanner elements.
\begin{definition}[Distributional spanner]
  \label{def:dist-spanner}
  Consider a distribution $\MP \in \Delta(\BR^N)$ over vectors  $L \in \BR^N$. For $Q \in \Delta(N)$, we say that a set $\MB = \{ B_1, \ldots, B_d \} \subset \BR^N$ is a $(\eta, \beta)$-\emph{distributional spanner} for $\MP$ if with probability at least $1-\eta$ over $L \sim \MP$, there exists $c \in \BR^d$ with $\| c \|_\infty \leq \beta$ so that 
  \begin{align}
L = \sum_{i=1}^d c_i \cdot B_i\nonumber.
  \end{align}
\end{definition}

\begin{algorithm}[H]
  \caption{$\mathsf{DistSpanner}$: construction of distributional spanner}
  \label{alg:dist-spanner-simple}
  \begin{algorithmic}[1]\onehalfspacing
    \Require Distribution $\MP \in \Delta(\BR^N)$, parameters $\eta,  \delta > 0$. 
    \State For $m = \Theta((N/\eta)^{2} \cdot \log(N/(\delta\eta)))$, draw $m$ samples $w_1, \ldots, w_m \sim P$.\label{line:draw-spanner-sample}
    \State Find a subset $I \subset [m]$ of size $N$ so that $\{ w_i \mid i \in I \}$ is a $(2,0)$-approximate barycentric spanner of $\{ w_i \mid i \in [m]\}$. \emph{(Using, e.g., the algorithm of \cite{awerbuch2008online}).}
    \State \Return $\{ w_i \mid i \in I \}$. 
  \end{algorithmic}
\end{algorithm}

The following lemma shows that sampling a sufficiently large number of vectors from a distribution and finding a barycentric spanner of the sampled vectors yields a distributional spanner for the original distribution. We remark that a barycentric spanner of a set of $m$ vectors in $\BR^N$ can be found in $\poly(m,N)$ time (e.g., using the algorithm of \cite{awerbuch2008online}).
\begin{lemma}[Distributional spanner; \cite{liu2025model}]
  \label{lem:dist-exact-RN}
Suppose that $\MP \in \Delta(\BR^N)$ is a distribution over vectors $L \in \MS \subset \BR^N$. For any $\eta, \delta,\beta > 0$, for $m \geq \Omega((N/\eta)^{2} \cdot \log(N/(\eta\delta)))$, with probability at least $1-\delta$ over an i.i.d.~collection $u_1, \ldots, u_m \sim \MP$, any $\beta$-barycentric spanner $u_{i_1}, \ldots, u_{i_N}$ of $\{ u_1, \ldots, u_m\}$ is also an $(\eta, \beta)$-distributional spanner for $\MP$.
\end{lemma}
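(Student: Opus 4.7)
The plan is to reduce the claim to a one-sided uniform convergence argument over the class of all ``$\beta$-bounded linear spans'' of $N$-tuples of vectors in $\BR^N$. To set up notation, for any $N$-tuple $(B_1,\dots,B_N) \in (\BR^N)^N$ I would write
\[
\Lambda(B_1,\dots,B_N) \;:=\; \Big\{\sum_{k=1}^N c_k B_k \,:\, \|c\|_\infty \leq \beta\Big\} \subset \BR^N,
\]
and define the concept class $\MC := \{\Lambda(v_1,\dots,v_N) : v_1,\dots,v_N \in \BR^N\}$. The key starting observation is that by definition of a $\beta$-barycentric spanner, every sample $u_j$ lies in $\Lambda(\MB)$ where $\MB := \{u_{i_1},\dots,u_{i_N}\}$, so the empirical mass $\hat{\PR}_m[\Lambda(\MB)] := \frac{1}{m}\sum_{j=1}^m \One{u_j \in \Lambda(\MB)}$ equals $1$. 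The goal therefore reduces to showing that $\PR_{L \sim \MP}[L \in \Lambda(\MB)] \geq 1-\eta$ with probability $\geq 1-\delta$ over the sample. Because $\MB$ is chosen adaptively from the sample, a single Chernoff bound on a fixed set does not suffice, so I would establish uniform convergence over all of $\MC$.

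The central technical step is to bound the VC dimension of $\MC$. For any linearly independent tuple $v_1,\dots,v_N$, writing $V = [v_1|\cdots|v_N] \in \BR^{N\times N}$, one has $\Lambda(v_1,\dots,v_N) = \{x \in \BR^N : \|V^{-1}x\|_\infty \leq \beta\}$, which is an intersection of exactly $2N$ halfspaces in $\BR^N$; rank-deficient tuples correspond to lower-dimensional sets and do not increase the VC dimension. Since the VC dimension of halfspaces in $\BR^N$ is $N+1$, and by the standard Blumer--Ehrenfeucht--Haussler--Warmuth bound the VC dimension of $k$-fold intersections of sets from a class of VC dimension $v$ is $O(vk\log k)$, I obtain $\mathrm{VC}(\MC) = O(N^2 \log N)$. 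Applying the Vapnik--Chervonenkis uniform convergence theorem (one-sided suffices), for the sample size $m = \Omega((N/\eta)^2 \log(N/(\eta\delta)))$ assumed in the lemma one gets
\[
\PR\!\left[\,\sup_{C \in \MC} \big|\PR_{L\sim\MP}[L \in C] - \hat{\PR}_m[C]\big| \leq \eta\,\right] \geq 1 - \delta.
\]

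Conditioned on this uniform convergence event, the random spanner $\MB$ satisfies $\hat{\PR}_m[\Lambda(\MB)] = 1$, which forces $\PR_{L\sim\MP}[L \in \Lambda(\MB)] \geq 1-\eta$. Unpacking, with probability at least $1-\eta$ over $L \sim \MP$ there exists $c \in \BR^N$ with $\|c\|_\infty \leq \beta$ satisfying $L = \sum_{k=1}^N c_k B_k$, i.e., $\MB$ is an $(\eta,\beta)$-distributional spanner for $\MP$.

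The main potential obstacle is obtaining the clean $O(N^2 \log N)$ VC bound for $\MC$, in particular verifying that the rank-deficient case (where no $N$-element barycentric spanner exists) contributes only trivial/lower-dimensional sets and does not inflate the bound, and that the parameterization ``intersection of $2N$ halfspaces in $\BR^N$'' is the right one to invoke standard VC machinery. An alternative that avoids an abstract concept class entirely is a ghost-sample symmetrization: draw $2m$ i.i.d.\ samples, note that the spanner of the first $m$ is one of at most $\binom{2m}{N}$ subsets of the double sample, and combine a union bound with a per-$T$ Chernoff estimate plus a symmetrization step; this recovers the same polynomial dependence on $N$, $1/\eta$, and $\log(1/\delta)$ up to logarithmic factors matching the stated $m$.
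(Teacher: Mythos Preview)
The paper does not give its own proof of this lemma; it is stated with a citation to \cite{liu2025model}. So there is no in-paper argument to compare against, and I will evaluate your approach on its own terms.

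Your main VC route has a genuine gap exactly where you flag it. When the tuple $(B_1,\dots,B_N)$ has rank $r<N$, the set $\Lambda(B_1,\dots,B_N)$ is a zonotope lying in an $r$-dimensional subspace and generated by $N$ segments; such a zonotope can have on the order of $\binom{N}{r-1}$ facets, so it is \emph{not} an intersection of $O(N)$ halfspaces in $\BR^N$. Thus the step ``$\Lambda = \{x:\|V^{-1}x\|_\infty\le\beta\}$ is an intersection of $2N$ halfspaces, hence $\mathrm{VC}(\MC)=O(N^2\log N)$'' does not cover the rank-deficient part of $\MC$, and your assertion that those sets ``do not inflate the bound'' is not justified as written. (Also, your parenthetical ``where no $N$-element barycentric spanner exists'' is not quite right: an $N$-element barycentric spanner of the sample always exists; what can fail is linear independence of the chosen $N$ vectors.)

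Your alternative symmetrization sketch is sound, but an even more elementary argument avoids VC theory and the rank issue entirely. Fix an index set $I=\{i_1,\dots,i_N\}\subset[m]$ and condition on $(u_{i_1},\dots,u_{i_N})$; the remaining $m-N$ samples are still i.i.d.\ from $\MP$ and independent of $\Lambda(u_I)$. On the event $\Pr_{L\sim\MP}[L\notin\Lambda(u_I)]>\eta$, the probability that all $m-N$ of them land in $\Lambda(u_I)$ (a prerequisite for $u_I$ to be a $\beta$-barycentric spanner of the full sample) is at most $(1-\eta)^{m-N}$. A union bound over the $\binom{m}{N}\le m^N$ index sets gives total failure probability at most $m^N e^{-\eta(m-N)}$, which is $\le\delta$ already for $m=\Theta\big(\tfrac{N}{\eta}\log\tfrac{N}{\eta\delta}\big)$—in fact tighter than the $(N/\eta)^2$ bound in the statement, and with no concern about rank.
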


  Finally, we shall need the following standard lemma, which is a consequence of the fact that the mapping $\softmax$ is $1/2$-Lipschitz from $\| \cdot \|_\infty$ to $\| \cdot \|_1$. 
  \begin{lemma}
    \label{lem:tv-logit-bound}
    Fix vectors $L, L' \in \BR^\Sigma$, and define distributions $P, P' \in \Delta(\Sigma)$ by $P = \softmax(L), P' = \softmax(L')$. Then 
    \begin{align}
\tvd{P}{P'} \leq \frac{1}{4} \cdot \| L-L' \|_\infty \nonumber.
    \end{align}
  \end{lemma}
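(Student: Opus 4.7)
The natural approach is a short path-integral argument along the straight-line interpolation $L_t := L' + t(L - L')$ for $t \in [0,1]$. Setting $P_t := \softmax(L_t)$, we have $P_0 = P'$ and $P_1 = P$, and a direct differentiation of the softmax map yields
\[
\frac{d}{dt} P_t(y) \;=\; P_t(y)\bigl(\Delta(y) - \bar\Delta_t\bigr),
\]
where $\Delta := L - L'$ and $\bar\Delta_t := \mathbb{E}_{y \sim P_t}[\Delta(y)]$. Integrating from $0$ to $1$ and applying the triangle inequality termwise gives
\[
\|P - P'\|_1 \;\leq\; \int_0^1 \mathbb{E}_{y \sim P_t}\bigl|\Delta(y) - \bar\Delta_t\bigr|\, dt.
\]

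The key step is to bound the inner mean-absolute-deviation term uniformly in $t$. Viewing $\Delta$ as a random variable under $P_t$ taking values in $[\min \Delta, \max \Delta] \subseteq [-\|\Delta\|_\infty, \|\Delta\|_\infty]$, the classical estimate $\mathbb{E}|X - \mathbb{E} X| \leq (\max X - \min X)/2$ yields $\mathbb{E}_{y \sim P_t}|\Delta(y) - \bar\Delta_t| \leq \|L - L'\|_\infty$, and combining this with $\tvd{P}{P'} = \tfrac{1}{2}\|P-P'\|_1$ already produces a Lipschitz-type bound of the claimed form (up to a small universal constant).

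To obtain the precise constant $\tfrac{1}{4}$ stated in the lemma, one uses the sharper variational representation $\tvd{P}{P'} = P(A) - P'(A)$ for the optimal set $A := \{y : P(y) > P'(y)\}$, expresses the right-hand side along the interpolation as $\int_0^1 \mathrm{Cov}_{P_t}(\mathbf{1}_A, \Delta)\, dt$, and then applies Cauchy--Schwarz: $|\mathrm{Cov}_{P_t}(\mathbf{1}_A, \Delta)| \leq \sqrt{\mathrm{Var}(\mathbf{1}_A)\,\mathrm{Var}_{P_t}(\Delta)}$ with $\mathrm{Var}(\mathbf{1}_A) \leq \tfrac{1}{4}$ and $\mathrm{Var}_{P_t}(\Delta)$ controlled by the range of $\Delta$. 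The entire proof is essentially one line of calculus once the Jacobian is computed; the only real care is in tracking the numerical constants through the MAD/covariance step, and there is no substantial obstacle.
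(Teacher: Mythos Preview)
Your path-integral argument is exactly the standard way to establish the Lipschitz bound the paper invokes, and the first half is correct: the mean-absolute-deviation estimate gives $\|P-P'\|_1 \leq \|L-L'\|_\infty$, hence $\tvd{P}{P'}\leq\tfrac12\|L-L'\|_\infty$.

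The gap is in your second step, where you claim the covariance argument sharpens the constant to $\tfrac14$. When you carry it out, $\mathrm{Var}_{P_t}(\mathbf{1}_A)\leq\tfrac14$ is fine, but Popoviciu's inequality gives only $\mathrm{Var}_{P_t}(\Delta)\leq(\max\Delta-\min\Delta)^2/4\leq\|\Delta\|_\infty^2$, since the range of $\Delta$ can be as large as $2\|\Delta\|_\infty$. Cauchy--Schwarz therefore yields $|\mathrm{Cov}_{P_t}(\mathbf{1}_A,\Delta)|\leq\tfrac12\|\Delta\|_\infty$, and you are back to the constant $\tfrac12$. (The cleaner identity $\mathrm{Cov}_{P_t}(\mathbf{1}_A,\Delta)=P_t(A)\bigl(1-P_t(A)\bigr)\bigl(\mathbb{E}_{P_t}[\Delta\mid A]-\mathbb{E}_{P_t}[\Delta\mid A^c]\bigr)$ leads to the same bound.)

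In fact the constant $\tfrac14$ in the lemma statement is \emph{false}: take $|\Sigma|=2$, $L=(\epsilon,-\epsilon)$, $L'=(0,0)$. Then $\|L-L'\|_\infty=\epsilon$, while a direct computation gives $\tvd{P}{P'}=\epsilon/2+O(\epsilon^2)$, so the ratio tends to $\tfrac12$ as $\epsilon\to 0$. The paper's own one-line justification---that $\softmax$ is $\tfrac12$-Lipschitz from $\|\cdot\|_\infty$ to $\|\cdot\|_1$---carries the same slip; the correct Lipschitz constant is $1$. This is harmless downstream, though: the only place the lemma is applied (in the proof of the main theorem) uses it in the weaker form $\tvd{\softmax(L)}{\softmax(L')}\leq\min\{1,\|L-L'\|_\infty\}$, which your argument does establish.
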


\subsection{Settings of the parameters}
\label{sec:parameter-settings}
We suppose that we are given an oracle $\Ologit$ which is an $\epapx$-approximate oracle (\cref{def:approx-logit-oracle}) to a distribution $\model \in \Delta(\Sigma^T)$, for some alphabet $\Sigma$ and sequence length $T$. Our algorithm also takes as input parameters $d \in \BN$ (representing the rank) and $\alpha > 0$ representing the boundedness parameter (of \cref{def:bounded-logit-oracle}). Fix $\delta \in (0,1)$ representing the desired failure probability of the algorithm and $\vep \in (0,1)$ representing the desired error of the learned model. We define the following parameters:
\begin{itemize}
\item $\beta = 2$. %
\item $\gamma = 4K\epapx$. %
\item $\gamthres =\gamma \cdot \sqrt{C_{\ref{lem:approximate-elliptic-potential}} d \log(2 \beta K\alpha^2/\gamma)}$, where $C_{\ref{lem:approximate-elliptic-potential}}$ is the constant of \cref{lem:approximate-elliptic-potential}. 
\item $K = C_{K} \cdot Td \log(\beta d \alpha T/(\epapx\delta))$ for a sufficiently large constant $C_K$ (specified in \cref{lem:good-k}).
\item $n = C_n \cdot \log(TK/\delta) / \gamthres$ %
  for a sufficiently large constant $C_n$ (see \cref{lem:most-covered}). %
\item $\eta = \frac{\vep}{3T^2n}$.  
\end{itemize}

\subsection{Definitions for \cref{alg:learning-approx}}
\label{sec:alg-definitions}
    We restate the following program from \cref{alg:learning-approx}, defined by a step $t \in [T]$, symbols $y_1, \ldots, y_t \in \Sigma$, subsets $\hat \MF_s \subset \tilde \MF_s\subset \Sigma^{\leq T-s+1}$ for each $s \in [t+1]$, and vectors $L_{s,i} \in \BR^{\tilde\MF_s}$ for $i \in [\ds], s \in [t]$. Moreover, each vector $L_{s,i}$ is assumed to be given by $L_{s,i}(f) = \Lapx(h_{s,i}, f)$ for some $h_{s,i} \in \Sigma^{s-1}$. The program's variables are vectors $c_s \in \BR^\ds$, $s \in [t]$, and $\hat L_s \in \BR^{\hat\MF_s}$, $s \in [t+1]$:
    \begin{subequations}
  \begin{align}
\hat L_s(f) = \sum_{i=1}^\ds c_{s,i} \cdot L_{s,i}(f) \qquad & \forall f \in \hat\MF_s, \quad s \in [t]\label{eq:hatl-spanner}\\
   c_1 = e_1 \in \BR^\ds, \qquad   \| c_s \|_\infty \leq  \cnorm \qquad & \forall s \in [t]\label{eq:cnorm-bound}\\
    \hat L_{s+1}(f) = \sum_{i=1}^\ds c_{s,i} \cdot  L_{s,i}(y_{s}, f) \qquad & \forall f \in \hat\MF_{s+1}, s \in [t]\label{eq:hatl-next}.
  \end{align}
\end{subequations}
  For the purpose of analysis, it will often be useful for us to extend the vectors $L_{s,i}$ to be vectors $L_{s,i} \in \BR^{\Sigma^{\leq T-s+1}}$ by letting $L_{s,i}(f) = \Lapx(h_{s,i} \circ f)$; and $\hat L_{s+1}$ to a vector in $\BR^{\Sigma^{\leq T-s+1}}$ by defining it on all $f \in \Sigma^{\leq T-s+1}$ via \cref{eq:hatl-next}. This exactly mirrors \cref{line:extend-vecs} of \cref{alg:learning-approx}. %

  The below definition characterizes the set of sequences $y_{1:t}$ for which the above program is feasible:
  \begin{definition}
Fix any $t \in [T]$, as well as, for $s \in [t]$, a subset $\tilde \MF_s \subset \Sigma^{\leq T-s+1}$ and vectors $L_{s,1}, \ldots, L_{s,\ds}\in \BR^{\tilde\MF_s}$ (which should be interpreted as a distributional spanner as constructed in \cref{line:dist-spanner-algo} of \cref{alg:learning-approx}). We let $\feasSet_t(L_{1:t,1:\ds}) \subset \Sigma^t$ denote the set of $y_{1:t}$ for which the program in \cref{eq:hatl-spanner,eq:cnorm-bound,eq:hatl-next} is feasible. (Note that we have excluded the sets $\tilde \MF_s$ from the notation for brevity; we interpret them as being implied by the vectors $L_{s,i}$.)
  \end{definition}

  For the purpose of analyzing \cref{alg:learning-approx}, we first make the following definition which specifies the distribution of the vectors $c_{t,s,u}, \hat L_{t,s,u}$ constructed in \cref{line:solve-program-algo} of \cref{alg:learning-approx}. 
  \begin{definition}[Distribution $\MQ_t(L_{1:t,1:\ds})$]
    \label{def:qt}
      Fix any $t \in [T]$, as well as, for $s \in [t]$, vectors $L_{s,1}, \ldots, L_{s,\ds} \in \BR^{\tilde\MF_s}$. We define the following distribution $\MQ_t(L_{1:t,1:\ds})$ over sequences $y_{1:t+1} \in \Sigma^{t+1}$, vectors $c_s \in \BR^\ds$, $s \in [t]$, and vectors $ \hat L_s \in \BR^{\hat\MF_s}$, $s \in [t+1]$:
  \begin{itemize}
  \item Draw $y_{1:t+1} \sim \model$.
  \item Compute $\hat L_s \in \BR^{\hat\MF_s}$ ($s \in [t+1]$), $c_s \in \BR^\ds$ ($s \in [t]$) which solve the program in \cref{eq:hatl-spanner,eq:cnorm-bound,eq:hatl-next}. If the program is not feasible, choose $c_s = 0$, $\hat L_s = 0$ for all $s$. 
  \end{itemize}
  Note that the distribution of $(y_{1:t+1}, c_{1:t}, \hat L_{1:t+1} ) \sim \MQ_t(L_{1:t,1:\ds})$ is exactly the distribution of the variables $(y_{1:t+1,u}, c_{t,1:t,u}, \hat L_{t,1:t+1,u})$ constructed in \cref{line:solve-program-algo} of \cref{alg:learning-approx}, for each $u \in [n]$.
\end{definition}

Finally, we define the event $\Eoracle$ which we will show to characterize when the learning guarantee of \cref{alg:learning-approx} holds:
\begin{definition}\label{def:eoracle}
We let $\Eoracle$ denote the event that the execution trace of $\Ologit$ in \cref{alg:learning-approx} is $\epapx$-approximately $\alpha$-bounded of rank $d$ (per \cref{def:bounded-logit-oracle}). 
\end{definition}

\subsection{Supporting lemmas}

We let $\Espanner$ denote the event that the call to \DistSpanner on \cref{line:dist-spanner-algo} of \cref{alg:learning-approx} returns an $(\eta, \beta)$-distributional spanner for $\MP_t$ (\cref{def:dist-spanner}) for all epochs $k \in [K]$ and steps $t \in [T]$
\begin{lemma}
  \label{lem:espanner-prob}
  $\Espanner$ occurs with probability at least $1-\delta/2$.
\end{lemma}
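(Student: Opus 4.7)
The plan is to apply \cref{lem:dist-exact-RN} to each individual call of \DistSpanner made by \cref{alg:learning-approx} and then take a union bound over all such calls. First I would count the calls: \cref{alg:learning-approx} executes at most $K$ epochs, and within each epoch, \cref{line:dist-spanner-algo} invokes \DistSpanner once for each $t \in \{2, \ldots, T\}$, so the total number of calls is at most $KT$. Each individual call is made with failure parameter $\delta/(10KT)$ and target probability parameter $\eta$, applied to the distribution $\MP_t$ over vectors in $\BR^{\tilde\MF_t}$ defined on \cref{line:define-pt}.

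Next I would invoke \cref{lem:dist-exact-RN} on each call. Since \DistSpanner (\cref{alg:dist-spanner-simple}) draws $m = \Theta((N/\eta)^{2} \log(N/(\eta \delta')))$ samples (with $N = |\tilde\MF_t|$ and $\delta' = \delta/(10KT)$) and returns a $2$-barycentric spanner of these samples, \cref{lem:dist-exact-RN} guarantees that with probability at least $1 - \delta/(10KT)$, the returned set is a $(\eta, 2)$-distributional spanner for $\MP_t$. Since the parameter setting fixes $\beta = 2$ (see \cref{sec:parameter-settings}), this is precisely an $(\eta, \beta)$-distributional spanner for $\MP_t$ as required in the definition of $\Espanner$.

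Finally I would take the union bound. The probability that \emph{some} call to \DistSpanner fails to return an $(\eta, \beta)$-distributional spanner is at most $KT \cdot \delta/(10KT) = \delta/10 \leq \delta/2$. Complementing, $\Espanner$ occurs with probability at least $1 - \delta/2$, as claimed.

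There is essentially no real obstacle here: the lemma is a routine union bound. The only thing to verify carefully is that \DistSpanner is invoked with the right failure parameter (namely $\delta/(10KT)$), and that the $\beta$ output by the $(2,0)$-approximate barycentric spanner subroutine matches the value $\beta = 2$ fixed in \cref{sec:parameter-settings}; both are immediate by inspection of \cref{line:dist-spanner-algo} and the parameter settings.
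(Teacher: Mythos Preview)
Your proof is correct and follows essentially the same approach as the paper: apply \cref{lem:dist-exact-RN} to each call of \DistSpanner (which is invoked with failure parameter $\delta/(10KT)$) and union bound over the at most $KT$ calls. Your write-up is in fact more detailed than the paper's, which compresses the argument to two sentences.
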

\begin{proof}
\cref{lem:dist-exact-RN} together with the fact that \DistSpanner is called with parameters $\eta, \delta/(10KT)$ on \cref{line:dist-spanner-algo} ensures that each call to \DistSpanner returns a distributional spanner with probability at least $1-\delta/(10KT)$. %
  The claim follows by a union bound. 
\end{proof}

The below lemma shows that the program \cref{eq:hatl-l-close,eq:cnorm-bound-alg,eq:join-steps} in \cref{alg:learning-approx} is feasible. For the statement of the lemma, we recall the definition of the distribution $\MP_s \in \Delta(\BR^{\tilde \MF_s})$, for each $s$, as defined in \cref{line:define-pt} of \cref{alg:learning-approx}.
\begin{lemma}
  \label{lem:feasibility}
  Fix $t \in [T]$, and suppose we are given sets $\hat\MF_s \subset \tilde \MF_s \subset \Sigma^{\leq T-s+1}$ (for $s \in [t+1]$) and vectors $L_{1:t,1:\ds}$ as in \cref{alg:learning-approx}. Suppose that $L_{s,1:\ds}$ is an $(\eta, \beta)$-distributional spanner for the distribution of $L_s^\st \sim \MP_s$ for each $s \in [t]$. Then with probability $1-\eta \cdot t$ over the draw of $y_{1:t} \sim \model$, the program in  \cref{eq:hatl-spanner,eq:cnorm-bound,eq:hatl-next} is feasible. 
\end{lemma}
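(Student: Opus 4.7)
The plan is to exhibit an explicit feasible solution, using the distributional spanner hypothesis together with the structural definition of $\tilde\MF_s$. For each $s \in [t]$, consider the random vector $L_s^\st \in \BR^{\tilde \MF_s}$ defined by $L_s^\st(f) := \Lapx(y_{1:s-1} \circ f)$ for $f \in \tilde \MF_s$. By the construction of $\MP_s$ on \cref{line:define-pt}, the distribution of $L_s^\st$ when $y_{1:s-1}$ has its marginal under $\model$ is exactly $\MP_s$. Applying the $(\eta, \beta)$-distributional spanner hypothesis for $L_{s,1:\ds}$ together with a union bound over $s \in [t]$ gives: with probability at least $1 - \eta t$ over $y_{1:t} \sim \model$, for each $s \in [t]$ there exist coefficients $c_s^\st \in \BR^\ds$ with $\| c_s^\st \|_\infty \leq \beta$ such that $L_s^\st = \sum_{i} c_{s,i}^\st \, L_{s,i}$ as vectors in $\BR^{\tilde\MF_s}$. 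The case $s = 1$ is deterministic: since $L_{1,1} = \Lapx(\cdot) = L_1^\st$, we may set $c_1^\st = e_1$, matching the constraint $c_1 = e_1$.

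Given the $c_s^\st$, I would define the candidate solution by $c_s := c_s^\st$ for $s \in [t]$ and $\hat L_s(f) := L_s^\st(f) = \Lapx(y_{1:s-1} \circ f)$ for $f \in \hat\MF_s$ and $s \in [t+1]$. Constraint \cref{eq:cnorm-bound} is immediate. Constraint \cref{eq:hatl-spanner} follows by restricting the spanner identity $L_s^\st = \sum_i c_{s,i}^\st L_{s,i}$ from $\tilde\MF_s$ down to $\hat\MF_s \subset \tilde\MF_s$.

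The one step that requires care, and is essentially the whole content of the lemma, is verifying \cref{eq:hatl-next}. Here the key structural observation is that by definition on \cref{line:define-tildeqt}, $\tilde\MF_s = \hat\MF_s \cup (\Sigma \circ \hat\MF_{s+1})$, so for every $f \in \hat\MF_{s+1}$ and every $y_s \in \Sigma$ the concatenation $y_s \circ f$ lies in $\tilde\MF_s$. Evaluating the spanner identity at this point gives $\sum_i c_{s,i}^\st L_{s,i}(y_s \circ f) = L_s^\st(y_s \circ f) = \Lapx(y_{1:s-1} \circ y_s \circ f) = \Lapx(y_{1:s} \circ f) = L_{s+1}^\st(f) = \hat L_{s+1}(f)$, and since $L_{s,i}(y_s \circ f) = L_{s,i}(y_s, f)$ by the notational convention from \cref{line:extend-vecs}, this is exactly \cref{eq:hatl-next}. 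I do not expect any further technical obstacle: the lemma reduces to a bookkeeping verification, and the ``hard part'' is recognizing that the purpose of augmenting $\hat\MF_s$ into $\tilde\MF_s$ by the one-token prepensions of $\hat\MF_{s+1}$ is precisely to make the spanner identity at step $s$ transfer consistently into the constraint linking steps $s$ and $s+1$.
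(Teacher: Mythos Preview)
Your proposal is correct and follows essentially the same approach as the paper's proof: define $\hat L_s := L_s^\st$, obtain the coefficients $c_s$ from the distributional spanner property via a union bound over $s\in[t]$, and verify \cref{eq:hatl-next} using the structural fact that $y_s \circ f \in \tilde\MF_s$ whenever $f \in \hat\MF_{s+1}$. The paper's proof is organized identically, including the separate handling of the $s=1$ base case via $c_1 = e_1$.
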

\begin{proof}
Recall that $\MP_s$ is defined by drawing $y_{1:s-1} \sim \model$ and writing $L_s^\st(f) = \Lapx(y_{1:s-1} \circ f)$ for $f \in \tilde \MF_s$. Also recall that $\tilde \MF_s = \hat \MF_s \cup (\Sigma\circ \hat\MF_{s+1})$. %
  
  For each $s \in [t]$, using the fact that $(L_{s,1}, \ldots, L_{s,\ds})$ is a $(\eta, \beta)$-distributional spanner for the distribution of $L_s^\st \sim \MP_s$, we have that with probability $1-\eta$ over the draw of $y_{1:s-1} \sim \model$, letting $L_s^\st$ be the random variable defined by $L_s^\st(\cdot) := \Lapx( y_{1:s-1},\cdot)$, there is $c_s \in \BR^\ds$ with $\| c_s \|_\infty \leq \beta$ so that %
  \begin{align}
     L_s^\st(f) = \sum_{i=1}^\ds c_{s,i} \cdot L_{s,i}(f)  \quad \forall f \in \hat\MF_s\label{eq:qbar-spanner}\\
L_s^\st(y_s', f) = \sum_{i=1}^\ds c_{s,i} \cdot L_{s,i}(y_s', f) \quad \forall y_s' \in \Sigma,\ f \in \hat\MF_{s+1} \label{eq:lsi-vec-ys}.
  \end{align}
  By a union bound over $s \in [t]$, we have that \cref{eq:qbar-spanner,eq:lsi-vec-ys} simultaneously hold for all $s \in [t]$ with probability at least $1-\eta t$ over the draw of $y_{1:t} \sim \model$. Let us condition on this event, and choose $c_s$, $s \in [t]$, so that \cref{eq:qbar-spanner,eq:lsi-vec-ys} hold for each $s \in [t]$. 
  
  For each $s \in [t]$ and $f \in \hat\MF_s$, define $\hat L_s(f) := L_s^\st(f)$. Then \cref{eq:qbar-spanner} verifies \cref{eq:hatl-spanner} and, noting that for $f \in \hat\MF_{s+1}$, we have $\hat L_{s+1}(f) = L_{s+1}^\st(f) = L_s^\st(y_s, f)$, \cref{eq:lsi-vec-ys} (with $y_s' = y_s$) verifies \cref{eq:hatl-next}. \cref{eq:cnorm-bound} evidently holds as well, completing the proof. (For the case $s=1$, we have that $L_1^\st = \Lapx(\cdot)$ with probability $1$, so we can take $L_{1,1} = \Lapx(\cdot)$, $c_{1,1} = 1$, and $c_{1,i} = 0$ for $i > 1$.)
\end{proof}

The below lemma shows that, if the number of epochs $K$ in \cref{alg:learning-approx} is large enough, then \emph{under the event $\Eoracle$} (\cref{def:eoracle}) and some additional high-probability events, there is some $k \in [K]$ for which in \cref{alg:learning-approx}, the \textbf{if} statement on \cref{line:test-discrepancy} evaluates to False for all $t \in [T]$. 
\begin{lemma}
  \label{lem:good-k}
Suppose $K \geq C \cdot Td \log(\beta d \alpha T/\epapx)$, for a sufficiently large constant $C$. Then there is some event $\ME_{\ref{lem:good-k}}$ in \cref{alg:learning-approx} which occurs with probability at least $1-\delta/4$ so that, under the event $\Eoracle \cap \Espanner \cap \ME_{\ref{lem:good-k}}$, there exists some epoch $k \in [K]$ for which the \textbf{if} statement in \cref{line:test-discrepancy} evaluates to False for all steps $t \in [T]$. 
\end{lemma}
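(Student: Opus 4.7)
The plan is to partition each of the $K$ epochs of \cref{alg:learning-approx} into three mutually exclusive types: (i) the program in \cref{eq:hatl-l-close,eq:cnorm-bound-alg,eq:join-steps} is infeasible for some trajectory $u$, causing the algorithm to skip to the next epoch; (ii) the discrepancy check on \cref{line:test-discrepancy} fires and a single element is appended to some $\hat\MF_s$; or (iii) no check fires, which is precisely the conclusion we want. I would define $\ME_{\ref{lem:good-k}}$ to be the event that type (i) never occurs across the $K$ epochs. By \cref{lem:feasibility} combined with $\Espanner$, feasibility at each fixed $(k, t, u)$ triple holds with probability at least $1 - \eta t$; a union bound over the at most $KTn$ program instances together with $\eta = \vep/(3T^2 n)$ gives $\Pr[\ME_{\ref{lem:good-k}}] \geq 1 - \delta/4$, invoking the smallness of $\vep$ relative to $\delta/K$ carried by the assumption of \cref{thm:approx-main}.

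The heart of the argument is bounding the number of type-(ii) epochs via the elliptic potential lemma (\cref{lem:approximate-elliptic-potential}), applied separately at each $s \in \{2,\ldots,T\}$; the $s=1$ case is vacuous since the constraints $c_{t,1,u} = e_1$ and $L_{1,1} = \Lapx(\cdot)$ force the discrepancy to vanish identically. Fix such an $s$ and enumerate the futures $f_1^\st, f_2^\st, \ldots$ that are appended to $\hat\MF_s$ across the algorithm's execution, with corresponding witnesses $(k_j, t_j, u_j, r_j, y_{r_j}')$ from \cref{line:add-to-qs}, so that $f_j^\st = y_{s:r_j-1, u_j} \circ y_{r_j}'$. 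Define the \emph{discrepancy vector}
\begin{align*}
D_j(f) := \sum_{i=1}^{\ds} c_{t_j, s-1, u_j, i} \cdot L_{s-1,i}(y_{s-1, u_j} \circ f) - \sum_{i=1}^{\ds} c_{t_j, s, u_j, i} \cdot L_{s,i}(f), \qquad f \in \Sigma^{\leq T-s+1}.
\end{align*}
Three facts drive the argument. (a) By the test on \cref{line:test-discrepancy}, $|D_j(f_j^\st)| > \gamthres$. (b) The feasibility constraints \cref{eq:hatl-l-close,eq:join-steps} of the program at epoch $k_j$ force the two sums defining $D_j$ to agree on $\hat\MF_s^{(j-1)} = \{f_1^\st, \ldots, f_{j-1}^\st\}$, whence $D_j(f_{j'}^\st) = 0$ for all $j' < j$. (c) Under $\Eoracle$, each $\Lapx$-value on the queried sequences is within $\epapx$ of a bilinear form $\langle \Bhist_\cdot, \Bfut_\cdot\rangle$ with factor norms at most $\alpha$, so expanding the two sums in $D_j$ yields that $(D_j)_j$ is $(B, R, \gamma'/2)$-bounded of rank $d$ (in the sense of \cref{def:bounded-rankd}) with respect to the index sequence $(f_j^\st)_j$, for $B \lesssim \ds\beta\alpha$, $R \lesssim \alpha$, and $\gamma' \lesssim \ds\beta\epapx$.

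Facts (a)--(c) are exactly the hypotheses of \cref{lem:approximate-elliptic-potential}, and the parameters of \cref{sec:parameter-settings} are calibrated so that the algorithm's $\gamma = 4K\epapx$ and $\gamthres = \gamma \sqrt{C_{\ref{lem:approximate-elliptic-potential}} d \log(2\beta K \alpha^2/\gamma)}$ dominate the corresponding quantities demanded by the lemma. Applying the lemma caps the number of additions to each $\hat\MF_s$ by $O(d \log(\beta d \alpha/\epapx))$; summing over $s \in [T]$, the total number of type-(ii) epochs is at most $O(Td \log(\beta d \alpha/\epapx))$, which is strictly less than $K = C_K \cdot T d \log(\beta d \alpha T/(\epapx\delta))$ for $C_K$ large enough. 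Under $\Eoracle \cap \Espanner \cap \ME_{\ref{lem:good-k}}$, the pigeonhole principle then produces a type-(iii) epoch, proving the lemma. The main obstacle is the self-consistency of the parameter choices in (c): $\ds$ enters the approximation error and boundedness parameter of $D_j$, while $\ds$ itself is controlled only through the elliptic potential conclusion (since $\ds \leq (|\Sigma|+1)\max_s|\hat\MF_s|$). Threading this fixed-point and verifying that the algorithm's $\gamma$ and $\gamthres$ truly absorb the $\ds$-factor while still satisfying the lemma's threshold condition is the most delicate book-keeping in the proof.
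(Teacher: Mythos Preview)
Your elliptic-potential argument for bounding the number of type-(ii) epochs is essentially identical to the paper's, and facts (a)--(c) are correctly identified. The concern you flag about a ``fixed-point'' in $\ds$ is misplaced: since each $\hat\MF_s$ starts with one element and gains at most one element per epoch, $|\hat\MF_s|\le K$ holds \emph{a priori}, so $\ds$ is bounded before any elliptic-potential reasoning begins. There is no circularity.

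The genuine gap is in your definition of $\ME_{\ref{lem:good-k}}$ and its probability bound. You define it as the event that infeasibility \emph{never} occurs across all $K$ epochs, and you argue via a union bound over $KTn$ program instances that this has probability at least $1-\delta/4$. But with $\eta = \vep/(3T^2 n)$, the union bound gives failure probability at most $K\cdot \eta T^2 n = K\vep/3$, and for this to be at most $\delta/4$ you would need $\vep \lesssim \delta/K$. No such assumption is made anywhere: \cref{thm:infty-oracle-main} and \cref{thm:approx-main} place only a \emph{lower} bound on $\vep$ (in terms of $\epapx$), and $\vep, \delta$ are otherwise independent parameters in $(0,1)$. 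So the probability bound you claim does not follow.

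The paper sidesteps this by defining $\ME_{\ref{lem:good-k}}$ more permissively: it is the event that \emph{at least $K/3$} of the epochs are feasible (or the algorithm returns early). Since, under $\Espanner$, each epoch is feasible with probability $\ge 1 - \eta T^2 n = 1 - \vep/3 \ge 1/2$ and the feasibility events are conditionally independent (fresh samples each epoch), a Chernoff bound gives $\Pr[\ME_{\ref{lem:good-k}}]\ge 1 - e^{-\Omega(K)} \ge 1 - \delta/4$ once $K \ge \Omega(\log(1/\delta))$, with no constraint tying $\vep$ to $\delta$. The pigeonhole then runs against $K/3$ feasible epochs rather than $K$, which is still far more than the $O(Td\log(\beta d\alpha K/\epapx))$ type-(ii) epochs.
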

\begin{proof}
  Note that each set $\hat\MF_s$ is initialized to be of size $1$, and is increased in size by at most $1$ at each epoch $k \in [K]$. Thus, 
  at each step of \cref{alg:learning-approx} when the \textbf{if} statement in \cref{line:test-discrepancy} is evaluated, we have $|\hat\MF_s| \leq K$. %
   In the event that the \textbf{if} statement evaluates to True at some epoch $k \in [K]$, for some $t \in [T]$ and choices of  $u \in [n]$, $s,r \in [t+1]$ with $r \geq s$, then we have, for some $y_r' \in \Sigma$: 
  \begin{align}
    \left| \hat L_{t,s,u}(y_{s:r-1,u},y_r') - \sum_{i=1}^\ds c_{t,s,u,i} \cdot L_{s,i}(y_{s:r-1,u},y_r') \right| >& \gamthres =   \gamma \cdot \sqrt{C_{\ref{lem:approximate-elliptic-potential}} d \log(2  \beta K \alpha^2/\gamma)}\label{eq:add-to-fs-1}\\
     \hat L_{t,s,u}(f) - \sum_{i=1}^\ds c_{t,s,u,i} \cdot L_{s,i}(f) = & 0 \qquad \forall f \in \hat\MF_s\label{eq:add-to-fs-2}. 
  \end{align}
  Moreover, in this event, then $\hat \MF_s$ is updated to be $\hat \MF_s \cup \{y_{s:r-1,u} \circ y_r'\}$. %

  By the construction of the distributional spanners $L_{1:t,1:\ds}$ (see \cref{line:dist-spanner-algo}), we have the following: for each $s \in [t+1]$, $i \in [\ds]$, and $f \in \Sigma^{\leq T-s+1}$, $L_{s,i}(f) = \Lapx(h_{s,i}\circ f)$ for some history $h_{s,i} \in \Sigma^{s-1}$. Moreover, by the way in which we have extended $\hat L_{t,s,u}(f), L_{s,i}(f)$ to be defined for all $f \in \Sigma^{\leq T-s+1}$ (\cref{line:extend-vecs}), %
  we have that, for all $f \in \Sigma^{T-s+1}$, 
  \begin{align}
    & \hat L_{t,s,u}(f) - \sum_{i=1}^\ds c_{t,s,u,i} \cdot L_{s,i}(f)\nonumber\\
    = & \sum_{i=1}^\ds c_{t,s-1,u,i} \cdot \Lapx(h_{s-1,i}\circ  y_{s-1,u}\circ f) - \sum_{i=1}^\ds c_{t,s,u,i} \cdot \Lapx(h_{s,i}\circ f)\label{eq:lhat-lsi-diff}
  \end{align}
  Next, the assumption that $\Eoracle$ holds, i.e., the execution trace of $\Ologit$ is  $\epapx$-approximately $\alpha$-bounded of rank $d$ (\cref{def:bounded-logit-oracle}) gives that there is some matrix $\logitmatrix' \in \BR^{\Sigma^{s-1} \times (\Sigma^{\leq T-s} \times \Sigma)}$ which is $\alpha$-bounded of rank $d$ (per \cref{def:bounded-logit}) so that, forx all $h \in \Sigma^{s-1}, f \in \Sigma^{\leq T-s}, y \in \Sigma$ for which the algorithm queries $\Ologit(h \circ f)$, we have $|\logitmatrix'(h, (f,y)) - \Lapx(h\circ f \circ y)| \leq \epapx$. Moreover, for all such $h,f,y$, we have $\logitmatrix'(h, (f,y)) = \langle \Bhist_h, \Bfut_{f,y} \rangle$ for some vectors $\Bhist_h, \Bfut_{f,y} \in \BR^d$ with $\| \Bhist_h \|_2 \leq \alpha$, $\| \Bfut_{f,y} \|_2 \leq \alpha$. (Note that $\Bfut_{f,y}$ may depend on $s$; we omit this from the notation for brevity.)

  Applying this for either (a) the future $(f', y) = (y_{s:r-1,u}, y_r')$ satisfying \cref{eq:add-to-fs-1,eq:add-to-fs-2} which is added to $\hat \MF_s$ at the present step or (b) any future $f = (f',y)$ which is already in $\hat \MF_s$ at the current step: we note that the  
  algorithm queries $\Ologit(h_{s-1,i} \circ  y_{s-1,u}\circ f')$ and $\Ologit(h_{s,i} \circ f')$ for each $i \in [\ds]$ (in order to evaluate the quantity on \cref{line:test-discrepancy}, in case (a); or else on \cref{line:dist-spanner-algo}, in case (b)). Thus, for such $(f',y)$, using \cref{eq:lhat-lsi-diff} with $f = (f',y)$, we get that 
  $\hat L_{t,s,u}(f) - \sum_{i=1}^\ds c_{t,s,u,i} \cdot L_{s,i}(f)$ is $2\ds \epapx$-close to %
  \begin{align}
    &  \sum_{i=1}^\ds c_{t,s-1,u,i} \cdot  \logitmatrix'(h_{s-1,i}\circ y_{s-1,u}, (f',y)) - \sum_{i=1}^\ds c_{t,s,u,i} \cdot \logitmatrix'(h_{s,i}, (f',y))\nonumber\\
    =& \left\lng \Bfut_{ (f',y)}, \sum_{i=1}^\ds c_{t,s-1,u,i} \cdot \Bhist_{(h_{s-1,i}\circ  y_{s-1,u})} - \sum_{i=1}^\ds c_{t,s,u,i} \cdot \Bhist_{h_{s,i}}\right\rng.\label{eq:ip-fut-hist}
  \end{align}
  Then the bound $\| c_{t,s,u} \|_\infty\leq \beta$ together with \cref{eq:ip-fut-hist} and the fact that $\ds \leq  K$ ensures that the collection of vectors $\hat L_{t,s,u} - \sum_{i=1}^\ds c_{t,s,u,i} \cdot L_{s,i} \in \BR^{\Sigma^{\leq T-s+1}}$ seen over the course of all updates to $\hat \MF_s$ is $(2\beta K \alpha,  \alpha, 2\ds\epapx)$-bounded of rank $d$, with respect to the sequence of elements of $\Sigma^{\leq T-s+1}$ corresponding to the new element added to $\hat \MF_s$ each time it is updated.\footnote{We recall that we consider $\hat L_{t,s,u} \in \BR^{\Sigma^{\leq T-s+1}}$ a vector by extending the definition \cref{eq:join-steps} to all $f \in \Sigma^{\leq T-s+1}$.}  Since $\epapx \leq \gamma/4\ds$ (see \cref{sec:parameter-settings}), it follows from \cref{lem:approximate-elliptic-potential} and \cref{eq:add-to-fs-1,eq:add-to-fs-2} that the number of steps at which the \textbf{if} statement on \cref{line:test-discrepancy} can evaluate to True for some value of $t$ is bounded above by $O\left(d \log(\beta d \alpha K / \epapx)\right)$.

  Under the event $\Espanner$, \cref{lem:feasibility} ensures that, at each epoch $k \in [K]$, with probability at least $1-\eta T^2 n \geq 1/2$, the program in \cref{eq:hatl-l-close,eq:cnorm-bound-alg,eq:join-steps} is feasible for each $t \in [T]$ and each sample $u \in [n]$. Moreover, if the program is feasible for each $t \in [T], u \in [n]$, then we always reach \cref{line:test-discrepancy} at all steps $t \in [T]$. Let $\ME_{\ref{lem:good-k}}$ be the event that for at least $K/3$ epochs $k \in [K]$, the program in \cref{eq:hatl-l-close,eq:cnorm-bound-alg,eq:join-steps} is feasible for each $t \in [T]$ and each sample $u \in [n]$ (or that the algorithm returns at some epoch $k < K$). As long as $K \geq \Omega(\log(1/\delta))$, we have that $\Pr(\ME_{\ref{lem:good-k}}) \geq 1-\delta/4$. 

  Thus, as long as $K \geq \Omega(Td \log(\beta d \alpha T /(\epapx\delta)))$,  under $\ME_{\ref{lem:good-k}} \cap \Espanner$, there  exists some epoch $k \in [K]$ for which the \textbf{if} statement on \cref{line:test-discrepancy} evaluates to False for all $t \in [T]$. 
\end{proof}

The next lemma shows that if, for some $k \in [K], t \in [T]$, the \textbf{if} statement in \cref{line:test-discrepancy} evaluates to false, then with high probability over $y_{1:t} \sim \model$, a certain inequality (i.e., \cref{eq:qt-exp-bound}) holds which is roughly the in-expectation version of the \textbf{if} statement in \cref{line:test-discrepancy}.
\begin{lemma}
  \label{lem:most-covered}
Suppose the parameter $n$ in \cref{alg:learning-approx} is chosen so that $n \geq C_n \cdot \log(TK/\delta) \cdot  (K\beta\alpha / \gamthres)$, for a sufficiently large constant $C_n$. 
  There is an event $\ME_{\ref{lem:most-covered}}$ that occurs  with probability at least $1-\delta/4$ over the evaluation of \cref{alg:learning-approx} so that the following holds under $\ME_{\ref{lem:most-covered}}$. 
 For any epoch $k \in [K]$ for which the \textbf{if} statement on \cref{line:test-discrepancy} evaluates to False for all $t \in [T]$, then letting $L_{1,1:\ds}, \ldots, L_{T,1:\ds}$ denote the spanners constructed in \cref{line:dist-spanner-algo} at epoch $k$, we have, for all $t \in [T]$, 
  \begin{align}
\E_{\MQ_t(L_{1:t,1:\ds})} \left[ \One{y_{1:t} \in \feasSet_t(L_{1:t,1:\ds})} \cdot \min \left\{ 1, \max_{\substack{s,r \in [t+1]\\r \geq s}} \max_{y_r' \in \Sigma} \left| \hat L_s(y_{s:r-1},y_r') - \sum_{i=1}^\ds c_{s,i} \cdot L_{s,i}(y_{s:r-1},y_r') \right|\right\} \right] \leq \gamthres. \label{eq:qt-exp-bound}
  \end{align}
\end{lemma}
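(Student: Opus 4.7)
The plan is to reduce the claim to a standard concentration argument on the $n$ i.i.d.~trajectories drawn in \cref{line:sample-ytu}. Fix an epoch $k \in [K]$ and a step $t \in [T]$, and let $\mathcal{F}_k$ denote the $\sigma$-algebra generated by all algorithm randomness up to and including the spanner construction in \cref{line:dist-spanner-algo} of epoch $k$, so that $L_{1:t,1:\ds}$ and $\hat\MF_{1:t+1}$ are $\mathcal{F}_k$-measurable. Conditional on $\mathcal{F}_k$, the trajectories $y_{1:T,u} \sim \model$ for $u \in [n]$ are i.i.d., so \---- provided the LP solver on \cref{line:solve-program-algo} returns a deterministic function of its inputs \---- the tuples $(y_{1:t+1,u}, c_{t,1:t,u}, \hat L_{t,1:t+1,u})$ are i.i.d.~samples from $\MQ_t(L_{1:t,1:\ds})$ (\cref{def:qt}). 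Let $Z_{t,u} \in [0,1]$ denote the random variable inside the expectation of \cref{eq:qt-exp-bound} evaluated at sample $u$; then the $Z_{t,u}$ are i.i.d.~copies of the corresponding $Z_t \sim \MQ_t$ under $\mathcal{F}_k$.

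Under the hypothesis that the if-statement on \cref{line:test-discrepancy} evaluates to False at every $t$ of epoch $k$, the program must have been feasible for every $u \in [n]$ (otherwise the algorithm would have advanced to the next epoch) and every discrepancy tested was at most $\gamthres$; hence $Z_{t,u} \leq \gamthres$ for every $u \in [n]$ and every $t \in [T]$. We now argue that, conditional on $\mathcal{F}_k$, this empirical bound forces $\E_{\MQ_t}[Z_t] \leq 2\gamthres$ with high probability. Since $Z_t \in [0,1]$, the elementary inequality $\E[Z_t] \leq \gamthres + \Pr(Z_t > \gamthres)$ shows that $\E[Z_t] > 2\gamthres$ implies $p := \Pr_{\MQ_t}(Z_t > \gamthres) \geq \gamthres$; in that case, the probability that $n$ i.i.d.~draws of $Z_t$ all fall in $[0, \gamthres]$ is at most $(1-p)^n \leq e^{-n\gamthres}$. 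By the parameter choice $n \geq C_n \log(TK/\delta) \cdot K\beta\alpha/\gamthres$ with a sufficiently large $C_n$, this is at most $\delta/(4TK)$. Defining $\ME_{\ref{lem:most-covered}}$ as the intersection, over $(k,t) \in [K]\times[T]$, of the events ``either $\E_{\MQ_t}[Z_t] \leq 2\gamthres$, or some $Z_{t,u} > \gamthres$'', a union bound yields $\Pr(\ME_{\ref{lem:most-covered}}) \geq 1 - \delta/4$, and under the lemma's hypothesis the second alternative is excluded.

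The stated bound is $\gamthres$ rather than $2\gamthres$, but the factor $2$ is harmless: it is absorbed by enlarging the constant $C_{\ref{lem:approximate-elliptic-potential}}$ in the definition of $\gamthres$ in \cref{sec:parameter-settings}, which only propagates into downstream polynomial factors. The main step requiring care is the i.i.d.~structure of $\MQ_t$ conditional on $\mathcal{F}_k$: one needs $(c_{t,1:t,u}, \hat L_{t,1:t+1,u})$ to be a measurable deterministic function of $(y_{1:T,u}, \mathcal{F}_k)$, which is automatic for any fixed LP-solving routine and matches the sampling rule in \cref{def:qt}. The only other subtlety is that $\MQ_t$ permits infeasible draws, but these by convention set $c_s = \hat L_s = 0$ and kill $Z_t$ via the indicator $\mathbf{1}\{y_{1:t} \in \feasSet_t\}$, so such draws trivially satisfy $Z_t = 0 \leq \gamthres$ and cannot skew the concentration argument.
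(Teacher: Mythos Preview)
Your argument is essentially the same as the paper's: both observe that the quantity inside the expectation is a $[0,1]$-valued random variable, argue that if its mean exceeds (a constant times) $\gamthres$ then among $n$ i.i.d.\ draws one will exceed $\gamthres$ with probability $1-\delta/(4TK)$ so the \textbf{if} on \cref{line:test-discrepancy} fires, and finish by a union bound over $(k,t)\in[K]\times[T]$. The paper's write-up is terser---it asserts $\Pr[W\ge q_0]\ge q_0$ for $W\in[0,1]$ with $\E[W]=q_0$ and then uses $n\ge\Omega(\log(1/\delta)/q_0)$---which, like your factor-$2$ slack, is a constant-level looseness that is absorbed downstream.
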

\begin{proof}%
  Fix $t \in [T]$. Given vectors $L_{s,i} \in \BR^{\Sigma^{\leq T-s+1}}$, $s \in [t], i \in [\ds]$, recall the distribution $\MQ_t(L_{1:1:\ds})$ over $(y_{1:t+1}, c_{1:t}, \hat L_{1:t+1})$ from \cref{def:qt}. 
We define
  \begin{align}
q_0 := \E_{\MQ_t(L_{1:t,1:\ds})} \left[  \One{y_{1:t} \in \feasSet_t(L_{1:t,1:\ds})} \cdot \min \left\{ 1, \max_{\substack{s,r \in [t+1]\\r \geq s}} \max_{y_r' \in \Sigma} \left| \hat L_s(y_{s:r-1},y_r') - \sum_{i=1}^\ds c_{s,i} \cdot L_{s,i}(y_{s:r-1},y_r') \right|\right\} \right]\label{eq:q0-define}.
  \end{align}
    Suppose that we draw $n$ i.i.d.~samples $(y_{1:t+1,u}, c_{1:u}, \hat L_{1:t+1,u}) \sim \MQ_t(L_{1:1:\ds})$, for $u \in [n]$ (as in \cref{alg:learning-approx}). Then each sample $u \in [n]$ satisfies
  \begin{align}
    \label{eq:lsu-big}
 \One{y_{1:t} \in \feasSet_t(L_{1:t,1:\ds})} \cdot  \max_{\substack{s,r \in [t+1] \\ r \geq s}} \max_{y_r' \in \Sigma}   \left| \hat L_s(y_{s:r-1,u},y_r') - \sum_{i=1}^\ds c_{s,i,u} \cdot L_{s,i,u}(y_{s:r-1,u},y_r') \right| \geq q_0
  \end{align}
  with probability at least $q_0$. (This holds because the quantity inside the expectation in \cref{eq:q0-define} is always in $[0,1]$.) Then as long as $n \geq \Omega(\log(1/\delta) / q_0)$, there is an event $\ME$ that occurs with probability at least $1-\delta$ so that, under the event $\ME$, \cref{eq:lsu-big} holds for some $u \in [n]$.

Using that $\ds \leq K$, by our choice of $n \geq \Omega(\log(TK/\delta) / \gamthres)$, we see by a union bound over all $t \in [T]$ and epochs $k \in [K]$ in \cref{alg:learning-approx} that  for some event $\ME_{\ref{lem:most-covered}}$ occurring with probability $1-\delta$, for all epochs $k \in [K]$ and steps $t \in [T]$, either the quantity $q_0$ corresponding to episode $k$ and step $t$ satisfies $q_0 \leq \gamthres$, or else the \textbf{if} statement on \cref{line:test-discrepancy} evaluates to True. Thus, conditioned on the event that $\ME_{\ref{lem:most-covered}}$ holds and at some epoch $k$ the \textbf{if} statement on \cref{line:test-discrepancy} evaluates to False for all $t \in [T]$, we see that \cref{eq:qt-exp-bound} holds for all $t \in [T]$ for that epoch $k$. 
\end{proof}

Next, using \cref{lem:most-covered}, we show that ay any epoch when the \textbf{if} statement on \cref{line:test-discrepancy} evaluates to False for all $t \in [T]$, then the vectors $\hat L_{t,t+1}$ constructed in \cref{line:solve-program-algo} yield next-token logit values which are close to the true next-token logit values $\logit(y_{1:t}\circ y_{t+1}')$ in expectation over $y_{1:t} \sim \model$. To simplify notation, we state the lemma with respect to a draw of $(y_{1:t}, c_{1:t}, \hat L_{1:t+1}) \sim \MQ_T(L_{1:t,1:\ds})$, which is identical to the distribution of the variables constructed in \cref{line:solve-program-algo} of \cref{alg:learning-approx}.  Moreover, recall our convention (extending the constraint \cref{eq:hatl-next} to all $f$) that for such vectors $\hat L_{1:t+1}$, we write $\hat L_{s+1}(f) = \sum_{i=1}^\ds c_{s,i} \cdot L_{s,i}(y_s, f)$ for $f \in \Sigma^{\leq T-s}$. 

\begin{lemma}
  \label{lem:sampling-accuracy}
  Under the event $ \ME_{\ref{lem:most-covered}}$ of \cref{lem:most-covered}, the following holds. For any epoch $k \in [K]$ for which the \textbf{if} statement on \cref{line:test-discrepancy} evaluates to False for all $t \in [T]$, then letting $L_{1:T,1:\ds}$ denote the spanners constructed in \cref{line:dist-spanner-algo} in epoch $k$, we have, for each $t \in [T]$: 
    \begin{align}
\E_{\MQ_t(L_{1:t,1:\ds})}\left[  \One{y_{1:t} \in \feasSet_t(L_{1:t,1:\ds})} \cdot \min\left\{ 1, \max_{y_{t+1}' \in \Sigma} \left| \logit(y_{1:t}\circ y_{t+1}') - \hat L_{t+1}(y_{t+1}') \right| \right\}\right] \leq t \cdot \gamthres  +  \epapx\nonumber. %
  \end{align}
\end{lemma}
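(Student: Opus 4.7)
The plan is a telescoping argument that reduces the claim to \cref{lem:most-covered} applied with $r = t+1$. First I would identify both endpoints of the telescope cleanly: by the extension convention for $\hat L_{s+1}$ from \cref{eq:hatl-next}, we have $\hat L_{t+1}(y_{t+1}') = \sum_i c_{t,i} L_{t,i}(y_t, y_{t+1}')$; and since the program constraint \cref{eq:cnorm-bound} forces $c_1 = e_1$ while the initialization in \cref{alg:learning-approx} sets $L_{1,1}(\cdot) = \Lapx(\cdot)$, we also have $\sum_i c_{1,i} L_{1,i}(y_{1:t}, y_{t+1}') = \Lapx(y_{1:t+1})$. Defining $G_s := \sum_{i=1}^\ds c_{s,i} L_{s,i}(y_{s:t}, y_{t+1}')$ for $s \in [1, t]$, the quantity to be bounded is exactly $|G_t - G_1|$, up to passing from $\Lapx$ to $\logit$ at the end.

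Next I would telescope $G_t - G_1$ through the intermediate values $\hat L_s(y_{s:t}, y_{t+1}')$ for $s \in [2, t]$. The key observation is that the same extension convention \cref{eq:hatl-next} applied to $\hat L_s$ gives $\hat L_s(y_{s:t}, y_{t+1}') = \sum_i c_{s-1,i} L_{s-1,i}(y_{s-1:t}, y_{t+1}') = G_{s-1}$ identically. Hence each step of the telescope collapses to a single term
\begin{align*}
G_s - G_{s-1} \;=\; G_s - \hat L_s(y_{s:t}, y_{t+1}') \;=\; \sum_{i=1}^\ds c_{s,i} L_{s,i}(y_{s:t}, y_{t+1}') - \hat L_s(y_{s:t}, y_{t+1}'),
\end{align*}
which is precisely the kind of quantity controlled by \cref{lem:most-covered} upon taking $r = t+1$ and $y_r' = y_{t+1}'$ in its inner maximum. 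The triangle inequality together with the trivial bound $\sum_{s=2}^{t} a_s \leq (t-1) \max_s a_s$ then gives $|G_t - G_1| \leq (t-1) \cdot D$, where $D$ is the maximum of the above expression over $s \in [2,t]$ and $y_{t+1}' \in \Sigma$.

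To convert this pointwise bound into the desired in-expectation bound, I would use $\min\{1, (t-1)X\} \leq (t-1)\min\{1, X\}$ and invoke \cref{lem:most-covered}, whose maximum dominates $D$, obtaining
\begin{align*}
\E_{\MQ_t(L_{1:t,1:\ds})}\!\left[ \One{y_{1:t} \in \feasSet_t(L_{1:t,1:\ds})} \cdot \min\!\left\{1,\, \max_{y_{t+1}' \in \Sigma} \bigl|\hat L_{t+1}(y_{t+1}') - \Lapx(y_{1:t+1})\bigr|\right\} \right] \leq (t-1)\,\gamthres.
\end{align*}
A final triangle inequality against the pointwise oracle guarantee $|\Lapx(y_{1:t+1}) - \logit(y_{1:t+1})| \leq \epapx$ from \cref{def:approx-logit-oracle}, combined with the elementary inequality $\min\{1, a + \epapx\} \leq \min\{1, a\} + \epapx$, upgrades the right-hand side to $t\,\gamthres + \epapx$. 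No step is genuinely hard; the only care points are handling the $s = 1$ boundary cleanly via $c_1 = e_1$ so that the corresponding telescope term disappears, and correctly commuting the maximum over $y_{t+1}'$ with the $\min\{1,\cdot\}$ and the expectation, which is routine.
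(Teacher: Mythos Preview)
Your proposal is correct and follows essentially the same telescoping argument as the paper. The paper phrases the telescope through quantities $\vep_{s,r}(y_{1:t}) := \max_{y_r'}|L_s^\star(y_{s:r-1},y_r') - \hat L_s(y_{s:r-1},y_r')|$ (with $L_s^\star(f) := \Lapx(y_{1:s-1}\circ f)$) and accumulates $\gamthres$ once per step in expectation, whereas you phrase it via $G_s = \sum_i c_{s,i} L_{s,i}(y_{s:t},y_{t+1}')$ and bound the pointwise sum by $(t-1)$ times the maximum before invoking \cref{lem:most-covered} once; these are the same identity $\hat L_s(y_{s:t},y_{t+1}') = G_{s-1}$ read two ways, and both routes land at $t\,\gamthres + \epapx$.
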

\begin{proof}
Fix $t \in [T]$, and consider a draw of $( y_{1:t}, c_{1:t}, \hat L_{1:t+1} ) \sim \MQ_t(L_{1:t,1:\ds})$. 
For $1 \leq s \leq t+1$, define the random variable $L_s^\star \in \BR^{\Sigma^{\leq T-s+1}}$ by $L_s^\star(f) := \Lapx( y_{1:s-1},f)$. This definition immediately gives $L_{s+1}^\st(f) = L_s^\st(y_s, f)$ for all $f$. 

Since we have conditioned on $ \ME_{\ref{lem:most-covered}}$, \cref{eq:qt-exp-bound} gives that, for each $s,r \in [t+1]$ with $r \geq s$,
  \begin{align}
\E_{\MQ_t(L_{1:t,1:\ds})}\left[ \One{y_{1:t} \in \feasSet_t(L_{1:t,1:\ds})} \cdot \min \left\{ 1,  \max_{y_r' \in \Sigma} \left| \hat L_s(y_{s:r-1}, y_r') - \left( \sum_{i=1}^\ds c_{s,i} \cdot L_{s,i}(y_{s:r-1},y_r')\right) \right| \right\}\right] \leq  \gamthres\label{eq:hatl-basis-close}.
  \end{align}

  For each $r,s$ satisfying $1 \leq s \leq r \leq t+1$, define
  \begin{align}
\vep_{s,r}(y_{1:t}) := \max_{y_r' \in \Sigma} \left| L_s^\st(y_{s:r-1},y_r') - \hat L_s(y_{s:r-1},y_r') \right|\nonumber.
  \end{align}
    Let us write $\BI_{y_{1:t}} := \One{y_{1:t} \in \feasSet_t(L_{1:t,1:\ds})}$.  For $r \geq s+1$, we have
  \begin{align}
&     \E_{\MQ_t(L_{1:t,1:\ds})}[\BI_{y_{1:t}} \cdot \min \{ 1, \vep_{s+1,r}(y_{1:t})\}] \nonumber\\
    =  & \E_{\MQ_t(L_{1:t,1:\ds})} \left[\BI_{y_{1:t}} \cdot \min \left\{ 1,  \max_{y_r' \in \Sigma} \left| L_{s+1}^\st(y_{s+1:r-1},y_r') - \hat L_{s+1}(y_{s+1:r-1},y_r') \right|\right\}\right]\nonumber\\
    =  & \E_{\MQ_t(L_{1:t,1:\ds})} \left[\BI_{y_{1:t}} \cdot \min\left\{ 1,  \max_{y_r' \in \Sigma} \left| L_{s}^\st(y_{s:r-1},y_r') - \hat L_{s+1}(y_{s+1:r-1},y_r') \right|\right\}\right]\nonumber\\
    \leq & \E_{\MQ_t(L_{1:t,1:\ds})} \left[\BI_{y_{1:t}} \cdot \min \left\{ 1,  \max_{y_r' \in \Sigma} \left| L_s^\st(y_{s:r-1},y_r')  - \hat L_s(y_{s:r-1},y_r')  \right|\right\}\right]  \nonumber\\
    & + \E_{\MQ_t(L_{1:t,1:\ds})}\left[\BI_{y_{1:t}} \cdot  \min \left\{ 1,  \max_{y_r' \in \Sigma} \left| \hat L_s(y_{s:r-1},y_r')  -  \sum_{i=1}^\ds c_{s,i} \cdot L_{s,i}(y_{s:r-1},y_r')  \right|\right\}\right]\nonumber\\
             \leq & \E_{\MQ_t(L_{1:t,1:\ds})}[\BI_{y_{1:t}} \cdot  \min\left\{ 1,  \vep_{s,r}(y_{1:t})\right\}] +  \gamthres\label{eq:vep-decrease}
  \end{align}
  where the final inequality uses \cref{eq:hatl-basis-close} and the definition of $\vep_{s,r}(y_{1:t})$. 
  Using the fact that $\sum_{i=1}^\ds c_{1,i} \cdot L_{1,i}(\cdot) = L_1^\st(\cdot) = \Lapx(\cdot)$, %
  a consequence of \cref{eq:hatl-basis-close} with $s=1$ is that for any $r \in [t+1]$, 
  \begin{align}
\E_{\MQ_t(L_{1:t,1:\ds})} \left[\BI_{y_{1:t}} \cdot \min\left\{ 1, \max_{y_r' \in \Sigma} \left|  \hat L_1(y_{1:r-1},y_r') -  L_1^\st(y_{1:r-1},y_r') \right|\right\} \right] \leq  \gamthres\label{eq:base-case}. 
  \end{align}

  Using \cref{eq:vep-decrease} for $r = t+1$ and $s \in \{1, 2, \ldots, t\}$ gives
  \begin{align}
&      \E_{\MQ_t(L_{1:t,1:\ds})}\left[\One{y_{1:t} \in \feasSet_t(L_{1:t, 1:\ds})} \cdot \min\left\{1, \max_{y_{t+1}' \in \Sigma} \left| L_{t+1}^\st(y_{t+1}') - \hat L_{t+1}(y_{t+1}') \right|\right\}\right]
    \nonumber\\
    = &  \E_{y_{1:t+1} \sim \model} [\BI_{y_{1:t}} \cdot \min\left\{ 1, \vep_{t+1, t+1}(y_{1:t})\right\}]\nonumber\\
    \leq & (t-1) \cdot \gamthres + \E_{\MQ_t(L_{1:t,1:\ds})} \left[ \BI_{y_{1:t}} \cdot \min\{ 1,  \vep_{1,r}(y_{1:t})\} \right] \leq t \cdot \gamthres,\nonumber
  \end{align}
  where the final inequality uses \cref{eq:base-case}. The conclusion of the lemma statement follows by noting that $L_{t+1}^\st(y_{t+1}') = \Lapx( y_{1:t},y_{t+1}')$ satisfies
  \begin{align}
  \E_{\MQ_t(L_{1:t,1:\ds})}\left[ \max_{y_{t+1}' \in \Sigma} \left| \Lapx( y_{1:t},y_{t+1}') - \logit( y_{1:t}\circ y_{t+1}') \right| \right] \leq \epapx\label{eq:use-apx-oracle},
  \end{align}
  since we have assumed $\Ologit$ (from which $\Lapx(y_{1:t}, y_{t+1}')$ is derived) is an $\epapx$-approximate logit oracle (\cref{def:approx-logit-oracle}).
\end{proof}

The next lemma establishes the straightforward fact that if some procedure $G_t : \Sigma^{t-1} \to \Delta(\Sigma)$ (e.g., the single-step sampling procedure of \cref{alg:learning-approx}) approximates the true next-token distribution $\model(y_t = \cdot \mid y_{1:t-1})$ at each step $t$, then iterating this step approximates $\model$ in total variation distance.
\begin{lemma}
  \label{lem:coupling}
  Suppose that for each $t \in [T]$, $G_t : \Sigma^{t-1} \to \Delta(\Sigma)$ is a deterministic mapping which satisfies
  \begin{align}
\E_{y_{1:t-1} \sim \model}\left[ \tvd{G_t(y_{1:t-1})}{\model(y_t = \cdot \mid y_{1:t-1})}\right] \leq \vep_t\nonumber,
  \end{align}
  for some choices of $\vep_t \geq 0$. Consider the distribution $\model_G \in \Delta(\Sigma^T)$ which samples sequentially $y_t \sim G_t(y_{1:t-1})$, for $1 \leq t \leq T$. Then $\tvd{\model}{\model_G} \leq \sum_{t=1}^T \vep_t$. 
\end{lemma}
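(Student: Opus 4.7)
The plan is to construct a sequential coupling between $Y \sim \model$ and $Y' \sim \model_G$ such that the probability of disagreement is bounded by $\sum_{t=1}^{T} \vep_t$; since $\tvd{\model}{\model_G} \leq \Pr[Y \neq Y']$ for any valid coupling of the marginals, this yields the conclusion. The construction is the natural one suggested by the autoregressive form of both distributions: couple the next-token conditionals maximally as long as the two trajectories agree, and let them evolve independently once they have diverged.

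Concretely, for $t = 1$, I would draw $(Y_1, Y_1')$ from a maximal coupling of $\model(y_1 = \cdot)$ and $G_1(\emptyset)$, so that $\Pr[Y_1 \neq Y_1']$ equals the total variation distance between them. For $t \geq 2$, conditionally on $(Y_{1:t-1}, Y_{1:t-1}')$: if the two prefixes agree, draw $(Y_t, Y_t')$ from a maximal coupling of $\model(y_t = \cdot \mid Y_{1:t-1})$ and $G_t(Y_{1:t-1})$; otherwise, draw $Y_t \sim \model(\cdot \mid Y_{1:t-1})$ and $Y_t' \sim G_t(Y_{1:t-1}')$ independently. By construction, the marginal law of $Y$ is $\model$ and the marginal law of $Y'$ is $\model_G$. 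Setting $\tau := \min\{ t \in [T] : Y_t \neq Y_t'\}$ (with $\tau = \infty$ if the two never differ), we have $\{Y \neq Y'\} = \{\tau \leq T\}$, so $\Pr[Y \neq Y'] \leq \sum_{t=1}^{T} \Pr[\tau = t]$. The maximal coupling property yields
\[
\Pr[\tau = t] = \E\!\left[ \mathbf{1}\{Y_{1:t-1} = Y_{1:t-1}'\} \cdot \tvd{\model(y_t = \cdot \mid Y_{1:t-1})}{G_t(Y_{1:t-1})} \right].
\]
Bounding the indicator by $1$ and using that $Y_{1:t-1}$ has the $(t-1)$-step marginal law of $\model$, the hypothesis gives $\Pr[\tau = t] \leq \vep_t$, and summing over $t$ completes the argument.

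I do not anticipate a substantive obstacle here; the statement is essentially the standard chain rule for total variation distance between two autoregressive distributions, and the coupling above is the textbook realization. The one subtle point requiring care is that on the event $\{\tau \geq t\}$ the conditional distribution of $Y_{1:t-1}$ is biased away from its unconditional law under $\model$, so one must drop the indicator \emph{before} taking expectations in order to apply the hypothesis as stated.
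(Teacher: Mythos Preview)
Your proposal is correct and essentially identical to the paper's proof: both construct the same maximal-coupling-while-agreeing/independent-after-divergence coupling, then bound $\Pr[Y \neq Y']$ by summing $\E\bigl[\mathbf{1}\{Y_{1:t-1}=Y'_{1:t-1}\}\cdot \tvd{\model(\cdot\mid Y_{1:t-1})}{G_t(Y_{1:t-1})}\bigr]$ over $t$ and dropping the indicator to invoke the hypothesis. Your remark about the subtlety of dropping the indicator before taking expectations is exactly the step the paper also makes.
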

\begin{proof}
  We sequentially construct a coupling $(y_{1:T}, y'_{1:T})$ between $\model, \model_G$ as follows (we will have $y_{1:T} \sim \model, y_{1:T}' \sim \model_G$).  Given $y_{1:t-1}', y_{1:t-1}$, we consider two cases:
  \begin{itemize}
  \item If $y_{1:t-1}' = y_{1:t-1}$, then we define $y_t, y_t'$ to be distributed according to an optimal coupling between $G_t(y_{1:t-1}), \model(y_t = \cdot \mid y_{1:t-1})$. In particular, this ensures that
    \begin{align}
\Pr\left( y_t \neq y_t' \mid y_{1:t-1}, y'_{1:t-1}\right) \leq \vep_t\nonumber.
    \end{align}
  \item If $y_{1:t-1}' \neq y_{1:t-1}$, then we sample $y_t \sim \model(y_t = \cdot \mid y_{1:t-1}), y_t' \sim G_t(y_{1:t-1})$ independently. 
  \end{itemize}
  It is clear that this procedure ensures $y_{1:T} \sim \model, y'_{1:T} \sim \model_G$. Moreover,
  \begin{align}
    \Pr(y_{1:T} \neq y'_{1:T}) \leq & \sum_{t=1}^T \Pr(y_t \neq y_t', y_{1:t-1} = y'_{1:t-1}) \nonumber\\
    \leq & \sum_{t=1}^T \E_{y_{1:t-1}, y'_{1:t-1}} \left[ \One{y_{1:t-1} = y'_{1:t-1}} \cdot \Pr(y_t \neq y_t' \mid y_{1:t-1}, y'_{1:t-1}) \right]\leq \sum_{t=1}^T \vep_t\nonumber,
  \end{align}
  as desired.
\end{proof}

The below lemma characterizes the queries made by \cref{alg:learning-approx} to the sampling oracle $\Ologit$. 
\begin{lemma}
  \label{lem:queries}
\cref{alg:learning-approx} can be implemented using $\tilde O \left(K^4T|\Sigma| \log(1/\delta)/\eta^2 + nT^3K^2|\Sigma|\right)$ queries to $\model$ and $\Ologit$. 
\end{lemma}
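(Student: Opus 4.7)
The plan is to account for every source of queries to $\model$ and $\Ologit$ in \cref{alg:learning-approx}, then sum them up, using the fact that the sets $\hat{\MF}_t$ and $\tilde{\MF}_t$ remain polynomially bounded throughout the execution. Concretely, at every point during the algorithm we have $|\hat{\MF}_s|\le K$ (since each $\hat{\MF}_s$ starts at size $1$ and grows by at most one element per epoch, as noted in the proof of \cref{lem:good-k}), and therefore $\ds = \max_t|\tilde{\MF}_t| \le 3K$.

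First, I would analyze the cost of the \DistSpanner call on \cref{line:dist-spanner-algo}. By \cref{alg:dist-spanner-simple}, each such call draws $m = O((\ds/\eta)^2 \log(\ds/(\delta\eta)))$ samples from $\MP_t$. Each such sample is generated (per \cref{line:define-pt}) by drawing a single trajectory $y_{1:t-1}\sim\model$ and then evaluating $\Lapx(y_{1:t-1}\circ f)$ for all $f\in\tilde{\MF}_t$; by the definition of $\Lapx$ in \cref{eq:lapx-def}, the latter requires at most $|\tilde{\MF}_t|\le \ds$ calls to $\Ologit$, each returning $|\Sigma|$ logit values. Thus a single \DistSpanner invocation at step $t$ costs $O(m\cdot \ds \cdot |\Sigma|) = \tilde O(K^3 |\Sigma|\log(1/\delta)/\eta^2)$ logit values plus $O(m)$ trajectory samples. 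Summing over the $T$ time steps and $K$ epochs yields the first term in the claimed bound, $\tilde O(K^4 T|\Sigma|\log(1/\delta)/\eta^2)$.

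Next, I would handle the remaining queries in each epoch. \cref{line:sample-ytu} draws $n$ trajectories from $\model$, contributing $O(nT)$ queries per epoch. The discrepancy test on \cref{line:test-discrepancy} requires, for every $t\in[T-1]$, every $s,r\in[t+1]$ with $r\ge s$, every $u\in[n]$, and every $i\in[\ds]$, the value $L_{s,i}(y_{s:r-1,u}\circ y_r')$ for all $y_r'\in\Sigma$, which can be obtained from a single call $\Ologit(h_{s,i}\circ y_{s:r-1,u})$ (returning $|\Sigma|$ values). The total over one epoch is $O(T\cdot T^2\cdot n\cdot \ds) = O(nT^3 K)$ calls, each returning $|\Sigma|$ logit values, giving $O(nT^3 K|\Sigma|)$ values per epoch. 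Summing over $K$ epochs yields $O(nT^3 K^2 |\Sigma|)$, matching the second term. Finally, \cref{line:final-logits} makes $T\ds|\Sigma|\le TK|\Sigma|$ additional queries, which is absorbed into the leading terms.

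There is no substantive obstacle here; the only subtlety worth flagging is the convention that a single $\Ologit$ call returns an entire next-token logit vector in $\BR^{|\Sigma|}$, so per-call counts must be multiplied by $|\Sigma|$ when reporting the total number of scalar logit values. Once this bookkeeping is fixed, combining the three contributions above (together with the trajectory samples, which are dominated by the $\Ologit$ counts since any trajectory sample is realizable with $T$ queries to $\Ologit$) immediately gives $\tilde O(K^4 T|\Sigma|\log(1/\delta)/\eta^2 + nT^3K^2|\Sigma|)$ as claimed.
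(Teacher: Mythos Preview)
Your approach is essentially identical to the paper's: both enumerate the query sources (\DistSpanner calls, trajectory draws on \cref{line:sample-ytu}, the discrepancy test on \cref{line:test-discrepancy}, and \cref{line:final-logits}) and bound each separately.

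There is one concrete slip. You claim $\ds=\max_t|\tilde\MF_t|\le 3K$, but recall $\tilde\MF_t=\hat\MF_t\cup(\Sigma\circ\hat\MF_{t+1})$, so $|\tilde\MF_t|\le K+|\Sigma|\,K=O(K|\Sigma|)$, not $O(K)$. The paper uses exactly this $O(K|\Sigma|)$ bound for $|\tilde\MF_t|$ (while taking $m=O((K/\eta)^2\log(K/(\delta\eta)))$) to obtain the $|\Sigma|$ factor in the first term. Your accounting instead recovers the $|\Sigma|$ factor by counting \emph{scalar} logit values rather than oracle calls; that convention is not what the lemma statement means by ``queries to $\Ologit$'', and it only happens to offset your underestimate of $\ds$. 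If you fix the convention (count oracle calls) and correct $\ds$ to $O(K|\Sigma|)$, the \DistSpanner term matches the paper's bookkeeping directly.
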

\begin{proof}
  Queries to the sampling oracle $\model$ and the logit oracle $\Ologit$ are made at the following locations in \cref{alg:learning-approx}:
  \begin{itemize}
  \item First, with each call to $\DistSpanner$ in \cref{line:dist-spanner-algo}, we draw $m = O((K/\eta)^2 \cdot \log(K/(\delta \eta)))$ histories $h_1, \ldots, h_m \sim \model_{\leq t}$, and compute $\Lapx(h_i \circ f)$ using $\Ologit$ (which requires 1 query per future, in order to do mean-centering of the logits) for each $i \in [m]$ and $f \in \tilde \MF_t$. Overall, since at most a single element is added to the support of the sets $\hat\MF_t$ each iteration, we have that $|\tilde \MF_t| \leq O(K|\Sigma|)$. Thus, this step makes a total of $\tilde O(K^4 |\Sigma| T \log(1/\delta) / \eta^2)$ calls to $\Ologit$ and $\model$.
  \item Next, at each of the (at most) $K$ times \cref{line:test-discrepancy} is called, we need to compute the elements $\hat L_{t,s,u}(y_{s:r-1,u}, y_r'), L_{s,i}(y_{s:r-1,u}, y_r')$ for all $s,r \in [t+1], u \in [n], i \in [d]$, which overall requires $O(nT^2|\Sigma|\ds)$ calls to $\Ologit$. Using that $\ds \leq O(K)$, overall this results in $O(nT^3K^2|\Sigma|)$ queries to $\Ologit$.
  \item \cref{line:sample-ytu} requires a total of $O(TnK)$ queries to a sampling oracle for $\model$. 
  \item \cref{line:final-logits} requires a total of $O(T|\Sigma| \ds) \leq O(T|\Sigma|K)$ queries to $\Ologit$.
  \end{itemize}
  Altogether, the number of queries made to $\model, \Ologit$ is bounded above by
  \begin{align}
\tilde O \left( K^4|\Sigma| T \log(1/\delta)/\eta^2 + nT^3K^2|\Sigma| + TnK\right) \leq & \tilde O \left(  K^4T|\Sigma| \log(1/\delta)/\eta^2 + nT^3K^2|\Sigma|\right)\nonumber.
  \end{align}
\end{proof}

The below theorem states our main learning guarantee under the assumption that we are given an $\epapx$-approximate oracle $\Ologit$ to $\model$, as well as the ability to sample from $\model$ directly. The theorem shows that on the event $\Eoracle$ that the execution trace of $\Ologit$ is approximately low-rank and bounded (\cref{def:eoracle}), then \cref{alg:learning-approx} and \cref{alg:sampling-approx} together yield a distribution $\hat \model$ which approximates $\model$ in total variation distance.
\begin{theorem}
  \label{thm:infty-oracle-main}
  We suppose that we are given an oracle $\Ologit$ which is an $\epapx$-approximate oracle (\cref{def:approx-logit-oracle}) to a distribution $\model \in \Delta(\Sigma^T)$. Fix $\delta, \vep \in (0,1)$ satisfying $\vep \geq \Omega\left(\epapx \cdot K T^2 d^{1/2} \log^{1/2}(K\alpha/\epapx)\right)$. Then there is some event $\ME$ which occurs with probability $1-\delta$ so that under $\ME \cap \Eoracle$, \cref{alg:learning-approx} outputs data $(L_{1:T,1:d}, \hat\MF_{1:T}, \tilde \MF_{1:T})$ which, when passed to \cref{alg:sampling-approx}, lead the algorithm to produce samples from some distribution $\hat \model$ satisfying $\tvd{\model}{\hat \model} \leq \vep$. 
  Moreover, the number of queries to $\Ologit, \model$ made by \cref{alg:learning-approx} is bounded above by $N = \tilde O \left( \frac{T^{13} d^4 |\Sigma| \poly\log(\alpha/\delta)}{\vep^4} \right)$, %
  and the algorithm runs in $\poly(N)$ time. 
\end{theorem}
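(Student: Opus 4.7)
The plan is to glue together the supporting lemmas into an end-to-end bound. First I would define the good event $\ME := \Espanner \cap \ME_{\ref{lem:good-k}} \cap \ME_{\ref{lem:most-covered}}$ and, by a union bound over \cref{lem:espanner-prob,lem:good-k,lem:most-covered}, note that $\Pr(\ME) \geq 1-\delta$. Assuming $\ME \cap \Eoracle$, \cref{lem:good-k} (applicable under our parameter settings) supplies an epoch $k^\star \in [K]$ at which the test on \cref{line:test-discrepancy} fails for every $t \in [T]$, so the learning algorithm returns the data $(L_{1:T,1:\ds}, \hat\MF_{1:T}, \tilde\MF_{1:T})$ constructed in that epoch.

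The bridge between the two algorithms is that \cref{alg:sampling-approx} solves \emph{the same} program \cref{eq:hatl-spanner,eq:cnorm-bound,eq:hatl-next} as the inner loop of \cref{alg:learning-approx}; the only difference is whether $y_{1:t}$ comes from $\model$ (as in the analysis of the learning algorithm) or from the sampler's own previously generated tokens. Defining $G_{t+1}(y_{1:t})$ to be $\softmax$ of $\hat L_{t,t+1}$ restricted to $\Sigma$ (with an arbitrary default when the LP is infeasible), the joint law of $(y_{1:t}, c_{1:t}, \hat L_{1:t+1})$ under $y_{1:t} \sim \model$ is exactly $\MQ_t(L_{1:t,1:\ds})$ from \cref{def:qt}. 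I would then invoke \cref{lem:sampling-accuracy} at epoch $k^\star$, combine it with \cref{lem:tv-logit-bound} applied to the single-token coordinates of $\hat L_{t,t+1}$, and incorporate the $\eta t$ feasibility-failure probability from \cref{lem:feasibility} (which uses $\Espanner$) to conclude
\begin{align}
\E_{y_{1:t}\sim\model}\!\left[\tvd{G_{t+1}(y_{1:t})}{\model(\cdot\mid y_{1:t})}\right] \leq \eta\, t + t\,\gamthres + \epapx \nonumber
\end{align}
for every $t \geq 1$; the base case $G_1$ is immediate from $L_{1,1}(\cdot) = \Lapx(\cdot)$ being $\epapx$-close to $\logit$ in $\ell_\infty$.

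Applying \cref{lem:coupling} with these per-step mappings telescopes to $\tvd{\model}{\hat\model} \leq O(T^2\eta + T^2\gamthres + T\epapx)$. Plugging in the parameter settings of \cref{sec:parameter-settings}: $T^2\eta = \vep/(3n) \leq \vep/3$, $T\epapx \leq \vep/3$ is immediate, and $T^2\gamthres = O(KT^2\epapx\sqrt{d\log(K\alpha/\epapx)}) \leq \vep/3$ by the hypothesized lower bound on $\vep$, so the resulting total variation is at most $\vep$. The query-complexity claim follows from \cref{lem:queries} after substituting the chosen values of $K, n, \eta$, and polynomial runtime is immediate because \DistSpanner runs in polynomial time and each LP has $\poly(K, T, |\Sigma|, \ds)$ variables and constraints. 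The step I expect to be the main obstacle is precisely this distribution-matching between the sampler's trajectory and the analysis distribution $\MQ_t$; it works cleanly only because both algorithms execute the same LP formulation, which is what allows \cref{lem:coupling} (rather than a more delicate recursive argument) to close the loop.
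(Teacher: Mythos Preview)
Your proposal is correct and follows essentially the same route as the paper: define $\ME = \Espanner \cap \ME_{\ref{lem:good-k}} \cap \ME_{\ref{lem:most-covered}}$, locate the good epoch via \cref{lem:good-k}, combine \cref{lem:sampling-accuracy}, \cref{lem:tv-logit-bound}, and \cref{lem:feasibility} into the per-step TV bound $\eta t + t\gamthres + \epapx$, telescope with \cref{lem:coupling}, and verify the parameter inequalities; the query bound is likewise obtained by substituting into \cref{lem:queries}. Your identification of the ``same LP'' observation as the crux that makes \cref{lem:coupling} applicable matches the paper's reasoning exactly.
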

We remark that the guarantee of \cref{thm:infty-oracle-main} holds unchanged if the oracle $\Ologit$ satisfies the following weaker approximation condition:
\begin{align}
  \E_{y_{1:t} \sim \model}\left[  \left\| \projOne \Ologit(y_{1:t}) - \logit(y_{t+1} = \cdot \mid y_{1:t}) \right\|_\infty \right] \leq \epapx\label{eq:weaker-ologit-oracle}.
\end{align}
Indeed, the only point at which we need to use that $\Ologit$ is an $\epapx$-approximate oracle is in the proof of \cref{lem:sampling-accuracy}, where we use this condition to establish \cref{eq:use-apx-oracle}, which is evidently also a consequence of \cref{eq:weaker-ologit-oracle}. 
\begin{proof}[Proof of \cref{thm:infty-oracle-main}]
  By \cref{lem:good-k}, under the event $\Espanner \cap \Eoracle \cap \ME_{\ref{lem:good-k}}$, there is some  epoch $k$ for which the \textbf{if} statement on \cref{line:test-discrepancy} evaluates to False for all $t \in [T]$; the distributional spanners constructed in that epoch, which we denote by $L_{1:T, 1:\ds}$, are the ones returned by \cref{alg:learning-approx}.
  By \cref{lem:sampling-accuracy}, it follows that, under  the event $\Eoracle \cap \Espanner \cap \ME_{\ref{lem:good-k}} \cap \ME_{\ref{lem:most-covered}}$, for all $t \in [T]$, we have
  \begin{align}
\E_{\MQ_t(L_{1:t,1:\ds})}\left[ \One{y_{1:t} \in \feasSet_t(L_{1:t,1:\ds})} \cdot\min\left\{ 1, \max_{y_{t+1}'\in\Sigma}  \left| \logit(y_{1:t}\circ y_{t+1}') - \hat L_{t+1}(y_{t+1}') \right|\right\} \right] \leq t \cdot \gamthres + \epapx \nonumber.
  \end{align}
  We let $\hat L_{t+1}|_\Sigma \in \BR^\Sigma$ be the vector defined by $\hat L_{t+1}|_\Sigma(y) := \hat L_{t+1}(y)$ for $y \in \Sigma$. 
  Using the fact that %
  $\model(y_{t+1} = \cdot \mid y_{t+1}) = \softmax(\logit(y_{t+1} = \cdot \mid y_{1:t}))$, and the fact that $\tvd{\softmax(L)}{\softmax(L')} \leq \min \{ 1, \| L-L'\|_\infty\}$ for any $L, L' \in \BR^\Sigma$ (\cref{lem:tv-logit-bound}), we see from the above display that 
    \begin{align}
      & \E_{\MQ_t(L_{1:t,1:\ds})}\left[ \One{y_{1:t} \in \feasSet_t(L_{1:t,1:\ds})} \cdot\tvd{\model(y_{t+1} = \cdot \mid y_{1:t})}{\softmax(\hat L_{t+1}|_\Sigma)}  \right] \nonumber\\
      \leq &  t \cdot \gamthres +  \epapx\label{eq:feasible-tvd}.
    \end{align} 
    By \cref{lem:feasibility}, under the event $\Espanner$ (i.e., under which  $L_{t,1:\ds}$ is an $(\eta, \beta)$-distributional spanner for the distribution of $L_t^\st \sim \MP_t$ for each $t$), we have that, for each $t \in [T]$, 
    \begin{align}
\E_{\MQ_t(L_{1:t,1:\ds})} \left[ 1 - \One{y_{1:t} \in \feasSet_t(L_{1:t,1:\ds})} \right] \leq \eta \cdot t \label{eq:not-feasible}.
    \end{align}
    Combining \cref{eq:feasible-tvd,eq:not-feasible}, we see that, under $\Eoracle \cap \Espanner \cap \ME_{\ref{lem:good-k}} \cap \ME_{\ref{lem:most-covered}}$,
    \begin{align}
\E_{\MQ_t(L_{1:t,1:\ds})}\left[ \tvd{\model(y_{t+1} = \cdot \mid y_{1:t})}{\softmax(\hat L_{t+1}|_{\Sigma})}  \right] \leq t \cdot \gamthres +  \epapx + \eta\cdot t. \label{eq:tvd-sampling}
    \end{align}
    
  For $t \in [T]$, define $G_t : \Sigma^{t-1} \to \Delta(\Sigma)$ to be the following deterministic mapping:
  \begin{itemize}
  \item $G_1$ outputs the distribution $\hat P_1 \in \Delta(\Sigma)$ defined by $\hat P_1(y) \propto \softmax(\Lapx(y))$, for $y \in \Sigma$.
  \item For $t \geq 1$ and $y_{1:t} \in \Sigma^{t}$, $G_{t+1}(y_{1:t})$ is the distribution $\hat P_{t+1} \in \Delta(\Sigma)$ defined as follows: we first compute a solution $(c_{1:t}, \hat L_{1:t+1})$ to the program in \cref{eq:hatl-spanner,eq:cnorm-bound,eq:hatl-next} for step $t$, and set $\hat P_{t+1} := \softmax(\hat L_{t+1}|_\Sigma)$. 
  \end{itemize}
  Note that, for each $t$, the marginal  distribution of $(y_{1:t}, \hat L_{1:t+1}) \sim \MQ_t(L_{1:t,1:\ds})$ satisfies that $\softmax(\hat L_{t+1}|_\Sigma) = G_{t+1}(y_{1:t})$ and that    the marginal distribution of $y_{1:t} \sim \MQ_t(L_{1:t,1:\ds})$ is exactly that of $\model$. Thus, we can rewrite \cref{eq:tvd-sampling} as
      \begin{align}
\E_{y_{1:t} \sim \model}\left[ \tvd{\softmax(\model(y_{t+1} = \cdot \mid y_{1:t}))}{G_{t+1}(y_{1:t})}  \right] \leq t \cdot \gamthres +  \epapx + \eta t\label{eq:tvd-sampling-2}
    \end{align}
  
  Note that the distribution over $y_{1:T}$ produced by \cref{alg:sampling-approx} is equivalent to the distribution $\model_G$ introduced in \cref{lem:coupling}, namely that which, for each $t$, samples $y_t \sim G_t(y_{1:t-1})$. 
  Thus, it follows from \cref{lem:coupling} that, under the event $\Eoracle \cap \Espanner \cap \ME_{\ref{lem:good-k}} \cap \ME_{\ref{lem:most-covered}}$, $\tvd{\model}{\model_G} \leq T^2 \cdot (\gamthres + \eta) + T \cdot \epapx \leq \vep$, where the final inequality follows from the fact that $\eta \cdot T^2 \leq \vep/4$ and our assumption that 
  \begin{align}
T^2 \cdot \gamthres + T \cdot \epapx \leq C \cdot \epapx \cdot K T^2 d^{1/2} \log^{1/2}(K\alpha/\epapx) \leq \vep/2\label{eq:epapx-ep}
  \end{align}
  for a sufficiently large constant $C$ as well as the parameter settings in \cref{sec:parameter-settings}. Finally, by \cref{lem:good-k,lem:most-covered,lem:espanner-prob}, $\Pr(\Espanner \cap \ME_{\ref{lem:good-k}} \cap \ME_{\ref{lem:most-covered}}) \geq 1-\delta$ so we may take $\ME = \ME_{\ref{lem:good-k}} \cap \ME_{\ref{lem:most-covered}} \cap \Espanner$.

  Finally, to bound the number of queries to $\model, \Ologit$, we appeal to \cref{lem:queries}, which bounds the number of queries by $\tilde O \left( K^4T|\Sigma| \log(1/\delta)/\eta^2 + nT^3K^2|\Sigma|\right)$. To simplify this expression, we note that we may take $\epapx$ as large as possible so that the final inequality in \cref{eq:epapx-ep} holds. Doing so ensures that $\gamthres = \Theta(\vep / T^2)$. Then we may compute
  \begin{align}\label{eq:sample-ub}
    nT^3 K^2 |\Sigma| \leq  &   K^4 T |\Sigma| \log(1/\delta)/\eta^2 \leq \tilde O\left( T^5 d^4 |\Sigma| \poly\log(\alpha/\delta) \cdot \frac{T^{8}}{\vep^4}\right)\nonumber\\
    \leq &  \tilde O \left( \frac{T^{13} d^4 |\Sigma| \poly\log(\alpha/\delta)}{\vep^4} \right).
  \end{align}
    \end{proof}

\subsection{Proof of main theorem: learning an approximately low-logit rank model}
\label{sec:avg-apx-proof}
In this section, we extend the guarantee of \cref{thm:infty-oracle-main} to the setting where we only assume that $\model$ is close to a low-rank model in the sense of \cref{def:avg-closeness-true}, i.e., the \emph{average} distance between elements of their logit matrices is at most $\epavg$ for some sufficiently small $\epavg$. To establish this fact, we first prove that the queries that \cref{alg:learning-approx} makes to the logit oracle $\Ologit$ may be coupled to a set of polynomially many draws of samples from $\model$. 
\begin{lemma}
  \label{lem:coupling-indep-draws}
  It is possible to couple the execution of \cref{alg:learning-approx} with the draw of
  \begin{align}
    N := O \left( Kn + KT (K/\eta)^2 \log(K/(\eta \delta))\right)\nonumber
  \end{align}
  i.i.d.~samples $y\^1_{1:T}, \ldots, y\^N_{1:T} \sim \model$ so that each query to $\Ologit$ in \cref{alg:learning-approx}, is of the form $\Ologit(y\^j_{1:s-1}\circ  y_s'\circ y\^i_{s+1:t-1})$ for some $i,j \in [N]$, $0 \leq s < t \leq T$ and $y_s' \in \Sigma$. 
\end{lemma}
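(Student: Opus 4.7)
The plan is to fix $N$ i.i.d.~samples $y\^1_{1:T},\ldots,y\^N_{1:T}\sim\model$ up front, couple all of the algorithm's internal $\model$-samples to these draws, and then verify by induction on the epoch that every query to $\Ologit$ admits the claimed decomposition.

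The sample count is immediate: internal sampling occurs in three places, namely \cref{line:initialize-ft} (contributing $T$ samples for initialization), \cref{line:dist-spanner-algo} (each invocation of \DistSpanner draws $m=O((K/\eta)^2\log(K/(\eta\delta)))$ histories, over at most $KT$ invocations), and \cref{line:sample-ytu} ($n$ samples per epoch, for at most $K$ epochs). Summing and assigning disjoint blocks of the $N$ i.i.d.~draws to supply each source of randomness yields the claimed bound on $N$.

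The heart of the argument is a structural invariant maintained by induction over the execution: at every point in time, each spanner history $h_{s,i}\in\Sigma^{s-1}$ returned by \cref{line:dist-spanner-algo} equals a prefix $y\^j_{1:s-1}$ of some i.i.d.~sample, and each $f\in\hat\MF_s$ is either a single token $y\^j_s$ (from \cref{line:initialize-ft}) or of the form $y\^u_{s:r-1}\circ y_r'$ for some sample index $u\in[N]$, some $r\in[s,T]$, and some $y_r'\in\Sigma$. The base case is immediate. For the inductive step, the only modification to $\hat\MF_s$ happens on \cref{line:add-to-qs}, which by construction appends exactly an element of the second form, built from the $y_{1:T,u}$ samples of the current epoch; and \DistSpanner samples its histories from $\model_{1:s-1}$, yielding prefixes of the assigned draws. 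Since $\tilde\MF_s=\hat\MF_s\cup(\Sigma\circ\hat\MF_{s+1})$, every $f\in\tilde\MF_s$ either lies in $\hat\MF_s$ or is an arbitrary token $y\in\Sigma$ followed by an element of $\hat\MF_{s+1}$ of the above form.

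Given the invariant, I enumerate the sites where $\Ologit$ is queried---within \DistSpanner, in \cref{line:test-discrepancy} (including the extension defined on \cref{line:extend-vecs}, which unfolds into an $\Ologit$-query on $h_{s-1,i}\circ y_{s-1:r-1,u}$), and in \cref{line:final-logits}---and in each case exhibit the decomposition explicitly. A generic query takes the form $\Ologit(h_{t',i}\circ g)$ where $h_{t',i}=y\^j_{1:t'-1}$ and $g$ is either empty, a prefix of a subsequence $y\^u_{t':\cdot}$, or a leading arbitrary token $y\in\Sigma$ followed by such a prefix. In each subcase I choose $s$ to be the position of the first token of the future part (or $t'-1$ with the subsequence $y\^i_{s+1:t-1}$ empty if the future is empty) and take $y_s'$ to be either the arbitrary token $y$ or the first token of the subsequence, leaving the remaining tail of $y\^u$ as $y\^i_{s+1:t-1}$. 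The only step requiring genuine care is the case where $f\in\tilde\MF_{t'}$ has the form $y\circ f'$ with $f'\in\hat\MF_{t'+1}$ added in an earlier epoch: the prefix, the one arbitrary token, and the subsequence from a \emph{different} sample then all appear simultaneously, which is precisely why the lemma's conclusion takes the shape it does.
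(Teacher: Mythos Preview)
Your proposal is correct and follows essentially the same approach as the paper: both define the coupling by assigning blocks of the i.i.d.\ draws to the three sources of $\model$-randomness, establish the structural form of elements of $\hat\MF_s$ (and hence $\tilde\MF_s$) by induction over epochs, and then verify the claimed decomposition at each query site. The only trivial slip is that \cref{line:initialize-ft} uses a single sample $y_{1:T}$ rather than $T$ samples, which does not affect the $O$-bound.
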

\begin{proof}
  Let us draw i.i.d.~samples $y\^i_{1:T} \sim \model$, $i \in [N]$. We couple the execution of \cref{alg:learning-approx} to the draw of these samples, as follows:
  \begin{itemize}
    \item We use $1$ of the samples $y\^i_{1:T}$ to implement the draw of $y_{1:T} \sim \model$ on \cref{line:initialize-ft}. 
  \item We use $Kn$ of the samples $y\^i_{1:T}$ to implement the draws of $y_{1:T,u} \sim \model$ on \cref{line:sample-ytu} for each epoch $k$ and each $u \in [n]$.
  \item In each call to \DistSpanner (\cref{alg:dist-spanner-simple}) on \cref{line:dist-spanner-algo}, we use a fresh set of $O((\ds/\eta)^2 \log(\ds/(\eta \delta))) \leq O((K/\eta)^2 \log(K/(\eta\delta)))$ of the samples $y\^i_{1:T}$ to implement \cref{line:draw-spanner-sample} of \cref{alg:dist-spanner-simple}. In particular, if we are calling \DistSpanner for the distribution $\MP_t \in \Delta(\BR^{\tilde \MF_t})$, then for each sample $y\^i_{1:T}$, the corresponding vector used in \cref{alg:dist-spanner-simple} is $\Lapx( y\^i_{1:t-1} \circ \tilde \MF_t)$. 
  \end{itemize}
  Since the samples $y\^i_{1:T}$ are drawn i.i.d.~it is straightforward to see that the above yields a coupling between the execution of \cref{alg:learning-approx} and the draws of $y\^i_{1:T}$.

Before proceeding, we  need the following claim:
  \begin{claim}
    \label{clm:supp-tildeqt}
At all epochs $k$ in \cref{alg:learning-approx}, for each $t \in [T]$, each element in $\tilde \MF_t$ may be written as $(y_t'\circ y_{t+1:r-1}\^i\circ y_r')$ for some $r \geq t$, some $i \in [N]$, and some $y_t', y_r' \in \Sigma$.
\end{claim}
\begin{proof}[Proof of \cref{clm:supp-tildeqt}]
The only sequences ever added to $\hat \MF_t$ (on \cref{line:test-discrepancy}) are of the form $y_{t:r-1,u}\circ y_r'$, which (by our coupling) may be written as $y\^i_{t:r-1}\circ y_r'$, for some $y_r' \in \Sigma$, $i \in [N]$, and $r \geq t$. The conclusion of the claim follows by the definition of $\tilde \MF_t$ on \cref{line:define-tildeqt}. 
\end{proof}

  Next we verify that the queries to $\Ologit$ in \cref{alg:learning-approx} are of the form claimed in the lemma statement.  In particular, we check this fact for each of the places where \cref{alg:learning-approx} makes queries to $\Ologit$ (recall that the algorithm makes queries to $\Ologit(y_{1:t-1})$ to obtain the value of the mean-centered logit $\Lapx(y_{1:t})$, for any $y_{1:t} \in \Sigma^t$):
  \begin{itemize}
  \item In each call to \DistSpanner on \cref{line:dist-spanner-algo}, per the coupling as discussed above, we need to evaluate $\Lapx( y_{1:t-1}\^i \circ f)$ for each $f \in \tilde \MF_t$. Using \cref{clm:supp-tildeqt}, each such $f \in \tilde \MF_t$ may be written as $f = (y_t'\circ y_{t+1:r-1}\^j\circ y_r')$ for some $j \in [N]$. Thus, the value of $\Lapx( y_{1:t-1}\^i\circ f)$ is computed as $\Lapx(y_{1:t-1}\^i\circ f) = \Lapx(y_{1:t-1}\^i\circ y_t'\circ y_{t+1:r-1}\^j\circ y_r')$. 
  \item In \cref{line:test-discrepancy}, for each $u \in [n],\ s,r \in [t+1]$, letting $f = (y_{s:r-1,u}, y_r')$, we need to evaluate
    \begin{align}
 \hat L_{t,s,u}(f) - \sum_{i=1}^d c_{t,s,u,i} \cdot L_{s,i}(f) = \sum_{i=1}^d c_{t,s-1,u,i} \cdot L_{s-1,i}(y_{s-1,u}\circ f) - \sum_{i=1}^d c_{t,s,u,i} \cdot L_{s,i}(f)\nonumber.
    \end{align}
    Thus, we need to evaluate $L_{s-1,i}(y_{s-1,u}, f), L_{s,i}(f)$ for each $i \in [d]$. Recall the definition of the histories $h_{t,i}$ inducing $L_{t,i}$ in \cref{line:dist-spanner-algo}. Fixing $s,i$, we can write $h_{s-1,i} = y_{1:s-2}\^j$ and $h_{s,i} = y_{1:s-1}\^\ell$ for some $j,\ell \in [N]$ (this holds because \DistSpanner (\cref{alg:dist-spanner-simple}) simply returns a subset of the vectors drawn in \cref{line:draw-spanner-sample}). 
    We can write $y_{s-1:r-1,u} = y_{s-1:r-1}\^k$ for some $k \in [N]$. Then
    \begin{align}
      L_{s-1,i}(y_{s-1,u}\circ  f) =&  \Lapx(y_{1:s-2}\^j\circ y_{s-1,u}\circ y_{s:r-1,u}\circ y_r') = \Lapx(y_{1:s-2}\^j\circ y_{s-1:r-1}\^k\circ y_r')\nonumber\\
      L_{s,i}(f) =& \Lapx(y_{1:s-1}\^\ell\circ y_{s:r-1,u}\circ y_r') = \Lapx(y_{1:s-1}\^\ell\circ y_{s:r-1}\^k\circ y_r')\nonumber, 
    \end{align}
    as desired. 
    \item Finally, it is straightforward to see in a similar manner that the calls to $\Ologit$ on \cref{line:final-logits} are also of the desired form. 
  \end{itemize}
\end{proof}

Finally, we are ready to state and prove our main theorem, which states that we can efficiently learn  a distribution $\model$ whose logit matrix is approximately low-rank ``on average over its entries'', in the sense of \cref{def:avg-closeness-true}.
\begin{theorem}
  \label{thm:approx-main}
  Fix $\vep^\st, \delta^\st \in (0,1)$. 
Suppose that $\model \in \Delta(\Sigma^{T})$ is of $\epavg$-approximate $\alpha$-bounded logit rank $d$ (\cref{def:avg-closeness-true}) for some $\epavg$ satisfying $\vep^\st \geq \tilde \Omega \left( \left( T^{31} d^{9.5} |\Sigma|^4 \poly\log(\alpha/\delta) \cdot \epavg\right)^{1/9} \right)$. %
Let $\Ologit$ be an exact logit oracle for $\model$ (\cref{def:exact-logit-oracle}).  Then for an appropriate choice of parameters, when \cref{alg:learning-approx} is run with the distribution $\model$ and the logit oracle $\Ologit$, its output distribution $\hat \model$ satisfies $\tvd{\hat\model}{\model} \leq \ep^\st$ with probability $1-\delta^\st$. It makes $N = \tilde O \left( \frac{T^{13} d^{4} |\Sigma|^4 \poly\log(\alpha/\delta^\st)}{(\vep^\st)^4} \right)$ queries to $\Ologit$ and runs in $\poly(N)$ time.
\end{theorem}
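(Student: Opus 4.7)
The plan is to reduce \cref{thm:approx-main} to \cref{thm:infty-oracle-main}. Since $\Ologit$ is an exact oracle for $\model$ (\cref{def:exact-logit-oracle}), it is a fortiori an $\epapx$-approximate oracle for any $\epapx \geq 0$, so the only substantive step is to verify, for a suitably small $\epapx$ depending on $\epavg$, that the event $\Eoracle$ of \cref{def:eoracle} holds with high probability. The witnessing low-rank matrices will be the matrices $\logitmatrix\^{s}$ supplied by the assumption that $\model$ has $\epavg$-approximate $\alpha$-bounded logit rank $d$ (\cref{def:avg-closeness-true}); they already have the required rank and $\alpha$-boundedness, so what remains is the per-query entrywise $L^\infty$ closeness required by \cref{def:bounded-logit-oracle}.

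To align the algorithm's queries with the average-case guarantee of \cref{def:avg-closeness-weak}, I invoke \cref{lem:coupling-indep-draws}, which couples the execution of \cref{alg:learning-approx} to the draw of $N$ i.i.d.\ samples $y\^{1}_{1:T}, \ldots, y\^{N}_{1:T} \sim \model$ in such a way that every query to $\Ologit$ has the form $\Ologit(y\^{j}_{1:s-1} \circ y'_s \circ y\^{i}_{s+1:t-1})$ for some $s < t$, $i, j \in [N]$ with $i \neq j$ (inspection of the coupling shows that spanner samples and trajectory samples draw from disjoint index sets), and $y'_s \in \Sigma$. This is precisely the product structure to which \cref{def:avg-closeness-weak} applies: the prefix $y\^{j}_{1:s-1}$ and middle segment $y\^{i}_{s+1:t-1}$ are independently distributed as $\model_{1:s-1}$ and $\model_{s+1:t-1}$, while the inserted token $y'_s$ and the output coordinate $y_t$ are averaged uniformly over $\Sigma$ in the definition. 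Taking the maximum over $y'_s, y_t \in \Sigma$ to obtain the worst-case $L^\infty$ bound demanded by \cref{def:bounded-logit-oracle} costs a factor of $|\Sigma|^2$, so for each fixed tuple $(s, t, i, j)$ the expectation (over $y\^{i}, y\^{j}$) of $\max_{y'_s, y_t} |\projOne \Ologit(\cdot) - \logitmatrix\^{s}(\cdot)|$ is at most $|\Sigma|^2 \cdot \epavg$.

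Markov's inequality together with a union bound over the $O(T^2 N^2)$ possible tuples $(s, t, i, j)$ then shows that with probability at least $1 - \delta^\st/2$, every query satisfies the required deviation bound with $\epapx = \tilde O(T^2 N^2 |\Sigma|^2 \cdot \epavg / \delta^\st)$, and hence $\Eoracle$ holds with this $\epapx$. Invoking \cref{thm:infty-oracle-main} with $\vep = \vep^\st$, $\delta = \delta^\st/2$, and this $\epapx$, and substituting $N = \tilde O(T^{13} d^{3} (\vep^\st)^{-4})$ from the parameter settings in \cref{sec:parameter-settings}, reduces its hypothesis $\vep^\st \gtrsim \epapx \cdot K T^{2} \sqrt{d \log(K \alpha / \epapx)}$ to a polynomial inequality in $\vep^\st$ whose solution yields exactly the stated bound on $\epavg$. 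The query complexity claim then follows from that of \cref{thm:infty-oracle-main} up to the extra $|\Sigma|^{O(1)}$ factors absorbed into the polylogs.

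The hard part is this parameter calculus. Three sources of blowup multiply: the average-case nature of \cref{def:avg-closeness-true} forces a $|\Sigma|^2$ factor when converting it into a per-query $L^\infty$ guarantee; each query depends on a \emph{pair} of independent samples, so the relevant union bound is over $\Theta(T^2 N^2)$ tuples rather than $\Theta(N)$; and $N$ itself is polynomial in $1/\vep^\st$. Their interplay is exactly what produces the ninth-power relationship between $\vep^\st$ and $\epavg$ in the theorem statement. Weakening the hypothesis on $\epavg$ would require either a tighter coupling that reduces the $N^2$ factor toward $N$, or a strengthened closeness definition that directly controls pointwise deviations on the sequences actually queried by the algorithm.
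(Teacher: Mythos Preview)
Your proposal is correct and follows essentially the same approach as the paper: couple the algorithm's oracle queries to $N$ i.i.d.\ samples via \cref{lem:coupling-indep-draws}, convert the average-case guarantee of \cref{def:avg-closeness-true} into a per-query bound by paying $|\Sigma|^2$ for the max over tokens and applying Markov plus a union bound over the $O(T^2N^2)$ tuples, and then invoke \cref{thm:infty-oracle-main}. Your exponent for $d$ in the intermediate $N$ bound should be $d^4$ rather than $d^3$ (matching \cref{eq:sample-ub}), and the $|\Sigma|^{O(1)}$ factors are genuine polynomial overhead rather than polylogarithmic, but these are bookkeeping details that do not affect the structure of the argument.
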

\begin{proof}
  Let us denote the mean-centered logits of $\model$ by $\logit := \logit_{\model}$. We continue on using the notation $\logit(y_{1:t}) = \logit(y_t\mid y_{1:t-1})$. %
  Fix $\vep^\st, \delta^\st \in (0,1)$. Define
  \begin{align}
  \vep = \vep^\st/2 , \qquad  \epapx = \frac{\vep}{C \cdot KT^2 d^{1/2}  \log^{1/2}( K\alpha/\vep)}, \qquad \vep = \vep^\st, \qquad \delta = \delta^\st / 2\nonumber,
  \end{align}
  where $C$ is a sufficiently large constant so as to ensure that the lower bound on $\vep$ in \cref{thm:infty-oracle-main} holds. Consider the execution of \cref{alg:learning-approx} with parameters $\beta, \gamma, \gamthres, K, n, \eta$ as specified in \cref{sec:parameter-settings} given the above values of $\epapx, \vep, \delta$. 
  Let $N := O(Kn + KT(K/\eta)^2 \log(K/(\eta \delta)))$ be the number of samples specified in \cref{lem:coupling-indep-draws}. 
  We consider the coupling of the execution of \cref{alg:learning-approx} to the draws of $y_{1:T}\^1, \ldots, y_{1:T}\^N \sim \model$ as in \cref{lem:coupling-indep-draws}, so that the only logit entries of $\logit$ used to implement $\Ologit$ in the oracle calls made by \cref{alg:learning-approx} are of the form $\Ologit(y_{1:s-1}\^i\circ y_s' \circ y_{s+1:t-1}\^j \circ y_t')$ for some $i,j \in [N]$, $0 \leq s < t \leq T$, and $y_s',y_t' \in \Sigma$.

  For any fixed values of $i,j \in [N]$, $0 \leq s < t \leq T$, and $y_t', y_s' \in \Sigma$, we have from the fact that $\model$ is $\epavg$-approximate $\alpha$-bounded rank $d$ (\cref{def:avg-closeness-weak}) that for any $\Delta > 0$, 
  \begin{align}
\Pr_{y_{1:T}\^i, y_{1:T}\^j \sim \model} \left[ \left| \logit( y_{1:s-1}\^i\circ y_s'\circ y_{s+1:t-1}\^j\circ y_t') - \logitmatrix\^s(y_{1:s-1}\^i\circ y_s',( y_{s+1:t-1}\^j, y_t')) \right| > \epavg \cdot \Delta |\Sigma|^2 \right] \leq 1/\Delta\nonumber,
  \end{align}
  where $\logitmatrix\^s \in \BR^{\Sigma^s \times (\Sigma^{\leq T-s-1} \times \Sigma)}$ is $\alpha$-bounded of rank $d$. 
  Thus, by a union bound, with probability at least $1-\delta^\st/4$, we have %
  \begin{align}
    \left| \logit( y_{1:s-1}\^i\circ y_s'\circ y_{s+1:t-1}\^j\circ y_t') - \logitmatrix\^s(y_{1:s-1}\^i\circ y_s',( y_{s+1:t-1}\^j, y_t')) \right| \leq \epavg \cdot T^2 N^2 |\Sigma|^2 \cdot \frac{4}{\delta^\st} \nonumber\\
    \forall i,j \in [N], \ 0 \leq s < t \leq T, \ y_t', y_s' \in \Sigma.\label{eq:union-coupling}
  \end{align}
  Let $\ME$ be the event that \cref{eq:union-coupling} holds, so that $\Pr(\ME) \geq 1  - \delta^\st/4$. Under the event $\ME$, the execution trace of \cref{alg:learning-approx} with respect to the logit oracle $\Ologit$ is $\epavg \cdot T^2 N^2 |\Sigma|^2 \cdot \frac{4}{\delta^\st}$-approximatly $\alpha$-bounded of rank $d$, i.e., $\ME \subset \Eoracle$. %
    The parameter settings in \cref{sec:parameter-settings} give that $N = \tilde O \left( \frac{\poly\log(\alpha/\delta) \cdot d^4 T^{13} |\Sigma|}{\vep^4} \right)$ (see the computation in \cref{eq:sample-ub}), and thus our assumption on $\epavg$ in the lemma statement gives that
  \begin{align}
\epavg \cdot T^2 N^2 |\Sigma|^2 \cdot \frac{4}{\delta^\st} \leq \frac{\vep}{C \cdot KT^2 d^{1/2} \log^{1/2}(K\alpha/\vep) } = \epapx\nonumber,
  \end{align}
  which means that $\Ologit$ and the error parameter $\epapx$ satisfy the assumptions of \cref{thm:infty-oracle-main} with error parameter $\epapx$ (in particular, recall that we have assumed for simplicity that $\Ologit$ is an exact logit oracle).\footnote{It is evidence that the proof goes through unchanged if instead we assume that $\Ologit$ is a ``sup-norm'' $\epapx$-approximate oracle in the sense that $\| \projOne\Ologit(y_{1:t}) - \logit(y_{t+1}= \cdot \mid y_{1:t}) \|_\infty \leq \epapx$ for all $y_{1:t}$.}  %

  Thus, we have from \cref{thm:infty-oracle-main} that on some event $\ME'$ occurring with probability $1-\delta$, \cref{alg:learning-approx} (given the distribution $ \model$ and the oracle $\Ologit$) produces some distribution $\hat \model$ satisfying $\tvd{\model}{\hat \model} \leq \vep$. Altogether, $\ME \cap \ME'$ occurs with probability at least $1-\delta^\st$, and thus with probability at least $1-\delta^\st$, the output $\hat \model$ of \cref{alg:learning-approx} when given $\model$ and the logit oracle $\Ologit$ satisfies $\tvd{\model}{\hat\model} \leq \ep^\star$. 
  
\end{proof}

\begin{remark}[Approximate logit oracle]
  \label{rmk:approx-logit-oracle}
It is straightforward to see that the guarantee of \cref{thm:approx-main} still holds if $\Ologit$ is only assumed to be an $\epavg$-approximate logit oracle (\cref{def:approx-logit-oracle}), with the loss of only constant factors in the bounds; indeed, we simply lose a constant factor in \cref{eq:union-coupling}.
\end{remark}

One natural follow-up question is whether a similar guarantee to \cref{thm:approx-main} holds if we are only able to make \emph{conditional sampling} queries to $\model$, a weaker model than having logit query access. In particular, a \emph{conditional sampling} oracle $\Osamp$ takes as input a sequence $y_{1:t}$ and outputs a draw $y_{t+1} \sim \model(y_{t+1} = \cdot \mid y_{1:t})$. Below we observe a few conditions under which such an oracle $\Osamp$ can be used to implement an $\ep$-approximate logit query oracle:
\begin{remark}[From conditional sampling oracle to logit oracle]
\label{rmk:conditional-sampling}
Suppose that the mean-centered logits are bounded as $|\logit_\model(y_{1:t})| \leq \lambda$ for all $y_{1:t}$. Then $\model(y_{t+1} \mid y_t) \geq e^{-2\lambda}/|\Sigma|$ for all $y_{1:t+1}$. Thus, by using $O(e^{2\lambda} |\Sigma| \log(|\Sigma|/\delta)/\ep^2)$ queries to a conditional sampling oracle $\Osamp$, we can output an estimate $\hat \logit(y_{1:t+1})$ for any sequence $y_{1:t+1}$ which satisfies $|\hat \logit(y_{1:t+1}) - \logit_\model(y_{1:t+1})| \leq \ep$ with probability $1-\delta$. By scaling $\delta$ down appropriately so that all of the algorithm's queries lead to $\ep$-approximate estimates of the corresponding logits with high probability, we can thus obtain the guarantee of \cref{thm:approx-main} using only conditional sampling queries, with an overhead of $\poly(e^{\lambda}, |\Sigma|, \log(1/\delta), \ep^{-1})$. 
\end{remark}

\begin{remark}[Sampling at high temperatures]
  \label{rmk:high-temp}
Some language model APIs allow one to vary the \emph{temperature} of conditional sampling: in particular, we are allowed to choose some 
\emph{temperature} $\tau > 0$, and can, for any given choice of $y_{1:t}$, receive conditional samples $y_{t+1} \sim \softmax(\tau^{-1} \cdot \logit_\model(y_{t+1} = \cdot \mid y_{1:t}))$. By choosing the temperature $\tau$ to be sufficiently large, we can guarantee that the distribution from which $y_{t+1}$ is sampled has logits bounded by $1$, and thus we can apply the observation in \cref{rmk:conditional-sampling} with $\lambda = 1$; in particular, this ensures only a polynomial blowup in all relevant parameters. 
\end{remark}

\subsection{Implication: learning boolean functions with queries}
\label{sec:km}
We briefly discuss one classical setting where our algorithm gives nontrivial guarantees (and which also helps to elucidate some of the barriers to further strengthening our algorithm's guarantees; see \cref{sec:conclusions}). We consider the setting of learning an unknown function $f : \{0,1\}^n \to [-1,1]$ with respect to the uniform distribution $\MU_n = \Unif(\{0,1\}^n)$. In particular, we consider the setting where we can query $x \in \{0,1\}^n$, and receive $f(x)$ as the response. Our goal is to output some function $g : \{0,1\}^n \to\BR$ satisfying $\E_{x \sim \MU_n}[(f(x) - g(x))^2] \leq \ep$. We use standard notation involving the Fourier analysis of boolean functions, i.e., for $S \subset [n]$ we define $\chi_S(x) := (-1)^{\sum_{i \in S} x_i}$, and we have the Fourier expansion $f(x) = \sum_{S \subset [n]} \hat f(S) \chi_S(x)$ where $\hat f(S) = \E_{x \sim \MU_n}[f(x) \chi_S(x)]$.

A celebrated result in this area is the \emph{Kushilevitz-Mansour} algorithm \cite{kushilevitz1991learning} which shows the following: consider any function $f : \{0,1\}^n \to [-1,1]$ for which there exists some $d$-Fourier sparse function $f'$ (i.e., at most $d$ of the Fourier coefficients of $f'$ are nonzero) satisfying $\E_{\MU_n}[(f(x) - f'(x))^2] \leq \ep$. Then there is a randomized algorithm which uses time and queries $\poly(n, 1/\ep, \log(1/\delta), d)$ and which outputs a function $g$ satisfying $\E_{\MU_n}[(f(x) - g(x))^2] \leq O(\ep)$ with probability at least $1-\delta$.

Our main result (\cref{thm:approx-main}) implies a weaker version of the Kushilevitz-Mansour theorem, in the following manner: 
\begin{corollary}
\label{cor:km-weak}
Suppose that $f : \{0,1\}^n \to [-1,1]$ is $\epavg$-approximated by a $d$-Fourier sparse function $f'$ in $L_2$, as described above. Then \cref{alg:learning-approx} can be used to produce a function $g$ satisfying $\E_{\MU_n}[(f(x) - g(x))^2] \leq O(\ep)$ using time and queries $\poly(n, 1/\ep, 1/\delta, d)$ with probability at least $1-\delta$, assuming that $\ep \geq \poly(\epavg, n, d, \log(1/\delta))$. 
\end{corollary}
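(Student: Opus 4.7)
The plan is to reduce the Boolean function learning problem to the problem of learning a distribution of approximately low logit rank and invoke \cref{thm:approx-main}. I encode the problem as a language model $\model \in \Delta(\{0,1\}^{n+1})$ (so $\Sigma = \{0,1\}$ and $T = n+1$): fix a small absolute constant $c \in (0,1/4]$, draw $y_{1:n} \sim \MU_n$ uniformly, and then sample $y_{n+1}$ so that the mean-centered logit satisfies $\logit_\model(y_{n+1} = 1 \mid y_{1:n}) = c \cdot f(y_{1:n})$. Equivalently, $\model(y_{n+1} = 1 \mid y_{1:n}) = 1/(1+e^{-2c f(y_{1:n})})$. For every step $t < n$, the conditional distribution of $y_{t+1}$ is uniform so the mean-centered logit is identically $0$; at step $n$, a single query to $f$ implements either the logit or the sampling oracle for $\model$.

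The main structural step is to verify that $\model$ has $\epavg'$-approximate $\alpha$-bounded logit rank $d$ in the sense of \cref{def:avg-closeness-true} with $\alpha = O(\sqrt d)$ and $\epavg' = O(c\sqrt{\epavg})$. Let $f' = \sum_{S \in \MS'}\hat f'(S)\chi_S$ with $|\MS'| \leq d$ be the $d$-Fourier-sparse approximant from the corollary hypothesis, and for each $s$ define the candidate matrix $\logitmatrix\^s$ by $\logitmatrix\^s(h,(w,y)) = 0$ whenever $|h|+|w|+1 < n+1$ and $\logitmatrix\^s(h,(w,y)) = (2y-1)\cdot c \cdot f'(h \circ w)$ otherwise (where I write $w$ for the future sequence to avoid collision with $f$). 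Using the product identity $\chi_S(h\circ w) = \chi_{S\cap [1:s]}(h)\cdot \chi_{S \setminus [1:s]}(w)$, the nontrivial block of $\logitmatrix\^s$ factors as a sum of at most $d$ rank-one outer products with row and column factor norms bounded by $O(\sqrt d)$ and $O(c)$ respectively. On the trivial blocks both the true logit and $\logitmatrix\^s$ are $0$, and on the nontrivial block the entrywise error equals $c|f(h\circ w) - f'(h\circ w)|$; its expectation under the uniform product distribution on $(h, w)$ is at most $c\sqrt{\epavg}$ by Cauchy--Schwarz.

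With $c$ chosen as an absolute constant, I apply \cref{thm:approx-main} with target TV error $\ep^\st = \Theta(\ep)$ and failure probability $\delta$; the hypothesis $\ep \geq \poly(\epavg, n, d, \log(1/\delta))$ (with a suitable polynomial exponent) is strong enough to ensure $\epavg' = O(\sqrt{\epavg})$ lies within the admissibility range of the theorem. The conclusion produces, in $\poly(n, d, 1/\ep, \log(1/\delta))$ time and queries to $f$, a distribution $\hat\model$ with $\tvd{\hat\model}{\model} \leq \ep^\st$. I extract $g$ by setting $\hat p(x) := \hat\model(y_{n+1}=1 \mid y_{1:n} = x)$, which is read off from the softmax computed at the final step of \cref{alg:sampling-approx} when its prefix is forced to $x$, and defining $g(x) := \clip{\frac{1}{2c}\log(\hat p(x)/(1-\hat p(x)))}{[-1,1]}$. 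Since $p(x) := \model(y_{n+1}=1 \mid y_{1:n} = x)$ is bounded away from $\{0,1\}$, a triangle inequality exploiting that the $y_{1:n}$-marginal of $\model$ is $\MU_n$ gives $\E_{x \sim \MU_n}[|\hat p(x) - p(x)|] \leq 2 \ep^\st$; combining this with the local Lipschitzness of $\log(\cdot/(1-\cdot))$ around $p$ and a Markov argument on the rare event $\{|\hat p - p| > 1/4\}$ then yields $\E_{x \sim \MU_n}[(g - f)^2] = O(\ep^\st) = O(\ep)$. The main subtlety lies in the structural step: the naive encoding $p(x) = (1 + c f(x))/2$ would make the logit $\mathrm{arctanh}(c f)$, contributing an $O(c^3)$ term to $\epavg'$ that does not shrink with $\epavg$ and would force $c \to 0$---which would conflict with the $1/c$ in the final $L_2$ bound. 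The sigmoid encoding avoids this by making the logit exactly linear in $f$.
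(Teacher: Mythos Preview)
Your proposal is correct and follows essentially the same route as the paper: encode $f$ as the last-token logit of a model on $\{0,1\}^{n+1}$ with uniform prefix, use the Fourier factorization $\chi_S(h\circ w)=\chi_{S\cap[1:s]}(h)\chi_{S\setminus[1:s]}(w)$ to exhibit a rank-$d$ approximation of the logit matrix, invoke \cref{thm:approx-main}, and recover $g$ by inverting the sigmoid at the last step. Your scaling constant $c$ is cosmetic (the paper effectively takes $c=1/2$ via $p=\sigma(f')$); your bound $\epavg'=O(\sqrt{\epavg})$ from Cauchy--Schwarz is the right one for the $L^1$ condition in \cref{def:avg-closeness-weak} (the paper's text is slightly loose here but it is immaterial for the corollary), and your $\alpha=O(\sqrt d)$ is more accurate than the paper's claimed $O(1)$, though either only enters logarithmically.
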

\begin{proof}
Suppose we are  given $f,f'$ as above, we set $\Sigma = \{0,1\}$, $T = n+1$, and define a distribution $\model' \in \Delta(\Sigma^T)$ as follows: the distribution of $x_{1:n+1} \sim \model'$ is uniform on $x_{1:n}$, and given $x_{1:T-1} = x_{1:n}$, we define
\begin{align}
\model'(x_{n+1} = 1 \mid x_{1:n}) &= \sigma(f'(x_{1:n})) := \frac{\exp(f'(x_{1:n}))}{1 + \exp(f'(x_{1:n}))} \label{eq:mprime-last},
\end{align}
where $\sigma$ denotes the sigmoid function. 
We claim that $\model'$ is of logit rank $d$: indeed, we will in fact show it is an ISAN (\cref{def:isan}). Let us write $f'(x) = \sum_{i=1}^d \hat f'(S_i) \cdot \chi_{S_i}(x)$, for some subsets $S_1, \ldots, S_d \subset [n]$. Then we set:
\begin{align}
\mu = \mathbf{1} \in \BR^d, \qquad \BA_{t,0} = I_d, \qquad \BA_{t,1} = \mathrm{diag}(((-1)^{\mathbf{1}({t \in S_i})})_{i \in [d]}) \qquad \forall t \in [n],\nonumber\\
 \qquad \BB_{t,0} = \mathbf{0} \ \ \forall t \in [n], \quad \BB_{n+1,0} = \mathbf{0}, \quad \BB_{n+1,1} = (\hat f'(S_i))_{i \in [d]} \nonumber.
\end{align}
(Above $\BB_{t,0}, \BB_{t,1}$ denote the first and second rows of $\BB_t$.) Indeed, the above definition ensures that for any $x \in \{0,1\}^n$,
\begin{align}
\BA_{n,x_n} \cdots \BA_{1,x_1} \mu = (\chi_{S_i}(x))_{i \in [d]}, \nonumber
\end{align}
which implies that the last-token distribution of this ISAN coincides with that of $\model'$ in \cref{eq:mprime-last}. Moreover, the assumption that $\E[(f-f')^2] \leq \epavg$ implies that $\model$ has $O(\epavg)$-approximate $O(1)$-bounded logit rank $d$, per \cref{def:avg-closeness-true}. Thus, \cref{thm:approx-main} together with the condition $\ep \geq \poly(\epavg, n, d, \log(1/\delta))$ implies that, using polynomial time and queries, we can learn a model $\hat\model$ satisfying $\tvd{\hat\model}{\model'} \leq \ep$ with probability at least $1-\delta$.

Moreover, it is straightforward to check that $\tvd{\hat\model}{\model'} \leq \ep$ implies 
\begin{align}
    \E_{x_{1:n} \sim \MU_n}[\tvd{\hat \model(x_{n+1} = \cdot \mid x_{1:n})}{\model'(x_{n+1} = \cdot \mid x_{1:n})}] \leq O(n\ep),\nonumber
\end{align} 
which in turn implies that, letting $g(x_{1:n}) := \min \{1, \max\{-1,\sigma^{-1}(\hat \model(x_{n+1} = 1 \mid x_{1:n}))\}\}$, we have $\E_{\MU_n}[(f(x) - g(x))^2] \leq O(n\ep)$. Thus, applying \cref{thm:approx-main} to learn $\hat \model$ to accuracy $O(\ep/n)$ yields a function $g$ satisfying $\E_{\MU_n}[(f(x) - g(x))^2] \leq \ep$.
\end{proof}

\section{Conclusions and Future Directions} \label{sec:conclusions}

Modern language models are engineering marvels surpassing all previous expectations while also being largely a mystery from a theoretical perspective. 
Given their increasing prevalence in a variety of applications ranging from the mundane to the critical, it is important to develop systematic approaches for understanding them. 
We believe that perspectives from learning theory and theoretical computer science at large are instrumental in this endeavor, and we see our work as an initial step in this direction.
As such, we hope that our work will inspire further research exploring the theoretical foundations of large language models. 
Some promising directions for future work are listed below:

\paragraph{Improved polynomial dependence.} As discussed above, one gap between the assumptions needed for \cref{thm:approx-main} and the empirical behavior we observe (e.g., in \cref{fig:approx}) is that \cref{thm:approx-main} requires $\epavg < d^{-9.5}$ to obtain nonvacuous guarantees, while empirically we observe $\epavg \approx d^{-0.1}$. Can we improve our upper bound? We are not even sure if it is possible to avoid polynomial dependence on $d$ altogether:
\begin{problem}
    Is there an algorithm for learning low logit rank models in the setting of \cref{thm:approx-main} that only requires $\ep^\st \geq \Omega(\poly(\epavg, T, |\Sigma|, \alpha,1/\delta))$, i.e., without any polynomial dependence on the rank $d$?
\end{problem}

\paragraph{Learning from conditional samples.}
Some proprietary LLM APIs do not give full logit query access, but still allow conditional sampling access to $\model$ (see \cref{rmk:conditional-sampling,rmk:high-temp}). We ask if it is possible to obtain our results only under this weaker access (without suffering exponential dependence on the value of the logits, as in \cref{rmk:conditional-sampling}):
\begin{problem}
    \label{prob:conditional-sampling}
Can we learn (approximately) low-logit rank models to error $\ep$ in $\poly(T,d,|\Sigma|, \alpha, 1/\delta, 1/\ep)$ time using only a conditional sampling oracle?
\end{problem}
A positive answer to \cref{prob:conditional-sampling} would have a surprising implication, using the connection to learning boolean functions in \cref{sec:km}. In particular, consider any function $f : \{0,1\}^n \to \{-1,1\}$ of the form $f(x) = \mathrm{sign}(f'(x) + a)$, where $f'(x) = \sum_{i=1}^d a_i \cdot \chi_{S_i}(x)$, for some integers $a,a_i$ and sets $S_i \subset [n]$. Using the connection with low logit-rank models discussed in the proof of \cref{cor:km-weak}, we can simulate queries to $f(x)$ by using a conditional sampling oracle for a model $\model'$ defined in terms of $f'$ as in \cref{sec:km}, except with its last-layer logits scaled up by $\poly(n)$, so that with overwhelming probability $\model'(x_{n+1} = \cdot \mid x_{1:n})$ puts all its mass on $f(x_{1:n})$. Thus, a positive answer to \cref{prob:conditional-sampling} (with exponentially small approximation error) would allow us to efficiently learn \emph{threshold functions of sparse polynomials} using ony queries, which appears to be an open question in the literature. 

\paragraph{Latent variable models \& interpretability.} Latent variable models have long served as a powerful frameworks for interpretablity in data analysis both in the natural and social sciences. Starting  from the latent variable perspective on language models developed in this work, it is an interesting direction to explore interpretability of large language models through this lens. 
\paragraph{Perspectives from control theory.} Given the rich theory of control and linear dynamical systems, it is natural to ask whether the low logit rank (and the related ISAN model of \cref{def:isan}) perspective could be used to develop control-theoretic approaches to post-training and ensuring safety of large language models.

\paragraph{Empirical implications for sampling.} Another interesting direction is to explore the implications of low logit rank structure for sampling, for example to hide query access patterns or improving inference time computation.

\newpage

\section*{Acknoweldgements}
AL was supported by a Miller Research Fellowship.

\bibliographystyle{alpha}
\bibliography{bibliography.bib}

\end{document}